\newcommand{\update}[1]{{\color{black}#1}}
\newcommand*\dbar[1]{\overline{\overline{\lower0.2ex\hbox{$#1$}}}}
\colorlet{darkgreen}{green!50!black}
\newcommand{\harrow}[1]{\mathstrut\mkern2.5mu#1\mkern-11mu\raise1.6ex%
  \hbox{$\scriptscriptstyle\rightharpoonup$}}
\newcommand{\CRupdate}[1]{{\color{black}#1}}
\newcommand{\Ind}{\mathds{1}}
\newcommand{\tr}{\text{tr}}
\newcommand{\te}{\text{te}}
\newcommand{\eat}[1]{}
\let\save@mathaccent\mathaccent
\newcommand*\if@single[3]{%
    \setbox0\hbox{${\mathaccent"0362{#1}}^H$}%
    \setbox2\hbox{${\mathaccent"0362{\kern0pt#1}}^H$}%
    \ifdim\ht0=\ht2 #3\else #2\fi
    }
\newcommand*\rel@kern[1]{\kern#1\dimexpr\macc@kerna}
\newcommand*\widebar[1]{{\@ifnextchar^{{\wide@bar{#1}{0}}}{\wide@bar{#1}{1}}}}
\newcommand*\wide@bar[2]{\if@single{#1}{\wide@bar@{#1}{#2}{1}}{\wide@bar@{#1}{#2}{2}}}
\newcommand*\wide@bar@[3]{%
\begingroup
\def\mathaccent##1##2{%
    \let\mathaccent\save@mathaccent
    \if#32 \let\macc@nucleus\first@char \fi
    \setbox\z@\hbox{$\macc@style{\macc@nucleus}_{}$}%
    \setbox\tw@\hbox{$\macc@style{\macc@nucleus}{}_{}$}%
    \dimen@\wd\tw@
    \advance\dimen@-\wd\z@
    \divide\dimen@ 3
    \@tempdima\wd\tw@
    \advance\@tempdima-\scriptspace
    \divide\@tempdima 10
    \advance\dimen@-\@tempdima
    \ifdim\dimen@>\z@ \dimen@0pt\fi
    \rel@kern{0.6}\kern-\dimen@
    \if#31
        \overline{\rel@kern{-0.6}\kern\dimen@\macc@nucleus\rel@kern{0.4}\kern\dimen@}%
        \advance\dimen@0.4\dimexpr\macc@kerna
        \let\final@kern#2%
        \ifdim\dimen@<\z@ \let\final@kern1\fi
        \if\final@kern1 \kern-\dimen@\fi
    \else
        \overline{\rel@kern{-0.6}\kern\dimen@#1}%
    \fi
}%
\macc@depth\@ne
\let\math@bgroup\@empty \let\math@egroup\macc@set@skewchar
\mathsurround\z@ \frozen@everymath{\mathgroup\macc@group\relax}%
\macc@set@skewchar\relax
\let\mathaccentV\macc@nested@a
\if#31
    \macc@nested@a\relax111{#1}%
\else
    \def\gobble@till@marker##1\endmarker{}%
    \futurelet\first@char\gobble@till@marker#1\endmarker
    \ifcat\noexpand\first@char A\else
        \def\first@char{}%
    \fi
    \macc@nested@a\relax111{\first@char}%
\fi
\endgroup
}
\newtheorem{definition}{Definition}
\newtheorem{proposition}{Proposition}
\newtheorem{corollary}{Corollary}
\newtheorem{lemma}{Lemma}
\newtheorem{assumption}{Assumption}
\def\eqref#1{equation~\ref{#1}}
\def\1{\bm{1}}
\def\mA{{\bm{A}}}
\def\mB{{\bm{B}}}
\def\mF{{\bm{F}}}
\def\mP{{\bm{P}}}
\def\mS{{\bm{S}}}
\def\mW{{\bm{W}}}
\DeclareMathAlphabet{\mathsfit}{\encodingdefault}{\sfdefault}{m}{sl}
\SetMathAlphabet{\mathsfit}{bold}{\encodingdefault}{\sfdefault}{bx}{n}
\def\cB{{\mathcal{B}}}
\def\cG{{\mathcal{G}}}
\def\cX{{\mathcal{X}}}
\def\sN{{\mathbb{N}}}
\def\sR{{\mathbb{R}}}
\def\sS{{\mathbb{S}}}
\DeclareMathOperator*{\argmax}{arg\,max}
\def\L2L{L^2(\mathcal{J}) \rightarrow L^2(\mathcal{J})}
\def\cmin{\mathrm{d}_{\mathrm{min}}}
\def\LipPhi{L_{\Phi}}
\def\LipPsi{L_{\Psi}}
\renewcommand{\P}{\mu}
\renewcommand{\L}{\mathcal{L}} %
\newcommand{\onenode}{{\bullet}}
\newcommand{\pairwisenodes}{{\bullet \bullet}}
\newcommand{\etaone}{\eta^{\onenode}}
\newcommand{\etatwo}{\eta^{\pairwisenodes }}
\newcommand{\firstlayerd}{{F_0}}
\newcommand{\vfnull}{{\bf f}}
\newcommand{\vfonet}{{\bf f}^{ {\onenode{(t)}} }}
\newcommand{\vfonetmone}{{\bf f}^{ {\onenode{(t-1)}} }}
\newcommand{\Mtwo}{{M}^{\pairwisenodes}}
\newcommand{\Mone}{M^{\onenode}}
\newcommand{\gonet}{g^{\onenode{(t)} }}
\newcommand{\fone}{f^{\onenode}}
\newcommand{\fonebar}{\widebar{f}^{\onenode}}
\newcommand{\ftwo}{f^{\pairwisenodes }}
\newcommand{\fonet}{f^{\onenode{(t)} }}
\newcommand{\fonezero}{f^{\onenode{(0)} }}
\newcommand{\fonebart}{\widebar{f}^{\onenode{(t)} }}
\newcommand{\fonetmone}{f^{\onenode{(t-1)} }}
\newcommand{\Thetatwo}{\Theta^{\pairwisenodes }}
\newcommand{\Thetatwot}{\Theta^{\pairwisenodes{(t)} }}
\newcommand{\ThetatwoTcap}{\Theta^{\pairwisenodes{(T)} }}
\newcommand{\Thetaone}{\Theta^{\onenode}}
\newcommand{\Thetaonet}{\Theta^{\onenode{(t)} }}
\newcommand{\ThetaoneTcap}{\Theta^{\onenode{(T)} }}
\newcommand{\ThetaonebarTcap}{\widebar{\Theta}^{\onenode{(T)} }}
\newcommand{\MPNN}{MPNN\xspace}
\newcommand{\gMPNNs}{gMPNNs\xspace}
\newcommand{\gMPNN}{gMPNN\xspace}
\newcommand{\cMPNNs}{cMPNNs\xspace}
\newcommand{\cMPNN}{cMPNN\xspace}
\newcommand{\onegMPNN}{$\text{gMPNN}^\onenode$\xspace}
\newcommand{\onecMPNN}{$\text{cMPNN}^\onenode$\xspace}
\newcommand{\twogMPNN}{$\text{gMPNN}^\pairwisenodes$\xspace}
\newcommand{\twocMPNN}{$\text{cMPNN}^\pairwisenodes$\xspace}
\newcommand{\onegMPNNs}{$\text{gMPNNs}^\onenode$\xspace}
\newcommand{\onecMPNNs}{$\text{cMPNNs}^\onenode$\xspace}
\newcommand{\twogMPNNs}{$\text{gMPNNs}^\pairwisenodes$\xspace}
\newcommand{\deltaAWone}{\delta^{\onenode}_\text{A-W}}
\newcommand{\deltaAWtwo}{\delta^{\pairwisenodes}_\text{A-W}}
\newcommand{\cdW}{{c_W}}
\newcommand{\cdA}{{c_A}}
\newcommand{\SBM}{SBM\xspace}
\newcommand{\SBMs}{SBMs\xspace}
\newcommand{\mytitle}[1]{OOD Link Prediction Generalization Capabilities#1 of Message-Passing GNNs in Larger Test Graphs}
\title{\mytitle{\\}}
\author{%
  Yangze Zhou \\
  Department of Statistics\\
  Purdue University\\
  West Lafayette, IN 47903 \\
  \texttt{zhou950@purdue.edu} \\
   \And
   Gitta Kutyniok\\
   Department of Mathematics\\ Ludwig-Maximilians-Universitat M\"unchen\\
   Munich, Germany \\
   \texttt{kutyniok@math.lmu.de} \\
   \And
   Bruno Ribeiro\\
   Department of Computer Science\\
   Purdue University\\
   West Lafayette, IN 47903 \\
   \texttt{ribeiro@cs.purdue.edu} 
}
\begin{document}

\maketitle

\begin{abstract}
This work provides the first theoretical study on the ability of graph Message Passing Neural Networks (\gMPNNs) ---such as Graph Neural Networks (GNNs)--- to perform inductive out-of-distribution (OOD) link prediction tasks, where deployment (test) graph sizes are larger than training graphs. We first prove non-asymptotic bounds showing that link predictors based on permutation-equivariant (structural) node embeddings obtained by \gMPNNs can converge to a random guess as test graphs get larger. We then propose a theoretically-sound \gMPNN that outputs structural pairwise (2-node) embeddings and prove non-asymptotic bounds showing that, as test graphs grow, these embeddings converge to embeddings of a continuous function that retains its ability to predict links OOD. Empirical results on random graphs show agreement with our theoretical results.
\end{abstract}

\begin{bibunit}[plainnat]

\section{Introduction}
Link prediction is the task of predicting whether two nodes likely have a missing link~\citep{adamic2003friends,de2008logical,liben2007link,koller2007introduction,taskar2003link}. 
Link prediction tasks arise in many settings, ranging from predicting edges on bipartite graphs between users and products or content in recommender systems~\citep{bell2007lessons,das2007google,koren2009matrix,koren2022advances,linden2003amazon,smith2017two}, to knowledge graph reconstruction~\citep{angeli2013philosophers,taskar2003link,getoor2005link,nickel2015review,trouillon2016complex,dettmers2018convolutional}, to predicting protein-protein interactions~\citep{qi2006evaluation}.

In recent years, there has been growing interest in applying neural network models to inductive link prediction tasks.
Inductive link prediction considers methods trained on a graph $G^\tr$ and deployed at test time on another graph $G^\te$.
It also encompasses the task of training the method on a smaller induced subgraph $G^\tr$ of a larger graph $G^\te$, then deploying it on the entire graph.
In particular, our work focuses on graph message-passing Neural Networks (\gMPNNs)~\citep{gilmer17a,santoro2017simple} or, more precisely, the widely used Graph Neural Network (GNN) framework~\citep{sperduti1997supervised,gori2005new,scarselli2008graph,bruna2013spectral,defferrard2016convolutional,Kipf2016,hamilton2017inductive,velivckovic2017graph,bronstein2017geometric}.

Our work asks the following questions: {\em Are link prediction methods able to cope with the task of inductive out-of-distribution (OOD) link prediction,
where (unseen) test graphs are significantly larger than training graphs?}
How can these OOD link prediction tasks be theoretically defined?
Can we obtain non-asymptotic bounds on the generalization capabilities of these methods?

The majority of today's link prediction methods are based on a similar principle. 
Consider an attributed graph $G=(V,E)$, with node set $V=\{1,...,N\}$, edge set $E \subseteq V \times V$, and node features $\mF \in \sR^{N\times F_0}$, $F_0 \geq 1$. 
Then, given a pair of nodes $i,j \in V$, after $T \geq 1$ iterations over $G$, these methods produce associated node embeddings (representation vectors) $\Thetaone_i, \Thetaone_j \in \sR^{F_T}$, $F_T \geq 1$, which are then used in a link function $\etaone:\sR^{F_T}\times \sR^{F_T} \to [0,1]$ such that $\etaone(\Thetaone_i,\Thetaone_j)$ predicts the probability that $i$ and $j$ have a missing link in $G$. 
In our notation we will denote all node embeddings and associated functions with the superscript ``$\!~^{\bullet}$''.
Henceforth we denote \gMPNNs that output structural node embeddings as \onegMPNNs.

{\em Node embeddings.} The first part of our work considers a subset of these methods, where the output node embeddings are permutation equivariant (a.k.a.\ {\em structural node embeddings}~\citep{srinivasan2019equivalence}).
Informally, a sequence of node embeddings $\Thetaone \in \sR^{N \times F_T}$ given by an embedding method is permutation-equivariant if for any arbitrary graph $G$ and any permutation $\pi \in \sS_N$ of the node indices, where $\sS_N$ is the symmetric group, the resulting isomorphic graph $G'=(\pi \circ V,\pi \circ E,\pi \circ \mF)$ gets permuted node embeddings ${\Thetaone}' = \pi \circ \Thetaone$, where $\pi \circ M$ defines the action of $\pi$ on $M$ (we will provide a formal definition in \Cref{sec:prelim}).
We leave the study of OOD link prediction with {\em positional node embeddings} (a.k.a.\ permutation-sensitive node embeddings~\citep{srinivasan2019equivalence}) to future work.

The application of GNNs to link prediction tasks is made difficult by the fact that, by construction, permutation-equivariant GNNs give the same embeddings $\Thetaone_i,\Thetaone_j$ to any isomorphic nodes $i,j$ in $G$, as noted by~\citet{you2019position} and \citet{srinivasan2019equivalence}.
Isomorphic nodes are nodes that are structurally indistinguishable in $G$ (even when considering node features) except by their (assumed arbitrary) node indices $i,j \in V$.
\update{
That is, if a graph has isomorphic pairs, permutation-equivariant GNN link prediction can fail.
The recent link prediction literature has significantly relied on isomorphic nodes for theoretical results (e.g., \citet[Theorem 2]{zhang2021labeling} uses isomorphic nodes to prove that, uniformly, graphs are likely to have many isomorphic nodes and hence are not amenable to accurate link prediction). 
However, isomorphic nodes are rare in both real-world graphs (see \Cref{fig:isonodes} in the Appendix) and in large random graphs (\Cref{prop:noniso}).

An important open question is whether equivariant GNN would be able to predict links in asymmetric graphs. That is, the concerns of~\cite{you2019position,srinivasan2019equivalence} may not be of practical importance. 
Our work also answers this question: We see that for in-distribution link prediction tasks (where graph test sizes are the same as training sizes), permutation-equivariant GNNs are able to predict links by tapping into the graph asymmetries.
However, we show theoretically and empirically that tapping into asymmetries can fail OOD even when it works in-distribution.
}

{\em Pairwise embeddings.} Taking a different route, \citet{srinivasan2019equivalence} \update{provides an existence proof} that the link prediction task between $i$ and $j$ can always be performed by a pairwise embedding $\Thetatwo_{ij}(G)$, i.e., for any pair of nodes $i,j$ in a graph $G$, there exists a pairwise embedding $\Thetatwo_{ij}(G)$ and a link function $\etatwo:\sR^{F_T} \to [0,1]$ such that $\etatwo(\Thetatwo_{ij})$ approximates the probability that $i$ and $j$ have a hidden link.
In our notation we will denote all pairwise (joint 2-node) embeddings and associated functions with the superscript ``$\!~^{\bullet\bullet}$''.
\update{Unfortunately, as the test graph grows, we were unable to prove existing pairwise embedding methods~\cite{monti2018dual,wang2021inductive,Zhang2017WLlink,zhang2021labeling,zhu2021neural} are able to perform OOD link prediction tasks.
Hence, we propose a novel family of \gMPNNs for pairwise embeddings, denoted \twogMPNNs henceforth.}
The second part of our work considers the OOD generalization capability of these \twogMPNNs.

\vspace{-7pt}
\paragraph{Contributions.} In this work we study inductive OOD link prediction tasks for larger test graphs using permutation-equivariant node and pairwise embeddings, $\Thetaone$ and $\Thetatwo$, respectively.
Our work makes the following contributions:
\begin{enumerate}[leftmargin=*]%
    \item We provide a theoretical framework defining OOD inductive link prediction tasks, where test graphs are significantly larger than training graphs.
    \item \update{We show that structural node embeddings from message-passing GNNs can fail in OOD link prediction tasks if the test graph (from the same graph family) is significantly larger than the training graph.}  Our work fills {\em an important gap in the literature}, where \citet{DBLP:conf/icml/BevilacquaZ021} studied the OOD capabilities of GNNs for {\em graph classification} using random graph models. Our work studies the OOD capabilities of GNNs for {\em inductive link prediction} in a similar setting.
    \item \update{We propose a new family of structural pairwise embeddings, denoted \twogMPNNs, that can provably perform the above OOD task.}
    \item \update{We provide non-asymptotic bounds on the convergence of pairwise \gMPNNs embeddings}.
    Extensive empirical experiments using stochastic block models (SBMs~\citep{snijders1997estimation}) validate our theoretical results.
    Our work focuses on providing a theoretical understanding of the challenges of OOD link prediction tasks rather than propose real-world link prediction tasks and compare baselines. However, we believe that our work lays the theoretical foundation (and challenges) for future application-focused works. 
\end{enumerate}
\vspace{-7pt}
\section{Preliminaries}
\label{sec:prelim}
\vspace{-5pt}

Given an attributed graph $G=(V,E)$, with node set $V=\{1,...,N\}$, edge set $E \subseteq V \times V$, adjacency matrix $\mA \in \{0,1\}^{N\times N}$, where $\mA_{ij} = \Ind_{\{(i,j) \in E\}}$, and node features $\mF \in \sR^{N\times \firstlayerd}$, $\firstlayerd > 0$.
Let $\mP_\pi \in \cB_N$ be a permutation matrix associated with permutation $\pi \in \sS_N$ (where $\sS_N$ is the symmetric group), where $\cB_N$ denotes the Birkhoff polytope of $N \times N$ doubly-stochastic matrices. Doubly-sctochastic matrices are non-negative square matrices whose rows and columns sum to one.
The matrix $\mP_\pi$ defines the action of permutation $\pi$ on these matrices, e.g., $\pi \circ \mA = \mP_\pi \mA \mP_\pi^T$.
We denote a pair of nodes $i,j \in V$ as isomorphic in $G$ if exists $\pi \in \sS_N$ such that \update{$\pi_i = j$, $\mA = \mP_\pi \mA \mP_\pi^T$,} and $\mF = \mP_\pi \mF$.
Node features can be defined by the {\em graph signal} $f:V\rightarrow \sR^\firstlayerd$ as a function that maps a node to an $\firstlayerd$-dimensional feature in $\sR^\firstlayerd$. Then the signal of the graph $\mF$ can be represented by a matrix $\mF = [\vfnull_1,...,\vfnull_N]^T\in \sR^{N\times \firstlayerd}$, where $\vfnull_{i}\in \sR^\firstlayerd$ are the features of node $i \in V$.

{\bf Random graph model for $G$.}
Denote the metric-measure space by $(\cX,d,\mu)$, where $\cX$ is a set, $d$ is a metric, and $\mu$ is a probability Borel measure. A \emph{graphon} is defined as a mapping $W: \cX\times \cX\rightarrow [0,1]$~\citep{diaconis2007graph,wolfe2013nonparametric}. 
In what follows we define how the graph $G$ is sampled from the graphon models. The signal definition follows \update{\citet[Definition 2.3]{maskey2022generalization}} and the edge samples follow \citet{lawrence2005probabilistic,airoldi2013stochastic}.
\begin{wrapfigure}{r}{0.33\textwidth}
\begin{center}
\vspace{-8pt}
\hspace{-15pt}\includegraphics[width=2in]{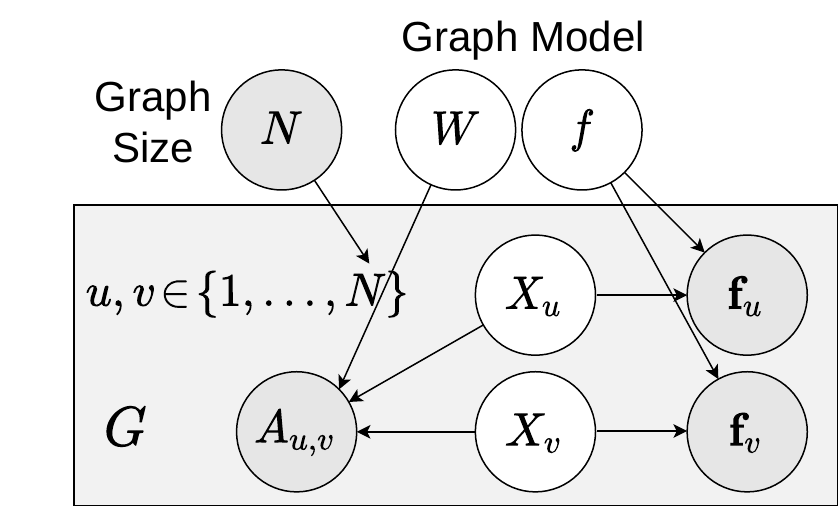}
\vspace{-10pt}
\caption{Templated causal DAG of $G$\label{fig:DAG}. Hidden and observed variables are shaded white and gray, respectively.}
\vspace{-20pt}
\end{center}
\end{wrapfigure}
\begin{definition}[Random graph model]
\label{def:rgm}
We define $(W,f)$ as a {\em random graph model} for $G$ on $(\cX,d,\mu)$ with the graphon $W: \cX\times \cX\rightarrow [0,1]$ and the metric-space signal $f:\cX\rightarrow \sR^\firstlayerd$. $f \in L^\infty(\cX)$ is an essentially bounded measurable function with the essential supremum norm. We obtain $(G,\mF)$ by first sampling $N$ i.i.d.\ random points $X_1,...,X_N$ from $\cX$ with probability density $\mu$, as the nodes of $G$.
Then the edge $(i,j)$ between nodes $i$ and $j$ is sampled with probability $W(X_i,X_j)$, i.e., the adjacency matrix $\mA=(\mA_{i,j})_{i,j}$ of $G$ is defined as $\mA_{i,j} = \mathbbm{1}(Z_{i,j}<W(X_i,W_j))$ for $i,j=1,...,N$, where $\{Z_{i,j}\}_{i,j=1}^N$ are sampled i.i.d.\ from $\text{Uniform}(0,1)$.
The graph signal $\mF = [\vfnull_1,...,\vfnull_N]^T \in \sR^{N \times \firstlayerd}$ is defined as $\vfnull_i=f(X_i)$. We say $(G,\mF)\sim (W,f)$.
Further, we restrict our attention to graphons $W$ such that there exists a constant $d_\text{min}$ satisfying the {\em graphon degree} $d_W(x):=\int_{\mathcal{X}}W(x,y)d\mu(y)\geq d_\text{min}>0,\forall x\in\mathcal{X}$.
\end{definition}
In an abuse of notation we identify node $i \in V$ with the sampled value $X_i \sim \mu, \forall i\in \{1,...,N\}$, since generally $\mu$ is such that $P(X_i = X_j)=0$ almost everywhere for $i\neq j$ (e.g., $\mu$ is uniform).
The causal DAG of the data generation process of $G$ is given in \Cref{fig:DAG}.
\update{Our goal is to produce predictors that survive the distribution shift implied by a change in the distribution of graph sizes $N$ during test time.}
Furthermore, we note that all proofs are relegated to the Appendix due to space constraints.

\newcommand{\doop}{\text{do}}
\newcommand{\supp}{\text{supp}}

\vspace{-6pt}
\subsection{Inductive structural node representations with graph message-passing neural networks}
\vspace{-5pt}
Henceforth use the terms {\em node embeddings} and {\em node representations} interchangeably.
Graph message-passing Neural Network (\onegMPNN) is defined by realizing a message-passing Neural Network (\MPNN) on a graph.
\update{We now restate the \citet[Definition 2.1]{maskey2022generalization} of MPNN.} %

\begin{definition}[\MPNN \citep{maskey2022generalization}]
\label{def:MPNN}
Let $T\in \sN$ denote the number of layers. For $t=1,...,T$, let $\Phi^{(t)}:\sR^{2F_{t-1}}\rightarrow \sR^{H_{t-1}}$ and $\Psi^{(t)}:\sR^{F_{t-1}+H_{t-1}}\rightarrow \sR^{F_t}$ be functions, where $F_t\in \sN$ is called the feature dimension of layer $t$. The corresponding \MPNN $\Theta$ is define by the sequence of message functions $(\Phi^{(t)})_{t=1}^T$ and update functions $(\Psi^{(t)})_{t=1}^T$, i.e. 
$\Theta = ((\Phi^{(t)})_{t=1}^T,(\Psi^{(t)})_{t=1}^T)$.
\end{definition}

We now introduce the \onegMPNN with $T$ message-passing layers.
For each node $i \in V$, $\vfonet_i$ at layer $t \in \{1,\ldots,T\}$ is defined recursively using (a) its own representation at layer $t-1$ ($\vfonetmone_i$) and (b) an aggregated representation of its neighbors $m_i^{(t)}$. \CRupdate{Unlike \citet[Definition 2.2]{maskey2022generalization} considering \em{mean aggregation}, we consider here the (N-normalized)sum representation} as follows:%
\CRupdate{
\begin{definition}[\onegMPNN](Adaptation of \cite[Definition 2.2]{maskey2022generalization} to N-normalized GNNs)
\label{def:gmpnn}
Let $(G,\mF)$ be a graph with graph signals as in \Cref{def:rgm} and $\Theta$ be a MPNN as in \Cref{def:MPNN}.
For layer $t=1,...,T$, define %
$\Thetaonet_\mA$ as maps from the input graph $G$ and graph signals $\mF^{(0)}=\mF\in \sR^{N\times F_0}$ to the features in the $t$-th neural layer by
    $$ \Thetaonet_\mA:\sR^{N\times F_0}\rightarrow \sR^{N\times F_t}, \quad \mF \mapsto \mF^{(t)}=(\vfonet_i)_{i=1}^N$$
    where $\mF^{(t)}$ is defined by %
    the ($N$-normalized) sum aggregation procedure, $\forall i \in V$, for $\Thetaonet_\mA$,
    $$m_i^{(t)}:=\frac{1}{N}\sum_{j=1}^N A_{i,j} \Phi^{(t)}(\vfonetmone_i,\vfonetmone_j),$$ and $$
    \vfonet_i:=\Psi^{(t)}(\vfonetmone_i,m_i^{(t)}).
    $$
\end{definition}
}

Given a \onegMPNN, %
$\ThetaoneTcap_\mA$, with $T \geq 1$ layers as in \Cref{def:gmpnn}, their outputs are %
$ \ThetaoneTcap_\mA(\mF) \in \sR^{N  \times F_T}$ for the ($N$-normalized) sum aggregation, and are henceforth denoted as node embedding outputs of the \onegMPNN.  We denote $ \ThetaoneTcap_\mA(\mF)_i$ as the node embedding for node $i \in V$.

\vspace{-5pt}
\subsection{Node embeddings with {\em continuous} message passing neural networks}
\vspace{-5pt}
\update{Here we adapt the degree-normalized definition of \citet{maskey2022generalization} on continuous message passing neural networks for structural node embeddings \CRupdate{ to our continuous integral aggregation (N-normalized GNNs).}} %

\begin{definition}[Continuous message-passing]
\label{def:cmpnn-def}
Given a \MPNN $\Theta$ as in \Cref{def:MPNN}, the {\em node continuous message passing neural network (\onecMPNN)} on graphons and metric-space signals $f:\cX\rightarrow \sR^\firstlayerd$ can be defined by replacing the graph node features and the aggregation scheme in \Cref{def:gmpnn} by the following continuous counterparts. Using a message signal $U:\cX\times \cX\rightarrow \sR^H$, the continuous %
integral aggregation is defined as        
        $\Mone_W(U)(x) = \int_\cX W(x,y) U(x,y)d\mu(y),$
 where $W$ is a graphon. %
\end{definition}
As defined in \citet[Definition 2.4]{maskey2022generalization}, the same \MPNN $\Theta$ can process metric-space signals instead of graph signals with the continuous aggregations. Instead of using continuous mean aggregation as \cite[Definition 2.4]{maskey2022generalization}, we are using continuous integral aggregation.

\CRupdate{
\begin{definition}[\onecMPNN] (Adaptation of \cite[Definition 2.4]{maskey2022generalization} to N-normalized GNNs)
\label{def:cmpnn}
Let $(W,f)$ be a random graph model as in \Cref{def:rgm} and $\Theta$ be a MPNN as in \Cref{def:MPNN}.
For $t=1,...,T$, define $\Thetaonet_W$ as maps from input metric-space signal $\fonezero=f:\cX\rightarrow \sR^{F_0}$ to the features in the $t$-th layer by
    $$\Thetaonet_W:L^2(\cX)\rightarrow L^2(\cX), \quad \fone \mapsto  \fonet,$$
    where $\fonebart$ are defined sequentially
    through the integral aggregation for %
    \begin{align*}
  \Thetaonet_W: \gonet(x):&=\Mone_W(\Phi^{(t)}(\fonetmone,\fonetmone))(x)\\&=\int_\cX W(x,y)\Phi^{(t)}(\fonetmone(x),\fonetmone(y))d\mu(y),
\\
    \fonet(x):&=\Psi^{(t)}(\fonetmone(x),\gonet(x)).
    \end{align*}
\end{definition}
} %
\vspace{-4pt}
\section{Size-stability of node representation and its drawbacks}
\label{sec:stab}
\vspace{-3pt}
We now present our results about convergence of \onegMPNN to \onecMPNN for test graphs $G^\te$ sampled from the graphon random graph model (see \Cref{def:rgm}), and how it leads to size-stability of \onegMPNN for nodes that have the same representation under \onecMPNNs.
In what follows we will focus on the neighbor-average aggregation procedure (a) of \Cref{def:gmpnn}, since this is the more difficult case to prove. 
{\em Similar results for the ($N$-normalized) sum aggregation procedure (\Cref{def:gmpnn}(b)) are shown in the Appendix due to space constraints.}
Moreover, common definitions (e.g., Lipschitz continuous functions) are also relegated to the Appendix to save space.
\vspace{-5pt}
\subsection{Convergence of \gMPNNs towards \cMPNNs as test graph size increase}
\vspace{-5pt}
We now prove that, with high probability, the maximum infinity difference between the \onegMPNN and \onecMPNN node representations decreases with $N^\te$, the size of $G^\te$. 
\update{The proof of \Cref{thm:MainInProb-new} closely follows the \CRupdate{pointwise convergence proof} in  \citet{maskey2022stability}, adapted to our OOD setting and can be found in the Appendix.}%

\CRupdate{
\begin{restatable}[OOD convergence without in-distribution convergence]{theorem}{thmone}
\label{thm:MainInProb-new}
\update{For a random graph model $(W,f)$ satisfying \Cref{def:rgm}, let $N^\tr$ be a random variable defining the distribution of graph sizes in training. Define the test distribution $(G^\te,\mF^\te) \sim (W,f)$ through the causal graph in \Cref{fig:DAG} as an interventional change to obtain larger test graph sizes where \CRupdate{ $\min(\supp(N^\te)) \CRupdate{\gg} M_\tr = \max(\supp(N^\tr))$} (which means any test graph is much larger than the largest possible training graph).}
Let $\Theta = ((\Phi^{(l)})_{l=1}^T, (\Psi^{(l)})_{l=1}^T)$ be a MPNN as in \Cref{def:MPNN} with $T$ layers such that  $\Phi^{(l)}: \mathbb{R}^{2F_{l-1}} \to \mathbb{R}^{H_{l-1}}$ 
and  $\Psi^{(l)}:\mathbb{R}^{F_{l-1} + H_{l-1}} \to \mathbb{R}^{F_l}$ are learned from the training distribution and
are Lipschitz continuous with Lipschitz constants $\LipPhi^{(l)}(M_\tr)$ and $\LipPsi^{(l)}(M_\tr)$ that depend on $M_\tr$. Let \onegMPNN $\ThetaoneTcap_A$ and \onecMPNN $\ThetaoneTcap_W$ be as in \Cref{def:gmpnn,def:cmpnn}. 
Let $X^\te_1,...,X^\te_{N^\te}$ and $\mA^\te$ be as in \Cref{def:rgm}. %
Let $p\in(0, \frac{1}{ \sum_{l=1}^T2(H_l+1)})$.
Then, if \begin{equation}
\begin{aligned}
\label{eq:CondOnN-new}
\frac{\sqrt{N^\te}}{\sqrt{\log{(2N^\te/p)}}} \geq  \frac{4\sqrt{2}}{d_\text{min}} , 
\end{aligned}
\end{equation}
we have with probability at least $1- \sum_{l=1}^T 2(H_l + 1) p$,
\[
\begin{aligned}
\label{eq:deltaAWone}
\deltaAWone := \max_{i=1,...,N^\te} \|\ThetaoneTcap_{\mA^\te}(\mF^\te)_i-\ThetaoneTcap_W(f)(X^\te_i)\|_\infty  \leq (C_1+C_2\|f\|_\infty)\frac{\sqrt{\log (2N^\te/p)}}{\sqrt{N^\te}},
\end{aligned}
\]
where the constants $C_1$ and $C_2$ are defined in the Appendix and depend on $\{\LipPhi^{(l)}(M_\tr),\LipPsi^{(l)}(M_\tr)\}_{l=1}^T$ and the distribution of $(G^\tr,\mF^\tr)$.
\end{restatable}
}
\Cref{thm:MainInProb-new} above shows that as the test graph size $N^\te$ grows, the node representations from the discrete \onegMPNNs \update{learned in the training data} converge to the continuous \onecMPNNs.
{\em \update{\Cref{thm:MainInProb-new}'s OOD statement has profound consequences when it comes to predicting links using the node representations obtained by a \onegMPNN.}}
Next, \Cref{cor:stab-node} shows that for any two nodes $i,j \in V^\te$ that are indistinguishable in the \onecMPNN (defined as $\ThetaoneTcap_W(f)(X^\te_i)=\ThetaoneTcap_W(f)(X^\te_j)$), they will get increasingly similar representations in the discrete \onegMPNN as $N^\te$ grows.

\begin{restatable}{corollary}{corone}
\label{cor:stab-node}
Let $\Theta = ((\Phi^{(l)})_{l=1}^T, (\Psi^{(l)})_{l=1}^T), \ThetaoneTcap_A, \ThetaoneTcap_W, p,(W,f),(G^\tr,\mF^\tr),(G^\te,\mF^\te),N^\tr$, $N^\te, A^\te$, and $X^\te_1,...,X^\te_{N^\te}$ be as in \Cref{thm:MainInProb-new}.
If there exists 
$ i,j\in V^\te, i\neq j,\, \text{s.t. } \ThetaoneTcap_W(X_i)=\ThetaoneTcap_W(X_j)$ and \Cref{eq:CondOnN-new} is satisfied,
then, with $C_1$ and $C_2$ as in \Cref{thm:MainInProb-new}, we have that with probability at least $1- \sum_{l=1}^T 2(H_l + 1) p$,
\[
\begin{aligned} \|\ThetaoneTcap_{\mA^\te}(\mF^\te)_i-\ThetaoneTcap_{\mA^\te}(\mF^\te)_j\|_\infty \leq (C_1+C_2\|f\|_\infty)\frac{2\sqrt{\log (2N^\te/p)}}{\sqrt{N^\te}}.
\end{aligned}
 \]
\end{restatable}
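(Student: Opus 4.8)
The plan is to derive \Cref{cor:stab-node} directly from \Cref{thm:MainInProb-new} by a simple triangle-inequality argument, so the corollary is essentially a "free" consequence once the theorem is in hand. First I would invoke \Cref{thm:MainInProb-new} under the stated hypothesis that \Cref{eq:CondOnN-new} holds; this gives, on an event of probability at least $1-\sum_{l=1}^T 2(H_l+1)p$, the uniform bound
\[
\deltaAWone = \max_{i=1,\dots,N^\te}\|\ThetaonebarTcap_{\mA^\te}(\mF^\te)_i - \ThetaonebarTcap_W(f)(X^\te_i)\|_\infty \leq (C_1 + C_2\|f\|_\infty)\frac{\sqrt{\log(2N^\te/p)}}{\sqrt{N^\te}}.
\]
In particular the bound holds simultaneously for the two indices $i$ and $j$ from the hypothesis, so on this single event both $\|\ThetaonebarTcap_{\mA^\te}(\mF^\te)_i - \ThetaonebarTcap_W(f)(X^\te_i)\|_\infty$ and $\|\ThetaonebarTcap_{\mA^\te}(\mF^\te)_j - \ThetaonebarTcap_W(f)(X^\te_j)\|_\infty$ are at most $(C_1+C_2\|f\|_\infty)\sqrt{\log(2N^\te/p)}/\sqrt{N^\te}$.

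Next I would add and subtract the continuous representations and apply the triangle inequality in the $\|\cdot\|_\infty$ norm:
\begin{align*}
\|\ThetaonebarTcap_{\mA^\te}(\mF^\te)_i - \ThetaonebarTcap_{\mA^\te}(\mF^\te)_j\|_\infty
&\leq \|\ThetaonebarTcap_{\mA^\te}(\mF^\te)_i - \ThetaonebarTcap_W(f)(X^\te_i)\|_\infty \\
&\quad + \|\ThetaonebarTcap_W(f)(X^\te_i) - \ThetaonebarTcap_W(f)(X^\te_j)\|_\infty \\
&\quad + \|\ThetaonebarTcap_W(f)(X^\te_j) - \ThetaonebarTcap_{\mA^\te}(\mF^\te)_j\|_\infty.
\end{align*}
The middle term vanishes by the assumption $\ThetaonebarTcap_W(X_i) = \ThetaonebarTcap_W(X_j)$ (using the identification of node $i$ with its sampled point $X_i$ noted after \Cref{def:rgm}), and the first and third terms are each bounded by $\deltaAWone$. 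Hence the right-hand side is at most $2\,\deltaAWone$, which is exactly the claimed bound $(C_1+C_2\|f\|_\infty)\,2\sqrt{\log(2N^\te/p)}/\sqrt{N^\te}$, holding on the same probability-$\geq 1-\sum_{l=1}^T 2(H_l+1)p$ event.

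There is no real obstacle here: the only thing to be careful about is bookkeeping — ensuring that the probabilistic event is the \emph{same} one from \Cref{thm:MainInProb-new} (so that the union bound / failure probability is not doubled), and that the constants $C_1, C_2$, the parameter $p$, and the condition on $N^\te$ are inherited verbatim rather than re-derived. The one mild subtlety worth a sentence is matching notation: the theorem writes $\ThetaonebarTcap_W(f)(X^\te_i)$ while the corollary hypothesis is stated as $\ThetaonebarTcap_W(X_i) = \ThetaonebarTcap_W(X_j)$, so I would note explicitly that these denote the same object (the continuous representation evaluated at the sampled point) before cancelling the middle term.
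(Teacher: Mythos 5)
Your proposal is correct and matches the paper's own proof: both apply the uniform bound from \Cref{thm:MainInProb-new} to nodes $i$ and $j$ on the same high-probability event and then use the triangle inequality, cancelling the continuous representations via the hypothesis $\ThetaonebarTcap_W(f)(X^\te_i)=\ThetaonebarTcap_W(f)(X^\te_j)$. The only cosmetic difference is that the paper writes a two-term triangle inequality and substitutes $X^\te_j$ for $X^\te_i$ inside one term, whereas you write three terms with a vanishing middle term — the same argument.
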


\paragraph{Implications of \Cref{cor:stab-node} on Stochastic Block Models (SBMs).}
\label{sec:SBMs}
In what follows, we will discuss circumstances where two nodes $i,j \in V$ get the same \onecMPNN representations (i.e., both $\ThetaoneTcap_W(f)(X_i)=\ThetaoneTcap_W(f)(X_j)$. %
In what follows we restrict our results to an important family of graphon models: Stochastic Block Models (SBMs)~\citep{snijders1997estimation}, where we also model node attributes. 
SBMs were chosen because they can consistently model large graphs generated by any piecewise Lipschitz graphon model~\citep{airoldi2013stochastic}.
SBMs are also intuitive models, which makes them useful to illustrate our results.

\begin{definition}[Stochastic Block Model (\SBM)]
\label{def:SBM}
An \SBM $(W,f)$ is a random graph model (\Cref{def:rgm}) with cluster structures in $W$ and $f$. Partition the node set into $r \geq 2$ disjoint subsets $S_1,S_2,...,S_r \subseteq V$ (known as blocks or communities) with an associated $r\times r$ symmetric matrix $\mS$, where the probability of an edge $(i,j)$, $i\in \update{S_a}$ and $j\in S_b$ is $\mS_{ab}$, for $a,b\in \{1,\ldots,r\}$. Let $\cX=[0,1]$, and $\mu$ be the uniform distribution on $[0,1]$. By dividing $\cX=[0,1]$ into disjoint convex sets $[t_0,t_1),  [t_1,t_2),\ldots,[t_{r-1},t_r]$, where $t_0=0$ and $t_r=1$, node $i$ belongs to block $S_a$ if $X_i \sim \text{Uniform}(0,1)$ satisfies $X_i \in [t_{a-1},t_a)$. The graphon function $W$ is defined as $W(X_i,X_j)=\sum_{a,b\in \{1,\ldots,r\}}\mS_{ab}\mathds{1}(X_i \in [t_{a-1},t_a))\mathds{1}(X_j \in [t_{b-1},t_b))$. 
We take the liberty to also define node signals in our \SBM model, where for $\mB=[B_1,...,B_r]^T\in \sR^{r\times \firstlayerd}$
the metric-space signal $f:\mathcal{X}\rightarrow \sR^\firstlayerd$ is defined as $f(x)=\sum_{a\in \{1,...,r\}}\mathds{1}(x \in [t_{a-1},t_a))B_a$.
\end{definition}

We define the action of permutation $\pi$ on $\mB$ of \Cref{def:SBM} as $\pi \circ \mB$, where $(\pi \circ \mB)_{\pi_a}=\mB_a$.

\begin{definition}[Isomorphic \SBM blocks] 
\label{def:iso-sbm}
For the \SBM model $(W,f)$ in \Cref{def:SBM}, we say two blocks $a,b \in \{1,\ldots,r\}$ are isomorphic if the \SBM satisfies the following two conditions: 
(a) $t_a-t_{a-1} = t_b-t_{b-1}$, and (b) \update{for $\pi \in \sS_r$, such that $\pi_a=b$, $\pi_b=a$ and $\pi_c=c, \forall c\in \{1,...,r\},\ c\neq a,b$, $\mS=\pi \circ \mS$, and $\mB=\pi \circ \mB$}.
\end{definition}
A similar definition can be obtained for the general graphons in \Cref{def:rgm} using the isomorphic graphon definition of \citet{lovasz2015automorphism}.

Now that we have the definition for isomorphic blocks in \SBM models, we can prove that all nodes in these isomorphic blocks will obtain the same representations under %
integral aggregation \onecMPNNs.

\begin{restatable}{lemma}{lemone}
\label{lem:cmpnn-sbm}
Let $\Theta = ((\Phi^{(l)})_{l=1}^T, (\Psi^{(l)})_{l=1}^T)$ be a MPNN as in \Cref{def:MPNN}, and $\ThetaoneTcap_W$ as in \Cref{def:cmpnn}. For the \SBM model $(W,f)$ in \Cref{def:SBM} with $N^\te$ nodes $X_1, \ldots, X_{N^\te}$. If there exists $i,j\in V^\te$ such that $X^\te_i,X^\te_j$ are nodes that belong to isomorphic \SBM blocks (\Cref{def:iso-sbm}), then $\ThetaoneTcap_W(f)(X^\te_i)=\ThetaoneTcap_W(f)(X^\te_j)$.%
\end{restatable}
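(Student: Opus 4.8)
The plan is to prove by induction on the layer index $t$ that the \onecMPNN feature functions $\fonebart$ and $\fonet$ are \emph{block-constant} (i.e., constant on each interval $[t_{a-1},t_a)$), and moreover that the values they take on two isomorphic blocks $a,b$ (\Cref{def:iso-sbm}) are equal. Once this is established at $t=T$, evaluating at $X^\te_i \in [t_{a-1},t_a)$ and $X^\te_j\in[t_{b-1},t_b)$ immediately yields $\ThetaonebarTcap_W(f)(X^\te_i)=\ThetaonebarTcap_W(f)(X^\te_j)$ and likewise for the integral-aggregation variant, which is the claim. I will carry out the argument for the degree-average aggregation $\Thetaonebart_W$; the integral aggregation $\Thetaonet_W$ is handled identically (in fact it is slightly easier since no degree normalization appears).

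\emph{Base case.} At $t=0$ we have $\fonezero = f$, which by \Cref{def:SBM} is exactly $f(x)=\sum_a \mathds{1}(x\in[t_{a-1},t_a))B_a$, hence block-constant. For isomorphic blocks $a,b$, condition (b) of \Cref{def:iso-sbm} gives $\mB = \pi\circ\mB$ with $\pi_a=b$, so $B_a = (\pi\circ\mB)_{\pi_a} = (\pi\circ \mB)_b$... more directly, $B_b = (\pi\circ\mB)_{\pi_b}=(\pi\circ\mB)_a$ forces $B_a=B_b$ by the definition $(\pi\circ\mB)_{\pi_c}=\mB_c$. So the base case holds.

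\emph{Inductive step.} Assume $\fonebartmone$ is block-constant with value $c_a^{(t-1)}$ on block $a$, and $c_a^{(t-1)}=c_b^{(t-1)}$ whenever $a,b$ are isomorphic. First I compute the degree term: since $W(x,y)=\mS_{ab}$ for $x\in[t_{a-1},t_a)$, $y\in[t_{b-1},t_b)$, the graphon degree is $d_W(x)=\sum_b \mS_{ab}(t_b-t_{b-1})=:\bar d_a$ for all $x$ in block $a$, so $d_W$ is block-constant. Then for $x$ in block $a$,
\[
\gonebart(x)=\sum_{b=1}^r \frac{\mS_{ab}}{\bar d_a}\,\Phi^{(t)}\!\big(c_a^{(t-1)},c_b^{(t-1)}\big)\,(t_b-t_{b-1}),
\]
which depends only on $a$; call it $\bar g_a^{(t)}$. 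Hence $\fonebart(x)=\Psi^{(t)}(c_a^{(t-1)},\bar g_a^{(t)})=:c_a^{(t)}$ is block-constant. It remains to check $c_a^{(t)}=c_b^{(t)}$ for isomorphic $a,b$. Using the permutation $\pi$ from \Cref{def:iso-sbm}: condition (a) gives $t_a-t_{a-1}=t_b-t_{b-1}$, condition (b) gives $\mS=\pi\circ\mS$ (so $\mS_{ac}=\mS_{b\,\pi_c}$ and, since $\pi$ only swaps $a,b$, the multiset $\{(\mS_{ac}, t_c-t_{c-1}, c_c^{(t-1)})\}_c$ equals $\{(\mS_{bc}, t_c-t_{c-1}, c_c^{(t-1)})\}_c$ after relabeling by $\pi$), and the inductive hypothesis gives $c_c^{(t-1)}=c_{\pi_c}^{(t-1)}$. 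Chaining these equalities through the formula above shows $\bar d_a=\bar d_b$, then $\bar g_a^{(t)}=\bar g_b^{(t)}$ (the sum over $b$ is invariant under the relabeling $c\mapsto\pi_c$), and finally $c_a^{(t)}=\Psi^{(t)}(c_a^{(t-1)},\bar g_a^{(t)})=\Psi^{(t)}(c_b^{(t-1)},\bar g_b^{(t)})=c_b^{(t)}$. This closes the induction.

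\emph{Main obstacle.} The routine part is the block-constancy bookkeeping; the one place needing care is verifying that the permutation $\pi$ (which transposes $a\leftrightarrow b$ and fixes everything else) really does witness the equality of the aggregated terms for blocks $a$ and $b$ — i.e., correctly threading conditions (a), (b) of \Cref{def:iso-sbm} together with the inductive hypothesis so that the reindexing $c\mapsto\pi_c$ inside the sum defining $\gonebart$ leaves it invariant. I expect this to amount to a short symmetry/relabeling argument rather than a genuine difficulty, and the same reindexing argument works verbatim for $\Mone_W$ in the integral-aggregation case.
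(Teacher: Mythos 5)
Your proposal is correct and follows essentially the same route as the paper's proof: an induction over layers showing that the \onecMPNN features are block-constant, that the block values satisfy $\mB^{(l)}=\pi\circ\mB^{(l)}$, and that the degree term and aggregation integral are invariant under the reindexing $c\mapsto\pi_c$ using $\mS=\pi\circ\mS$ and the equal block lengths. The symmetry/relabeling step you flag as the main obstacle is exactly the computation the paper carries out, so no gap remains.
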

Note that even though any two nodes in isomorphic \SBM blocks get the same \onecMPNN representations per \Cref{lem:cmpnn-sbm}, these nodes are likely not isomorphic in $G^\te$ (as shown in \Cref{prop:noniso} in Appendix) and, hence, they get different \onegMPNN representations.
However, \Cref{cor:stab-node} shows that these representations become increasingly similar as the test graph size grows $(N^\te \gg 1)$. We use this observation to understand the ability of \onegMPNNs to perform link prediction tasks next.

\subsection{The hardness of OOD inductive link prediction using structural node embeddings}

The convergence of \onegMPNNs to \onecMPNNs as the test graph size $N^\te$ grows (\Cref{thm:MainInProb-new}) implies through \Cref{cor:stab-node} and \Cref{lem:cmpnn-sbm} that node representations of distinct \SBM blocks can become increasingly similar as the test graph size grows $(N^\te \gg 1)$, even though these nodes are not isomorphic in $G^\te$ with high probability (see \Cref{prop:noniso} in the Appendix).

\begin{definition}[Link prediction function from structural node embeddings]
\label{def:linkpred}
An inductive link prediction function $\etaone:\sR^{F_T} \times \sR^{F_T} \to [0,1]$ takes the \onegMPNN node representations of two nodes $i,j \in V^\te$ and predicts the edge probability $P(\mA^\te_{ij}=1)$.
We assume $\etaone$ is Lipschitz continuous with Lipschitz constant $L_{\etaone}(M_\tr)$ that depends on $\max(\supp(N^\tr))$. In the context of graphon random graph models (\Cref{def:rgm}), we aim to learn  $\etaone(\ThetaoneTcap_{\mA^\te}(\mF^\te)_i,\ThetaoneTcap_{\mA^\te}(\mF^\te)_j) \approx W(i,j)$.
We further assume we predict a link if $\etaone(\cdot,\cdot)>\tau$, while no link if $\etaone(\cdot,\cdot) < \tau$, for some (arbitrary) threshold $\tau\in [0,1]$ chosen by the user of such system.
\end{definition}
The next corollary showcases the difficulty in OOD predicting links using structural node representations as $N^\te$ grows.

\begin{restatable}{corollary}{cortwo}
\label{cor:perf}
Let $\Theta = ((\Phi^{(l)})_{l=1}^T, (\Psi^{(l)})_{l=1}^T)$ be the MPNN with $T$ layers and $\ThetaoneTcap_A, \ThetaoneTcap_W$ as in \Cref{thm:MainInProb-new}. Let $\etaone:\mathbb{R}^{F_T}\times \mathbb{R}^{F_T}\to [0,1]$ be as in \Cref{def:linkpred}. Consider the \SBM $(W,f)$ in \Cref{def:SBM} with isomorphic blocks (\Cref{def:iso-sbm}). 
Let $(G^\tr, \mF^\tr)\sim (W,f)$ and $(G^\te, \mF^\te) \sim (W, f)$ be the training and test graphs with $N^\tr$ and $N^\te$ nodes, respectively. 
Consider any two test nodes
$i,j\in \{1,...,N^\te\}$, $i\neq j$, for which we can make a link prediction decision with $\etaone$ (i.e.,   $\etaone(\ThetaoneTcap_{\mA^\te}(\mF^\te)_i,\ThetaoneTcap_{\mA}(\mF^\te)_j)\neq \tau$).
Let $G^\te$ be large enough to satisfy both \Cref{eq:CondOnN-new} and
\[
    \frac{\sqrt{N^\te}}{\sqrt{\log(2N^\te/p)}}> \frac{2(C_1+C_2\|f\|_\infty)}{|\etaone(\ThetaoneTcap_{\mA^\te}(\mF^\te)_i,\ThetaoneTcap_{\mA^\te}(\mF^\te)_j)-\tau|/L_\etaone(M_\tr)},%
\]
where $p$, $C_1$, and $C_2$ are as given in \Cref{cor:stab-node}.
Then, if $i$ and $j$ belong to isomorphic blocks (i.e., $\ThetaoneTcap_W(f)(X^\te_i)=\ThetaoneTcap_W(f)(X^\te_j)$), with probability at least $1-\sum_{l=1}^T 2(H_l + 1) p$ the link prediction method in \Cref{def:linkpred} will {\em make the same link prediction regardless of the \SBM probability matrix $\mS$ (\Cref{def:SBM}) and whether $i$ and $j$ are in the same block or distinct isomorphic blocks}.
\end{restatable}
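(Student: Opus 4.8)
The plan is to reduce the prediction for the pair $(i,j)$ to a prediction that only sees a single \onecMPNN block embedding, by chaining \Cref{lem:cmpnn-sbm}, \Cref{thm:MainInProb-new} (equivalently \Cref{cor:stab-node}), and the Lipschitz continuity of $\etaone$. First I would record that the two nodes receive a common \onecMPNN embedding: whether $i,j$ lie in the \emph{same} block $S_a$ (trivially) or in two \emph{distinct} isomorphic blocks $S_a,S_b$ (\Cref{lem:cmpnn-sbm} applied to the $a \leftrightarrow b$ automorphism), there is a single vector $c := \ThetaonebarTcap_W(f)(X^\te_i) = \ThetaonebarTcap_W(f)(X^\te_j)\in\sR^{F_T}$. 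Applying \Cref{thm:MainInProb-new} to $(G^\te,\mF^\te)$ under \Cref{eq:CondOnN-new}, on an event of probability at least $1-\sum_{l=1}^T 2(H_l+1)p$ we get $\|\ThetaonebarTcap_{\mA^\te}(\mF^\te)_i - c\|_\infty \le \delta$ and $\|\ThetaonebarTcap_{\mA^\te}(\mF^\te)_j - c\|_\infty \le \delta$, with $\delta := (C_1+C_2\|f\|_\infty)\sqrt{\log(2N^\te/p)/N^\te}$; this is the content behind \Cref{cor:stab-node}.

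Next I would push the bound through $\etaone$. Writing $a := \ThetaonebarTcap_{\mA^\te}(\mF^\te)_i$, $b := \ThetaonebarTcap_{\mA^\te}(\mF^\te)_j$ and using $L_\etaone$-Lipschitzness of $\etaone$ on its concatenated argument, $|\etaone(a,b) - \etaone(c,c)| \le 2L_\etaone\delta$. The second hypothesis on $N^\te$ in the statement rearranges to exactly $2L_\etaone\delta < |\etaone(a,b) - \tau|$, so $\etaone(c,c)$ lies strictly on the same side of $\tau$ as $\etaone(a,b)$: if $\etaone(a,b) > \tau$ then $\etaone(c,c) > \etaone(a,b) - (\etaone(a,b)-\tau) = \tau$, and symmetrically when $\etaone(a,b) < \tau$, the borderline case being excluded by the hypothesis $\etaone(a,b)\neq\tau$. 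Hence, on this event, the link-prediction decision of \Cref{def:linkpred} for $(i,j)$ coincides with $\Ind(\etaone(c,c) > \tau)$.

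Finally I would read off the conclusion: $\etaone(c,c)$ depends only on the \onecMPNN embedding of the block containing $i$; it never references $j$, and it takes the same value whether $j$ is in $i$'s own block (so that the ground-truth edge probability $W(i,j)$ equals $\mS_{aa}$) or in a distinct block isomorphic to it (so that $W(i,j)$ equals $\mS_{ab}$). Since \Cref{def:iso-sbm} constrains $\mS$ only through equalities that leave $\mS_{aa}$ and $\mS_{ab}$ mutually unconstrained, one may pick $\mS$ so that $\mS_{aa}$ and $\mS_{ab}$ straddle $\tau$; the decision $\Ind(\etaone(c,c)>\tau)$ is nevertheless identical in both cases, which is precisely the asserted failure. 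I expect the main obstacle to be only bookkeeping: fixing the norm convention under which $\etaone$ is Lipschitz so that the two one-sided estimates $\|a-c\|_\infty,\|b-c\|_\infty\le\delta$ assemble into the factor $2L_\etaone\delta$ appearing in the stated threshold on $N^\te$, and verifying that the same-block case is genuinely subsumed by the distinct-isomorphic-block argument (via the identity automorphism in \Cref{lem:cmpnn-sbm}); no step requires an estimate beyond those already established.
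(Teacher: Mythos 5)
Your proposal is correct, and it uses exactly the same ingredients as the paper's proof (\Cref{lem:cmpnn-sbm}, the convergence bound, Lipschitz continuity of $\etaone$, and a case analysis around $\tau$ driven by the hypothesis on $N^\te$), but it routes the comparison slightly differently. The paper fixes the discrete embedding of $i$ and directly compares the two predictions $\etaone(\ThetaonebarTcap_{\mA^\te}(\mF^\te)_i,\ThetaonebarTcap_{\mA^\te}(\mF^\te)_j)$ and $\etaone(\ThetaonebarTcap_{\mA^\te}(\mF^\te)_i,\ThetaonebarTcap_{\mA^\te}(\mF^\te)_{j'})$ for an actual same-block partner $j$ and an actual distinct-isomorphic-block partner $j'$, using \Cref{cor:stab-node} applied to the pair $(j,j')$; you instead anchor both discrete embeddings at the common continuous value $c$ via \Cref{thm:MainInProb-new} and show the decision coincides with $\Ind(\etaone(c,c)>\tau)$. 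Your variant makes the ``regardless of $\mS$ and of which block $j$ lies in'' part of the conclusion more transparent, since the anchor decision manifestly does not see $j$; the paper's variant is marginally stronger in that only the one pair $(i,j)$ needs to satisfy the quantitative threshold, whereas your reduction requires each pair whose decision you want to pin to the anchor to satisfy it. Your norm bookkeeping is also fine: under the paper's convention (Lipschitzness with respect to the $\ell_\infty$ norm of the concatenated argument) you in fact get $|\etaone(a,b)-\etaone(c,c)|\le L_\etaone\max(\|a-c\|_\infty,\|b-c\|_\infty)\le L_\etaone\delta$, which beats the required $2L_\etaone\delta$ margin with room to spare.
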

\Cref{cor:perf} proves that link prediction with structural node embeddings form \onegMPNNs is  unreliable.
That is, for any link prediction method satisfying \Cref{def:linkpred}, as the test graph grows $N^\te \gg 1$, the method will increasingly struggle to give different predictions within and across isomorphic \SBM blocks, even when these probabilities are arbitrarily different in the underlying graph model. In what follows we show that pairwise embeddings can address this challenge.

\section{Size-stability of structural {\em pairwise} embeddings and its advantages}

We have discussed the limitation of \onegMPNNs on node representation for link prediction. %
Now we claim that a joint continuous message passing graph neural network is capable of link prediction in graphon random graph models (\Cref{def:rgm}).
We define the joint continuous message passing graph neural network inspired by the \onecMPNNs for node representations (\Cref{def:cmpnn}). First, we need to define the {\em graphon fraction of common neighbors} for graphon nodes $x$ and $y$,
$
   \cdW(x,y) := \int_{\mathcal{X}} W(x,z)W(y,z)d\mu(z).
$
We only consider graphons $W$ such that there exists $d_{cmin}$ satisfying $\cdW(x,y)\geq d_\text{cmin}>0, \forall x,y\in \mathcal{X}$ in this section. Since we do not have edge feature as in \Cref{def:rgm}, we define the metric-space pair-wise signal as $f^\pairwisenodes(x,y)=1, \forall x,y\in \mathcal{X}$.

\begin{definition}[\twocMPNN]
\label{def:cmpnn-joint}
Let $(W,f)$ be a random graph model as in \Cref{def:rgm} and $\Theta$ be a MPNN as in \Cref{def:MPNN}.
For $t=1,...,T$, define the continuous (pairwise) \twocMPNN $\Thetatwot_W$ as the mapping that maps input pairwise metric-space signals ${f^\pairwisenodes}^{(0)}=f^\pairwisenodes$ to the features in the $t$-th layer by
    \[
    \Thetatwot_W:L^2(\mathcal{X},\mathcal{X})\rightarrow L^2(\mathcal{X},\mathcal{X}), \quad {f^\pairwisenodes}^{(0)} \mapsto  {f^\pairwisenodes}^{(t)},
    \]
    where ${f^\pairwisenodes}^{(t)}$ are defined recursively by
    \begin{align*}
    {g^\pairwisenodes}^{(t)}(x,y)&:=M^\pairwisenodes_W(\Phi^{(t)}({f^\pairwisenodes}^{(t-1)}))(x,y)=\frac{1}{2} \int_\mathcal{X} (\frac{W(y,z)}{\cdW(x,y)}\Phi^{(t)}({f^\pairwisenodes}^{(t-1)}(x,y),{f^\pairwisenodes}^{(t-1)}(x,z)) \\
    &\qquad\qquad\qquad\qquad\qquad\qquad + \frac{W(x,z)}{\cdW(x,y)}\Phi^{(t)}({f^\pairwisenodes}^{(t-1)}(x,y),{f^\pairwisenodes}^{(t-1)}(y,z))) d\mu(z),\\
    {f^\pairwisenodes}^{(t)}(x,y)&:=\Psi^{(t)}({f^\pairwisenodes}^{(t-1)}(x,y),{g^\pairwisenodes}^{(t)}(x,y)).
    \end{align*}
\end{definition}

The intuition of the aggregation function is that two edges with one same node is considered neighbors in a higher-order graph~\citep{morris2019weisfeiler}, and to go from $(x,y)$ to $(x,z)$, we need to transition from $y$ to $z$, which has probability $W(y,z)$. The same holds for going from $(x,y)$ to $(y,z)$.

\begin{restatable}{lemma}{lemtwo}
\label{lem:stationary}
If $\Phi(x,y)=y$ and $\Psi(x,y)= x/y$, then ${f^\pairwisenodes}^{(t)}(x,y)=W(x,y), \ \forall x,y\in\mathcal{X}$ is a stationary point in the \twocMPNN, i.e. if ${f^\pairwisenodes}^{(t-1)}(x,y)=W(x,y)$, then ${f^\pairwisenodes}^{(t)}(x,y)=W(x,y), \ \forall x,y\in\mathcal{X}$.
\end{restatable}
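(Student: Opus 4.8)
The plan is a one-shot substitution: I would verify directly that $W$ is a fixed point of the layer-$t$ update by plugging ${f^\pairwisenodes}^{(t-1)}(x,y)=W(x,y)$ into the two maps $M^\pairwisenodes_W$ and $\Psi^{(t)}$ of \Cref{def:cmpnn-joint}, with the prescribed choices $\Phi(a,b)=b$ and $\Psi(a,b)=a/b$.

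First I would evaluate the message term. Since $\Phi^{(t)}=\Phi$ returns its second argument, the inductive hypothesis gives $\Phi^{(t)}({f^\pairwisenodes}^{(t-1)}(x,y),{f^\pairwisenodes}^{(t-1)}(x,z))={f^\pairwisenodes}^{(t-1)}(x,z)=W(x,z)$ and, symmetrically, $\Phi^{(t)}({f^\pairwisenodes}^{(t-1)}(x,y),{f^\pairwisenodes}^{(t-1)}(y,z))=W(y,z)$. Substituting into the definition of ${g^\pairwisenodes}^{(t)}$ yields
\begin{align*}
{g^\pairwisenodes}^{(t)}(x,y)
&=\frac{1}{2}\int_{\mathcal{X}}\left(\frac{W(y,z)}{\cdW(x,y)}\,W(x,z)+\frac{W(x,z)}{\cdW(x,y)}\,W(y,z)\right)d\mu(z)\\
&=\frac{1}{\cdW(x,y)}\int_{\mathcal{X}}W(x,z)\,W(y,z)\,d\mu(z).
\end{align*}
By the definition of the graphon fraction of common neighbors, $\int_{\mathcal{X}}W(x,z)W(y,z)\,d\mu(z)=\cdW(x,y)$, and since in this section we restrict to graphons with $\cdW(x,y)\geq d_\text{cmin}>0$ the ratio is well defined; hence ${g^\pairwisenodes}^{(t)}(x,y)=1$ for all $x,y\in\mathcal{X}$.

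I would then close with the update step: ${f^\pairwisenodes}^{(t)}(x,y)=\Psi^{(t)}({f^\pairwisenodes}^{(t-1)}(x,y),{g^\pairwisenodes}^{(t)}(x,y))=\Psi(W(x,y),1)=W(x,y)/1=W(x,y)$, which is exactly the claim. There is essentially no obstacle here; the only points requiring a word are that $\cdW(x,y)>0$ (so the message aggregation is well defined, guaranteed by the standing assumption $d_\text{cmin}>0$) and that the second argument handed to $\Psi(a,b)=a/b$ is nonzero, which holds because it equals $1$. Measurability and integrability of the integrands follow immediately from $W\in[0,1]$ and $f^\pairwisenodes\equiv 1$. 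I might also remark, though it is outside what the lemma asserts, that starting from the constant initialization ${f^\pairwisenodes}^{(0)}\equiv 1$ the first layer does not in general reproduce $W$, so $W$ is a fixed point of the iteration but not necessarily its limit unless the iteration is already initialized at $W$.
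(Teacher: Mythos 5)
Your proposal is correct and is essentially identical to the paper's proof: both substitute ${f^\pairwisenodes}^{(t-1)}=W$ into the message aggregation, observe that the two terms combine to $\frac{1}{\cdW(x,y)}\int_{\mathcal{X}}W(x,z)W(y,z)\,d\mu(z)=1$, and conclude via $\Psi(W(x,y),1)=W(x,y)$. Your added remarks on well-definedness (via $d_\text{cmin}>0$) and on the distinction between a fixed point and a limit are sensible but not needed for the claim.
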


We define the corresponding \twogMPNN as follows. First we define the \emph{fraction of common neighbors} between nodes $i$ and $j$ as
$ 
    \cdA_{i,j}=\frac{1}{N}\sum_{z=1}^N A_{i,z}\cdot A_{j,z}.
$
If two nodes do not have common neighbors, then we set $\cdA_{i,j}=\frac{1}{N}$ to avoid computation error. Further, we define ${\vfnull^\pairwisenodes}_{i,j}=1$ $\forall i,j\in V$ for any graph $G$, and $\mF^\pairwisenodes=({\vfnull^\pairwisenodes}_{i,j})_{i,j\in V}$ as the pair-wise graph signals.
\begin{definition}[\twogMPNN]
\label{def:gmpnn-joint}Let $(G,\mF)$ be a graph with graph signals as in \Cref{def:rgm} and $\Theta$ be a MPNN as in \Cref{def:MPNN}.
For $t=1,...,T$ layers we define the \twogMPNN $\Thetatwot_A$ as the mapping that maps input pairwise graph signals ${\mF^\pairwisenodes}^{(0)}={\mF^\pairwisenodes}$ to the features in the $t-th$ layer by
    $$\Thetatwot_A:\mathbb{R}^{N^2\times F_0}%
    \rightarrow \mathbb{R}^{N^2\times F_t}%
    , {\mF^\pairwisenodes}^{(0)} \mapsto  {\mF^\pairwisenodes}^{(t)}=({\vfnull^\pairwisenodes}_{i,j}^{(t)})_{i,j=1}^N$$
    where ${\vfnull^\pairwisenodes}^{(t)}$ are defined recursively by the following function,
    \begin{align*}
    {m^\pairwisenodes}_{i,j}^{(t)} &:=\frac{1}{2N}\sum_{z=1}^N \frac{A_{j,z}}{\cdA_{i,j}} \Phi^{(t)}({\vfnull^\pairwisenodes}_{i,j}^{(t-1)},{\vfnull^\pairwisenodes}_{i,z}^{(t-1)})+\frac{A_{i,z}}{\cdA_{i,j}} \Phi^{(t)}({\vfnull^\pairwisenodes}_{i,j}^{(t-1)},{\vfnull^\pairwisenodes}_{j,z}^{(t-1)}),\\
    {\vfnull^\pairwisenodes}_{i,j}^{(t)}&:=\Psi^{(t)}({\vfnull^\pairwisenodes}_{i,j}^{(t-1)},{m^\pairwisenodes}_{i,j}^{(t)}).
    \end{align*}
\end{definition}
\CRupdate{Next, \Cref{thm:MainInProb-new2} shows that these discrete joint representations \twogMPNN converge to the continuous pairwise representation \twocMPNN under the causal DAG of \Cref{fig:DAG}.}
\begin{restatable}[OOD convergence without in-distribution convergence]{theorem}{thmtwo}
\label{thm:MainInProb-new2}
\update{For a random graph model $(W,f)$ satisfying \Cref{def:rgm}, let $N^\tr$ be a random variable defining the distribution of graph sizes in training. Define the test distribution $(G^\te,\mF^\te) \sim (W,f)$ through the causal graph in \Cref{fig:DAG} as an interventional change to obtain larger test graph sizes where \CRupdate{ $\min(\supp(N^\te)) \CRupdate{\gg} M_\tr = \max(\supp(N^\tr))$} (which means any test graph is much larger than the largest possible training graph).}
Let $\Theta = ((\Phi^{(l)})_{l=1}^T, (\Psi^{(l)})_{l=1}^T)$ be a MPNN as in \Cref{def:MPNN} with $T$ layers such that  $\Phi^{(l)}$ 
and  $\Psi^{(l)}$ that are learned from the training data and
are Lipschitz continuous with Lipschitz constants $\LipPhi^{(l)}(M_\tr)$ and $\LipPsi^{(l)}(M_\tr)$. Let \twogMPNN $\ThetatwoTcap_W$ and \twocMPNN $\ThetatwoTcap_W$ be as in \Cref{def:gmpnn-joint,def:cmpnn-joint}. For a random graph model $(W,f)$ as in \Cref{def:rgm} with $d_\text{cmin} > 0$.
Let $X^\te_1,...,X^\te_{N^\te}$ and $\mA^\te$ be as in \Cref{def:rgm}. Let $p\in(0, \frac{1}{ \sum_{l=1}^T2(H_l+1)})$.
Then, if 
$\frac{\sqrt{N^\te}}{\sqrt{\log{(2{(N^\te})^2/p)}}} \geq  \frac{4\sqrt{2}}{d_\text{cmin}}$, 
we have with probability at least $1- \sum_{l=1}^T 2(H_l + 1) p$,
\[
\begin{aligned}
\label{eq:deltaAWtwo}
\deltaAWtwo\! = \!\!\!\!\!\! \max_{i,j=1,...,N^\te} \|\ThetatwoTcap_A({\mF^\pairwisenodes})_{i,j}\!-\!\ThetatwoTcap_W(f^\pairwisenodes)(X^\te_i,X^\te_j)\|_\infty \! \leq \!(C_3\!+\!C_4\|f^\pairwisenodes\|_\infty)\frac{\sqrt{\log (2(N^\te)^2/p)}}{\sqrt{N^\te}},
\end{aligned}
\]
where the constants $C_3$ and $C_4$ are defined in the Appendix and depend on $\{\LipPhi^{(l)}(M_\tr),\LipPsi^{(l)}(M_\tr)\}_{l=1}^T$ and the distribution of $(G^\tr,\mF^\tr)$.
\end{restatable}
Hence, as \update{the test graph size} $N^\te$ gets larger w.r.t.\ $N^\tr$ \update{(that is, as we intervene on the causal DAG of \Cref{fig:DAG} to change the support of the distribution of $N$ in order to obtain larger test graphs)}, the link predictor \update{learned in the training data} using \twogMPNN will converge to a continuous method (\twocMPNN) that can predict links in OOD tasks (i.e., $W(X_i^\te,X_j^\te)$ is a stationary solution of \twocMPNN per \Cref{lem:stationary}). This convergence is also observed in our empirical results.
\vspace{-5pt}
\section{Further Related Work} 
\vspace{-5pt}
In what follows we describe works related to learning transferability in GNNs. 
The concept of transferability of GNN is introduced by \citet{ruiz2020graphon, levie2021transferability}, which state that if two graphs represent same phenomena (e.g., are sampled from the same distribution), then a transferable GNN has approximately the same predictive performance on both graphs. 
This is closely related to in-distribution generalization capabilities of GNNs to unseen test data, i.e., generalization error when train and test data come from the same distribution. Existing works~\citep{keriven2020convergence,ruiz2020graphon,ruiz2021graph,maskey2021transferability} prove the transferability for spectral-based GCNs under graphon models, and \citet{maskey2022generalization} extends these results to more general message passing GNNs. \CRupdate{The GNN smoothness conditions needed to prove uniform convergence of node-embedding equivariant GNNs in \citet{maskey2022generalization} means their GNNs would be unable to perform {\em in-distribution} link prediction in some tasks (such as the graphs in \Cref{def:iso-sbm}). 
However, in practice, we observe (\Cref{sec:expe}) that GNNs are capable of performing these in-distribution link prediction tasks. 
\update{Our results are also based on general message passing GNNs. 
Our goal (OOD link prediction) is, however, significantly different than these prior works, which focus on in-distribution graph and node classification.}
The link prediction challenge for node-embedding equivariant GNNs is either in symmetric graphs (\citet{srinivasan2019equivalence}) or OOD (this work).
\Cref{thm:MainInProb-new} says they vanish as the test graphs grow larger, but \Cref{thm:MainInProb-new2} says that our pairwise equivariant representation is capable of performing these OOD link prediction task}.
Related works relating to the representation power, higher order structural and positional link prediction methods (not already covered in our introduction) can be found in \Cref{appx:related} due to space constraints.

\vspace{-5pt}
\section{Empirical Evaluation}
\label{sec:expe}
\vspace{-5pt}
In what follows we empirically validate our theoretical results in two parts. We implement all our models in Pytorch Geometric~\citep{FeyLenssen2019} and make it available\footnote{\scriptsize \url{https://github.com/yangzez/OOD-Link-Prediction-Generalization-MPNN}}.
Due to space constraints we relegate a detailed description of our experiments to the Appendix. 

\vspace{-2pt}
{\bf Convergence and stability.}
First we will empirically validate \Cref{thm:MainInProb-new,thm:MainInProb-new2} and \Cref{cor:stab-node}. 
\CRupdate{Consider an \SBM (\Cref{def:SBM}) with three blocks ($r=3$) and $\mS_{a,a} = 0.55$,  $a=1,2,3$, $\mS_{1,2}=\mS_{2,1}=0.05$, $\mS_{1,3}=\mS_{3,1}=0.02$. Note that one and three are isomorphic blocks (see \Cref{def:iso-sbm}).
We use a randomly initialized GraphSAGE~\citep{hamilton2017inductive} GNN model for node embedding, and test both the $\Phi$ and $\Psi$ of \Cref{lem:stationary}, and a scenario where $\Psi$ is a randomly-initialized MLP for pairwise embeddings.}

Figures~\ref{fig:conv}(a-c) show log-log plots of the convergence of \gMPNNs to their continuous \cMPNN counterparts as the test graph size $N^\te$ increases.
The empirical approximation errors $\deltaAWone$ (\Cref{thm:MainInProb-new}) (\Cref{fig:conv}(a)) and $\deltaAWtwo$ (\Cref{thm:MainInProb-new2}) are shown as a function of the test graph size $N^\te=2^n$, $n=5,...,13$. The empirical results show agreement with the theory since $\deltaAWone$ and $\deltaAWtwo$ are bounded above by $O(\sqrt{\log N^\te}/\sqrt{N^\te})$, which is approximated by the slope $-1/2$ in a log-log plot. 
Figures~\ref{fig:stab-hist}(d-e) show histograms of the difference between \onegMPNN embeddings of different nodes in $G^\te$. Let $\Delta^\onenode_{i,j} := \ThetaonebarTcap_A(\mF)_i-\ThetaonebarTcap_A(\mF)_j$ for $i,j\in V^\te$, $\Delta^\onenode_{i,j}\in \sR^{F_T}$ and further define ${\Delta^\onenode_\text{iso (resp. non-iso)}}_{i,j} := (\Delta^\onenode_{i,j})_{\argmax_k |(\Delta^\onenode_{i,j})_k|}$, where $k \in \{1,\ldots,F_T\}$ is the dimension of the embedding. We use subscript \texttt{iso} (resp.\ \texttt{non-iso}) when $i,j\in V^\te$ are in isomorphic (resp.\ non-isomorphic) SBM blocks (\Cref{def:iso-sbm}). %
As $N^\te$ increases, \Cref{fig:stab-hist}(d) shows that embeddings between isomorphic blocks converge, validating \Cref{cor:stab-node}, while \Cref{fig:stab-hist}(e) shows that non-isomorphic blocks do not.

\begin{figure}[t!!]
    \centering
    \includegraphics[height=1in,width=1in]{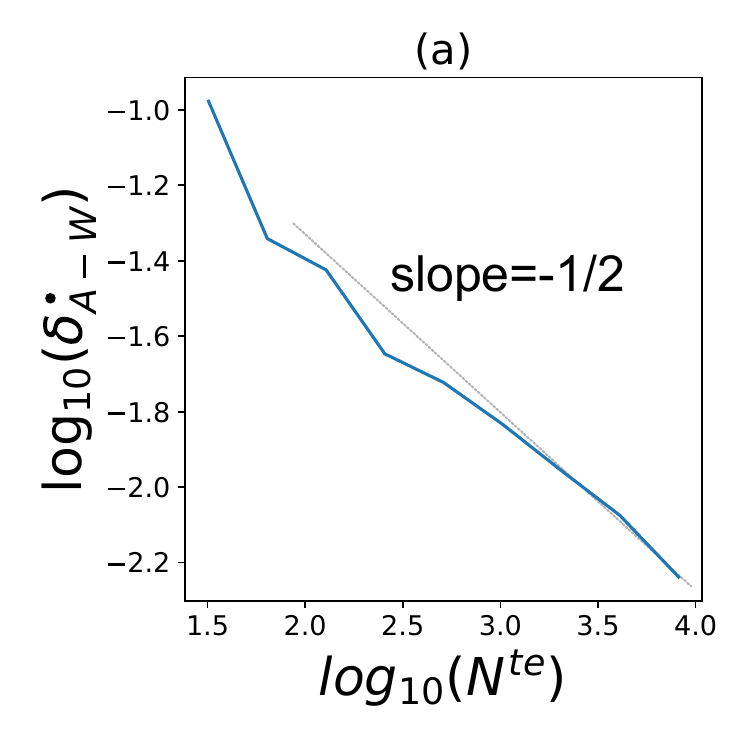}
    \includegraphics[height=1in,width=1in]{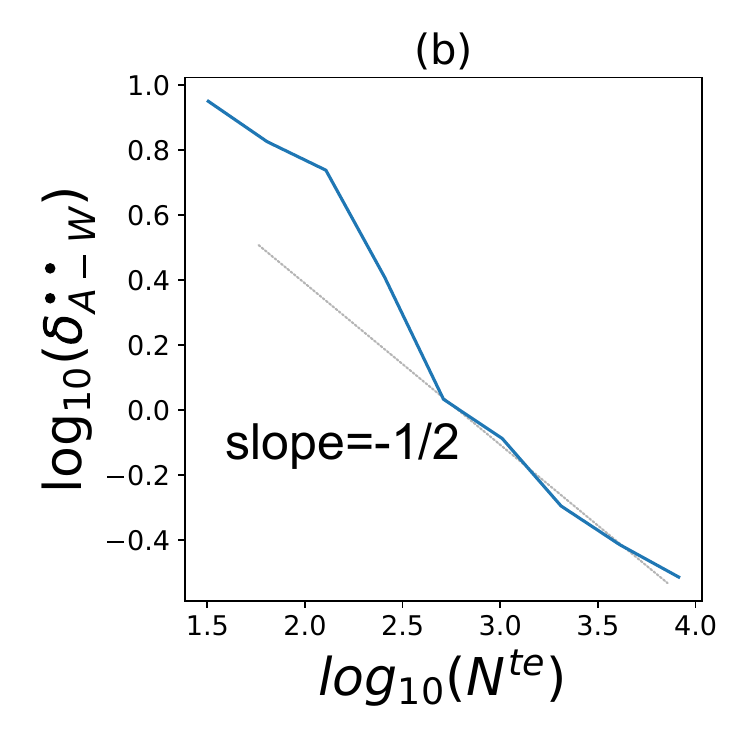}
    \includegraphics[height=1in,width=1in]{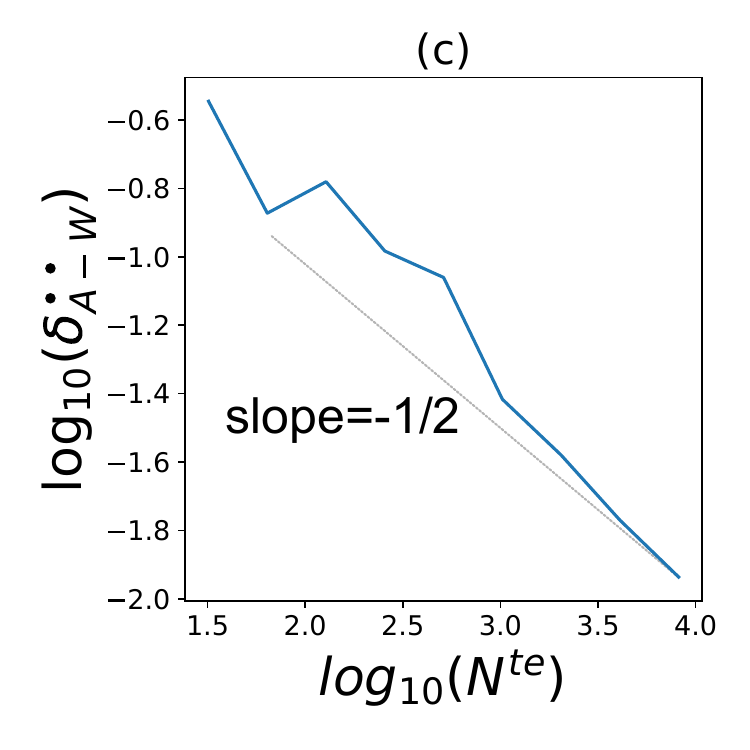}
    \includegraphics[height=1in,width=1in]{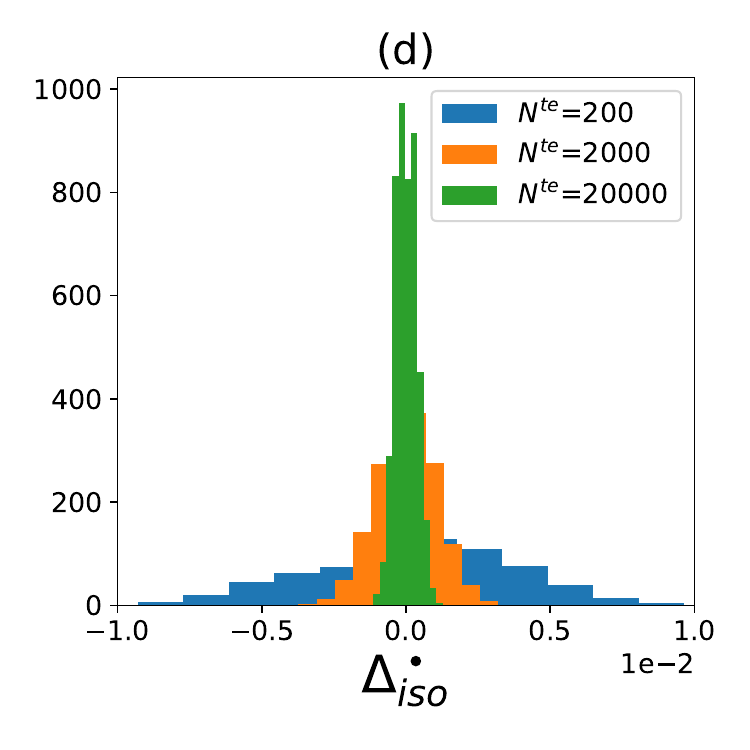}
    \includegraphics[height=1in,width=1in]{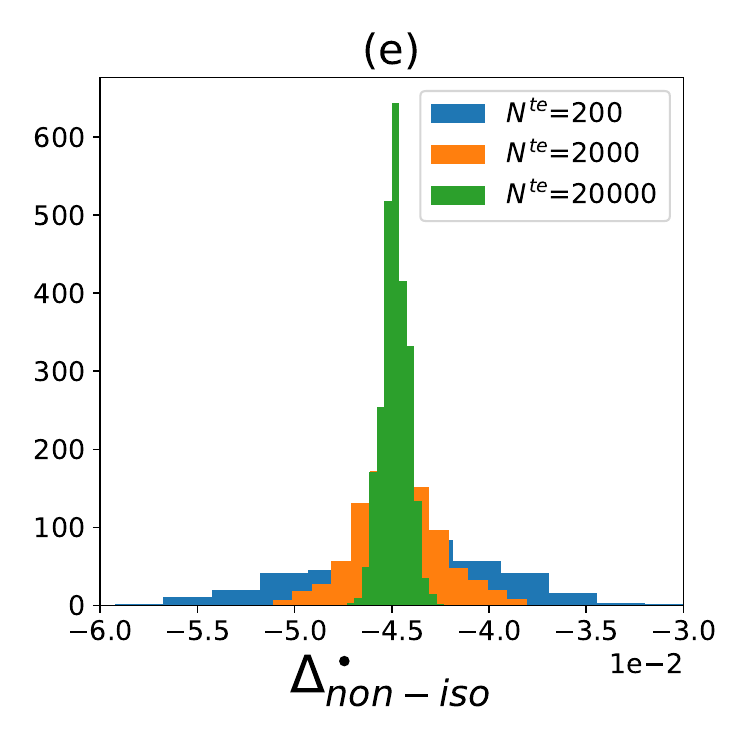}
    \vspace{-5pt}
\caption{{\bf Experimental agreement with theory:} (a) shows $\deltaAWone$ (\Cref{eq:deltaAWone}) of a GraphSAGE GNN as a function of $N^\te$; (b) shows $\deltaAWtwo$ (\Cref{eq:deltaAWtwo}) with the \twogMPNN of \Cref{lem:stationary} as a function of $N^\te$; (c) replicates (b) with $\Psi$ as a randomly-initialized neural network. Results shows close agreement with \Cref{eq:deltaAWone,eq:deltaAWtwo} that predicts slope $\approx -1/2$ in log-log scale for large $N^\te$; (d) shows stable node representations between isomorphic SBM blocks, while (e) shows constant difference in node representations between non-isomorphic SBM blocks, which validate \Cref{cor:stab-node}.}
    \label{fig:conv}
    \label{fig:stab-hist}
\vspace{-20pt}
\end{figure}

{\bf Link prediction performance evaluation with \SBMs (in-distribution and OOD).}
\update{In what follows we introduce empirical results using a \SBM similar in the previous setting. Details can be found in \Cref{appx:linkprediction}.} We start by sampling the training graph $(G^\tr,\mF^\tr)$ with $N^\tr=10^3$ nodes. We randomly hide $10\%$ of $E^\tr$ from the original graph $G^\tr$ for link prediction purpose since the goal of link prediction is to predict possible missing links that is not observed in the original graph. We call these edges $E^\text{hid-tr}$. 
Then we split $E^\text{hid-tr}$ into positive train (80\%) and validation (10\%) edges (we reserve 10\% of $E^\text{hid-tr}$ for the transductive test scenario), and uniformly sample the same number of across-block non-edges as negative train and validation edges. 
The embedding method \onegMPNN (resp.\ \twogMPNN) along with link predictor $\etaone$ (resp.\ $\etatwo$) are trained in an end-to-end manner for predicting positive and negative edges in training using cross-entropy loss.
Our experiments consider three scenarios (in all scenarios we use the same number of negative test edges as positive test edges, sampled from non-edges in $G^\te$ with endpoints in different isomorphic blocks):
(i) (In-distribution) transductive scenario where $G^\te = G^\tr$, where positive test edges are the 10\% reserved in $E^\text{hid-tr}$ not used in training or validation;
(ii) In-distribution inductive scenario where $G^\te$ is sampled from the same \SBM with $N^\te=N^\tr$, where we also hide $10\%$ of the edges and sample $0.1|E^\text{hid-tr}|$ positive test edges from $E^\text{hid-te}$ (for fair comparison across all scenarios); (c) OOD inductive scenario where $G^\te$ is sampled from the same \SBM with $N^\te=10\times N^\tr$, where we also hide $10\%$ of the edges and sample $0.1|E^\text{hid-tr}|$ positive test edges from $E^\text{hid-te}$(for fair comparison across all scenarios).

For {\em structural node embeddings} we consider GraphSAGE~\citep{hamilton2017inductive}, GCN~\citep{Kipf2016} (without positional features), GAT~\citep{velickovic2018graph} and GIN~\citep{xu2018powerful} as the representatives of \onegMPNN models.
The link predictor $\etaone$ is as feedfoward network (with 3 hidden layers and 10 neurons each) that receives the two node embeddings as input, and has link prediction threshold $\tau=0.5$ (see \Cref{def:linkpred} for details). %
 
For {\em structural pairwise embeddings} we choose our proposed \twogMPNN method of \Cref{def:gmpnn-joint}, since we can prove that our approach is theoretically sound in \Cref{lem:stationary}. We test \twogMPNN in two versions: The $\Phi$ and $\Psi$ functions in \Cref{lem:stationary} (denoted {\em fixed $\Psi$}) and a feedforward neural network for $\Psi$ with 2 hidden layers and 5 neurons each (denoted {\em learn $\Psi$}). 
The link predictor $\etatwo$ is the same as $\etaone$ except it just takes one pairwise embedding as input, rather than two node embeddings.

\Cref{table:results} presents our empirical results. The oracle predictor knows the graphon values $W(X_i^\te, X^\te_j)$. Our evaluation metrics include the Matthews correlation coefficient (mcc)~\cite{matthews1975comparison}, balanced accuracy, and Hits@$K$ for $K=10,50,100$ that counts the ratio of positive edges ranked at the $k$-th place or above against all negative edges. Note that \onegMPNN structural node representations can very accurately predict links in the transductive tasks, and still performs reasonably well in inductive in-distribution tasks.
However, as expected from \Cref{cor:perf}, this performance suffers significantly as $N^\te$ becomes $10\times$ larger than $N^\tr$. Now all \onegMPNN methods produce predictors that are no better than a random guess over all metrics (e.g., see OOD mcc and accuracy (in red)). 
In contrast, the \twogMPNN is able to consistently offer good performance on both in-distribution and OOD tasks.

\begin{table*}[t!!!]
\centering
\caption{Test performance over 50 runs of node and pairwise \gMPNNs for in-distribution and OOD link prediction over \SBM graphs. %
Methods marked with $*$ indicate best result out of distinct configurations detailed in the Appendix.
}
\vspace{-5pt}
\label{table:results}
\resizebox{1.\textwidth}{!}{
\begin{tabular}{lllrrrrrrr}
& & &
\multicolumn{5}{c}{Training graph size $\bm{N^\text{tr}=10^3}$}\\
\cmidrule(lr){4-8}
\multicolumn{2}{c}{Tasks} & 
\multicolumn{1}{c}{Model} & 
\multicolumn{1}{c}{Hit@10($\%$)} &
\multicolumn{1}{c}{Hit@50($\%$)} &
\multicolumn{1}{c}{Hit@100($\%$)} &
\multicolumn{1}{c}{mcc.($\%$)} &
\multicolumn{1}{c}{balanced acc.($\%$)} \\
\toprule
\multirow{18}{*}{\rotatebox[origin=c]{90}{In-distribution link prediction}} &
\multirow{9}{*}{\rotatebox[origin=c]{90}{Transductive}} 
   & GraphSAGE* &  95.55(\phantom{0}0.52) & 95.93(\phantom{0}0.73) & 96.14(\phantom{0}0.74) & {\bf 95.42}(\phantom{0}0.37) & {\bf 97.66}(\phantom{0}0.19) \\
&& GCN* & 93.15(14.57) & 93.99(13.08) &  94.35(12.72) & 92.41(14.72) & 95.97(\phantom{0}8.24)  \\
&& GAT* & 93.77(13.03) & 94.01(13.02) & 94.14(13.03) & 90.94(16.09) & 95.26(\phantom{0}8.38)  \\
&& GIN* &   95.77(\phantom{0}0.59) &  96.09(\phantom{0}0.58) & 96.28(\phantom{0}0.59) & 95.48(\phantom{0}0.41) & 97.69(\phantom{0}0.22)   \\
\cmidrule{3-8}
&& {\bf \twogMPNN(fixed $\Psi$)} & 93.76(\phantom{0}0.55) & 94.17(\phantom{0}0.51) & 94.51(\phantom{0}0.49) & 93.64(\phantom{0}0.53) & 96.72(\phantom{0}0.28) \\
&& {\bf \twogMPNN(learn $\Psi$)}  & {\bf 96.71(\phantom{0}0.32)} & {\bf 96.88(\phantom{0}0.31)} & {\bf 97.00(\phantom{0}0.30)} & 94.23(\phantom{0}0.55) & 97.03(\phantom{0}0.29)  \\
\cmidrule{3-8}
&& {\bf Oracle} & 96.92(\phantom{0}0.36) & 96.92(\phantom{0}0.36) & 96.92(\phantom{0}0.36) & 93.74(\phantom{0}0.42) & 96.77(\phantom{0}0.22) \\ 
\cmidrule{2-8}
&\multirow{9}{*}{\rotatebox[origin=c]{90}{Inductive $N^\te=N^\tr$}} 
   & GraphSAGE* & 47.38(39.08) & 52.13(38.87) & 54.94(37.83) & 19.34(43.19) & 61.46(20.17) \\
&& GCN*  & 66.29(37.67) & 68.52(35.87) & 69.92(35.12) & 31.76(35.12) & 67.21(22.75) \\
&& GAT* &  40.05(39.05) & 41.34(39.39) & 41.96(39.54) & 19.44(35.22) & 59.52(16.94)\\
&& GIN* &  39.33(34.62) & 42.93(33.86) & 43.90(33.72) & 18.59(39.43) & 59.79(18.24) \\
\cmidrule{3-8}
&& {\bf \twogMPNN(fixed $\Psi$)} & 93.85(\phantom{0}0.49) & 94.23(\phantom{0}0.51) & 94.55(\phantom{0}0.49) & 93.74(\phantom{0}0.48) & 96.77(\phantom{0}0.25) \\
&& {\bf \twogMPNN(learn $\Psi$)} & {\bf 96.71}(\phantom{0}0.30) & {\bf 96.91}(\phantom{0}0.28) & {\bf 97.02}(\phantom{0}0.27) & {\bf 94.23}(\phantom{0}0.59) & {\bf 97.03}(\phantom{0}0.31)  \\
\cmidrule{3-8}
&&  {\bf Oracle} & 97.01(\phantom{0}0.31) & 97.01(\phantom{0}0.31) & 97.01(\phantom{0}0.31) & 93.87(\phantom{0}0.39) & 96.84(\phantom{0}0.20) \\
\cmidrule{1-8}
\multirow{9}{*}{\rotatebox[origin=c]{90}{{\bf OOD link prediction}}} &
\multirow{9}{*}{\rotatebox[origin=c]{90}{\makecell{\bf Inductive  $N^\te=10^4$}}}
  & GraphSAGE* & \phantom{0}9.97(19.47) & 11.73(21.80) & 12.98(23.70)& {\color{red}-6.56}(\phantom{0}5.12) & {\color{red}49.32}(\phantom{0}0.60)\\
&& GCN*  & 39.29(31.33) & 42.15(30.81) & 44.19(30.97) & {\color{red}-4.88}(14.84) & {\color{red}50.33}(\phantom{0}6.72) \\
&& GAT*  & 27.31(26.93) & 28.13(26.78) & 28.72(26.93) & {\color{red}-2.00}(\phantom{0}8.96) & {\color{red}50.20}(\phantom{0}3.37)\\
&& GIN* & \phantom{0}0.00(\phantom{0}0.00) & \phantom{0}0.00(\phantom{0}0.00) & \phantom{0}0.00(\phantom{0}0.00)& {\color{red}-3.93}(\phantom{0}5.12) & {\color{red}49.59}(\phantom{0}0.57) \\
\cmidrule{3-8}
&& {\bf \twogMPNN(fixed $\Psi$)} &  {\bf 96.74}(\phantom{0}0.07) & {\bf 96.93}(\phantom{0}0.04) & {\bf 97.01}(\phantom{0}0.04) & {\bf 93.76}(\phantom{0}0.05) & {\bf 96.78}(\phantom{0}0.03)\\
&& {\bf \twogMPNN(learn $\Psi$)} & {\bf 96.97}(\phantom{0}0.04) & {\bf 97.02}(\phantom{0}0.04) & {\bf 97.08}(\phantom{0}0.04) & {\bf 93.94}(\phantom{0}0.67) & {\bf 96.88}(\phantom{0}0.35)   \\
\cmidrule{3-8}
&&  {\bf Oracle} & 96.96(\phantom{0}0.03) & 96.96(\phantom{0}0.03) & 96.96(\phantom{0}0.03) & 93.77(\phantom{0}0.04) & 96.79(\phantom{0}0.02) 
\vspace{1pt}
\\
\bottomrule
\end{tabular}
}
\vspace{-15pt}
\end{table*}

\update{
{\bf Link prediction performance evaluation with ogbl-ddi (in-distribution and OOD).}
In what follows we introduce empirical results using the ogbl-ddi dataset, which represents a drug-drug interaction network. For the purpose of performing OOD tasks, we start by sampling $10\%$ of the nodes ($427$ nodes) and its induced subgraph to be the training graph.
Further experimental details can be found in \Cref{appx:ogbl-ddi}. The in-distribution inductive scenario has $G^\te$ constructed as an induced subgraph with $N^\te=N^\tr$ nodes from the remaining ogbl-ddi graph. Our OOD inductive scenario has $G^\te$ as the induced subgraph without the training nodes ($N^\te=3840$ nodes).
The test edges are obtained by applying the original edge split on the newly induced test subgraph, where we further down-sample to the same amount of test edges as in our in-distribution scenarios for fair comparison across all scenarios.
\Cref{table:results-ddi} in the Appendix presents our empirical results on the ogbl-ddi link prediction task. All \onegMPNN methods performs worse in inductive settings than transductive settings, and suffer much worse performance in OOD transductive setting except GCNs.
In contrast, the \twogMPNN is able to consistently offer good performance on both in-distribution and OOD tasks, showing that the theoretical results are not limited to SBM  models.}

\vspace{-5pt}
\section{Conclusions}
\vspace{-5pt}
This work studied and provided the first theoretical framework for the task of out-of-distribution (OOD) link prediction, where test graphs are larger than training graphs.
Using non-asymptotic bounds, this work showed that OOD link prediction methods using structural node embeddings given by message-passing GNNs converge to link predictors that may perform no better than random guesses. \update{The work also proposed a theoretically-sound structural pairwise embedding with a message-passing algorithm which is able to perform our OOD link prediction task by being approximately invariant to interventions on test graph sizes\CRupdate{, as the discrete joint embedding converges to the continuous one. This means that as graph sizes grow in test (OOD), it is still possible to find neural networks parameters that allows our joint representation to converge to the true link probability. We show that the same is not guaranteed for node-embedding equivariant message-passing GNNs.}}
Extensive empirical evaluation showed agreement with these theoretical results. We do not foresee adverse social impacts for this theoretical work\CRupdate{, but it does raise awareness of the shortcomings of node-embedding equivariant massage-passing GNNs for link prediction tasks in applications such as recommender systems}. 

\putbib[relate_paper,link_prediction,invariance,causality,gnns]

\newpage
\section*{Appendix of ``{\em \mytitle{}}''}
\appendix
In \Cref{appx:related}, we introduce more related work that has not been discussed in the main paper. In \Cref{appd:exp}, we provide more details in experiments set up and model training. In \Cref{sec:def-app}, we introduce notations and definitions that we will use throughout the rest of the appendix. In \Cref{appd:large}, we show large random and real world graphs have few isomorphic nodes. In \Cref{appd:thm1}, we prove the convergence results (\Cref{thm:MainInProb-new}) for \onegMPNN when different aggregation functions are used. In \Cref{appd:link-proof}, we prove the results for hardness of link prediction for \onegMPNN. Finally, we prove the convergence results for \twogMPNN and \twocMPNN (\Cref{thm:MainInProb-new2}) in \Cref{appd:pair}.

\section{Further Related Work}
\label{appx:related}
\paragraph{Representation power of GNNs.} The representation power of GNNs is widely studied in recent years. \cite{xu2018powerful,morris2019weisfeiler} first show that gMPNN is no more powerful than 1-WL test \cite{weisfeiler1968reduction}. Many works have been proposed~\citep{morris2019weisfeiler,maron2019provably,pmlr-v97-murphy19a,morris2021weisfeiler} to increase the representation power of GNNs for graph representation, but little has studied on representation power for node and link prediction.

\paragraph{Structural link prediction.} Existing link prediction methods assume that, with powerful enough node representations, combining them can guarantee powerful link representations~\citep{kipf2016variational,grover2016node2vec}. However, \citet{hu2020open} empirically shows that these approaches perform worse than simple heuristic approaches such as Common Neighbor and Adamic-Adar \citep{liben2007link,adamic2003friends}. Theoretically, \citet{srinivasan2019equivalence} was the first work to formally analyze the difference between structural node and link representations, and show that even most-expressive structural node representations are not able to perform link prediction tasks in graphs with high degree of symmetry. In order to remedy this, the state-of-the-art link prediction methods like SEAL~\citep{zhang2018link} use GNNs but transform the task into a graph classification task (the link is an attribute of an induced subgraph around the two target end nodes), where each node in the subgraph are labeled according to their distances to the pair of target end nodes. \citet{zhang2021labeling} unifies such approaches~\citep{zhang2018link,li2020distance,you2021identity} through a method they call ``labeling trick'', which they show is able to learn structural link representations with a node-most expressive GNN.

\paragraph{Ability of GNNs to emulate graph algorithms as graph sizes increase.}
Recently, \citet{xu2020neural} shows that GNNs can extrapolate in algorithmic-related tasks as the graph size grows, if the GNN uses max as an aggregator (rather than mean and sum we considered in this paper).
Unfortunately, our  \Cref{def:gmpnn} of \onegMPNN does not allow max aggregators, in part because it is unclear how one could reach stability using the max aggregator. 
Fortunately, while we could not obtain theoretical results using the max aggregator, we can test it empirically. \Cref{table:results-new} reproduces all our empirical results using the max aggregator 
(on GraphSAGE and GAT, since these are the only GNNs designed for the max aggregator).
Our experiments show that the max aggregator, just like the mean and sum aggregators, shows poor OOD performance as test graph sizes increase. Other works \citet{DBLP:conf/icml/BevilacquaZ021,pmlr-v139-yehudai21a} also talk about graph extrapolation as size grows but focus on graph classification. \update{\citet{wu2022discovering, wu2022handling, DBLP:journals/corr/abs-2202-05441} also explore environment-invarian GNN representations for graph classification or node classification tasks. These works differ in that they focus on node classification and graph classification. As \citet{srinivasan2019equivalence} shows, link prediction tasks are significantly different from graph and node classification.} Moreover, whether or not one can prove that the max aggregator is or is not able to perform our OOD task is left as future work.

\paragraph{Positional node embeddings for link prediction.}
Another way to perform link prediction tasks is to use positional node embeddings (PE), which preserves relative positions of the nodes in a graph. 
The original link predictor in \citet{Kipf2016} uses positional embedding as node attributes for this type of task.
However, such approaches can lose the desired permutation equivariance property in graph models. Traditional PE methods include DeepWalk~\citep{perozzi2014deepwalk} and matrix factorization~\citep{mnih2007probabilistic,ahmed2013distributed}. \citet{you2019position} proposes position-aware GNN that only aggregates message from randomly selected anchor nodes, which has poor generalization ability on inductive tasks. \citet{srinivasan2019equivalence} proves that using set representations of PE embeddings over all permutations of the graph input and all random decisions made by the embedding algorithm (e.g., the set of all eigenvectors of randomly permuted graph Laplacian matrices and random eigenvectors due to geometric multiplicity of eigenvalues) can achieve the desired permutation equivariance for link prediction. \citet{dwivedi2021generalization,kreuzer2021rethinking} propose PE that randomly flips of the sign of eigenvectors to alleviate sign ambiguity and pass it to a transformers architecture~\citep{NIPS2017_3f5ee243}.
\citet{lim2022sign} proposes a representation that is invariant to the elements of the set described by \citet{srinivasan2019equivalence} in order to achieve equivariant representations for spectral node embeddings. 
\citet{wang2022equivariant} proposes a provable solution for using PEs to learn equivariant and stable representation using separate channels in GNN layers. \citet{dwivedi2022graph} turns to the idea of learning PE that can be combined with structural representations, and design architecture to decouple structural and positional representations in order to improve both representations.

\section{Further Experiment details}
\label{appd:exp}
In this section we present the details of the experimental section, discussing implementation details.
Training was performed on NVIDIA Telsa P100, GeForce RTX 2080 Ti, and TITAN V GPUs.

\subsection{Model implementation}

All neural network approaches, including the models proposed in this paper, are implemented in PyTorch~\citep{pytorchcitation} and Pytorch Geometric~\citep{FeyLenssen2019} (available
 respectively under the BSD and MIT license).

Our GIN~\citep{xu2018powerful}, GCN~\citep{Kipf2016}, GAT~\citep{velickovic2018graph} and GraphSAGE~\cite{hamilton2017inductive} implementations are based on their Pytorch Geometric implementations. We also consider max aggregation as proposed by~\citet{xu2021how} for extrapolations although it does not fit our theoretical framework.

We use the Adam optimizer to optimize all the neural network models. We use the neural network weights that achieve best validation-set performance for prediction.

\subsection{Empirical validation for convergence and Stability} \label{appx:convergence}

Consider an \SBM (\Cref{def:SBM}) with three blocks ($r=3$) and $\mS_{a,a} = 0.55$,  $a=1,2,3$, $\mS_{1,2}=\mS_{2,1}=0.05$, $\mS_{1,3}=\mS_{3,1}=0.02$. The probability a node belongs to block one or three is $0.45$, while for block two it is $0.1$.
Note that one and three are isomorphic blocks (see \Cref{def:iso-sbm}).
Since our results are valid for any \gMPNN functions $\Theta$, 
for our first experiment with node embeddings we use a randomly initialized GraphSAGE~\citep{hamilton2017inductive} GNN model, where following standard GNN procedures we initialize node features as size-normalized degrees (where $d_i = \frac{1}{N}\sum_{j=1,...,N}A_{i,j}$).
For the experiment with pairwise embeddings, we test both the $\Phi$ and $\Psi$ of \Cref{lem:stationary}, and a scenario where $\Psi$ is a randomly-initialized feedforward neural network.
Later in this section we show how to efficiently compute the exact \onecMPNN and \twocMPNN embeddings of our GraphSAGE %
and \twogMPNN models.

The validation procedure follows \citet{maskey2022stability}. We use SBM graphs as examples. Consider an \SBM (\Cref{def:SBM}) with three blocks ($r=3$) and $\mS = \begin{bmatrix}
0.55 & 0.05 & 0.02\\
0.05 & 0.55 & 0.05\\
0.02 & 0.05 & 0.55\\
\end{bmatrix}$. The probability a node belongs to block one or three is $0.45$, while for block two it is $0.1$. The in-block edge probability is $0.55$, and across-isomorphic block probability is $0.02$ and across-non-isomorphic block probability is $0.05$.
Note that blocks one and three are isomorphic blocks (see \Cref{def:iso-sbm}).

Since our results is valid for any \gMPNN functions, we use a randomly initialized GraphSAGE~\citep{hamilton2017inductive} GNN model for our first experiments with node embeddings. Following our \Cref{def:SBM}, the initial node embeddings within the same block should be the same, however, following standard GNN procedures we initialize node features as size-normalized degrees. Note that in theory, node within each block has the same expected graphon degree, but this setting is more realistic and shows a stronger results than proposed in our theorems when initial node embeddings also have variance.

To efficiently calculate exact \onecMPNN embeddings, we need to make use of the property of SBMs, i.e. the graphon values within a block is constant. The graphon degree $d_W$ for nodes in block 1, 2 and 3 is $0.2615,0.1$ and $0.2615$. Then we can write the integral $\int_\mathcal{X}\frac{W(x,y)}{d_W(x)}\Phi^{(t)}({\fonebar}^{(t-1)}(x), {\fonebar}^{(t-1)}(y))d\mu(y)$ as $\frac{1}{0.2615}(0.45\times\mS_{1,1}\Phi^{(t)}(\mB^{(t-1)}_1,\mB^{(t-1)}_1)+0.1\times\mS_{1,2}\Phi^{(t)}(\mB^{(t-1)}_1,\mB^{(t-1)}_2)+0.45\times\mS_{1,3}\Phi^{(t)}(\mB^{(t-1)}_1,\mB^{(t-1)}_3))$. This can be calculated exactly by extracting the neural network weights from the GNN model for $\Phi$ and $\Psi$.

Then we compare the difference between \onegMPNN and \onecMPNN for increasing number of nodes. We first plot log-log plots, where a $O(\frac{1}{\sqrt{N}})$ decay rate will have slope $-\frac{1}{2}$ in the log-log plot. Our theory bounds the decay rate by $O(\frac{\log N}{\sqrt{N}})$, which can be approximated by the $-\frac{1}{2}$ slope and is validated in \Cref{fig:conv}.

\paragraph{Pairwise embeddings} For the experiment with pairwise embeddings, we test both the $\Phi$ and $\Psi$ of \Cref{lem:stationary}, and a scenario where $\Psi$ is a randomly initialized two layer feed-forward neural network. To compute the \twocMPNN embeddings, without choosing the adjacency matrix as input to the model, we can input the graphon value matrix $\mW$ where $\mW_{i,j}=W(X_i,X_j)$. In our experiment, we choose graph with $20$ nodes, $9$ in block 1, $2$ in block 2, and $9$ in block 3. The result of \twocMPNN is stable for graphs with different sizes. Then we plot the same log-log plot as above.

\subsection{Link prediction performance evaluation with SBMs}\label{appx:linkprediction}

First, we use a slightly modified \SBM with $\mS = \begin{bmatrix}
0.6 & 0.05 & 0.02\\
0.05 & 0.6 & 0.05\\
0.02 & 0.05 & 0.6\\
\end{bmatrix}$ with other things the same as in the above subsection. Here we increase the in-block edge probability to $0.6$ since we are going to hide edges for link prediction purpose.

\begin{table*}[t!!!]
\centering
\caption{Test performance over 50 runs of node and pairwise \gMPNNs for in-distribution and OOD link prediction over \SBM graphs. %
Methods marked with $*$ indicate best result out of distinct configurations detailed in the Appendix.
}
\vspace{-5pt}
\label{table:results-new}
\resizebox{1.\textwidth}{!}{
\begin{tabular}{lllrrrrrrr}
& & &
\multicolumn{5}{c}{Training graph size $\bm{N^\text{tr}=10^3}$}\\
\cmidrule(lr){4-8}
\multicolumn{2}{c}{Tasks} & 
\multicolumn{1}{c}{Model} & 
\multicolumn{1}{c}{Hit@10($\%$)} &
\multicolumn{1}{c}{Hit@50($\%$)} &
\multicolumn{1}{c}{Hit@100($\%$)} &
\multicolumn{1}{c}{mcc.($\%$)} &
\multicolumn{1}{c}{balanced acc.($\%$)} \\
\toprule
\multirow{18}{*}{\rotatebox[origin=c]{90}{In-distribution link prediction}} &
\multirow{9}{*}{\rotatebox[origin=c]{90}{Transductive}} 
   & GraphSAGE* &  95.55(\phantom{0}0.52) & 95.93(\phantom{0}0.73) & 96.14(\phantom{0}0.74) & {\bf 95.42}(\phantom{0}0.37) & {\bf 97.66}(\phantom{0}0.19) \\
     && GraphSAGE(max)* &  95.43(\phantom{0}0.38) & 96.13(\phantom{0}0.57) & 96.54(\phantom{0}0.60) & {\bf 95.38}(\phantom{0}0.36) & {\bf 97.64}(\phantom{0}0.19) \\
&& GCN* & 93.15(14.57) & 93.99(13.08) &  94.35(12.72) & 92.41(14.72) & 95.97(\phantom{0}8.24)  \\
&& GAT* & 93.77(13.03) & 94.01(13.02) & 94.14(13.03) & 90.94(16.09) & 95.26(\phantom{0}8.38)  \\
&& GAT(max)* & 92.91(12.27) & 93.88(\phantom{0}9.12) & 94.08(\phantom{0}8.82) & 87.36(20.41) & 93.34(10.95)  \\
&& GIN* &   95.77(\phantom{0}0.59) &  96.09(\phantom{0}0.58) & 96.28(\phantom{0}0.59) & 95.48(\phantom{0}0.41) & 97.69(\phantom{0}0.22)   \\
\cmidrule{3-8}
&& {\bf \twogMPNN(fixed $\Psi$)} & 93.76(\phantom{0}0.55) & 94.17(\phantom{0}0.51) & 94.51(\phantom{0}0.49) & 93.64(\phantom{0}0.53) & 96.72(\phantom{0}0.28) \\
&& {\bf \twogMPNN(learn $\Psi$)}  & {\bf 96.71(\phantom{0}0.32)} & {\bf 96.88(\phantom{0}0.31)} & {\bf 97.00(\phantom{0}0.30)} & 94.23(\phantom{0}0.55) & 97.03(\phantom{0}0.29)  \\
\cmidrule{3-8}
&& {\bf Oracle} & 96.92(\phantom{0}0.36) & 96.92(\phantom{0}0.36) & 96.92(\phantom{0}0.36) & 93.74(\phantom{0}0.42) & 96.77(\phantom{0}0.22) \\ 
\cmidrule{2-8}
&\multirow{9}{*}{\rotatebox[origin=c]{90}{Inductive $N^\te=N^\tr$}} 
   & GraphSAGE* & 47.38(39.08) & 52.13(38.87) & 54.94(37.83) & 19.34(43.19) & 61.46(20.17) \\
      && GraphSAGE(max)* &  17.72(22.89) & 25.91(27.75) & 31.43(30.18) & 18.24(30.43) & 58.65(14.53) \\
&& GCN*  & 66.29(37.67) & 68.52(35.87) & 69.92(35.12) & 31.76(35.12) & 67.21(22.75) \\
&& GAT* &  40.05(39.05) & 41.34(39.39) & 41.96(39.54) & 19.44(35.22) & 59.52(16.94)\\
&& GAT(max)* & 41.98(39.23) & 43.34(38.71) & 43.54(38.69) & 22.66(38.99) & 61.46(19.02)  \\
&& GIN* &  39.33(34.62) & 42.93(33.86) & 43.90(33.72) & 18.59(39.43) & 59.79(18.24) \\
\cmidrule{3-8}
&& {\bf \twogMPNN(fixed $\Psi$)} & 93.85(\phantom{0}0.49) & 94.23(\phantom{0}0.51) & 94.55(\phantom{0}0.49) & 93.74(\phantom{0}0.48) & 96.77(\phantom{0}0.25) \\
&& {\bf \twogMPNN(learn $\Psi$)} & {\bf 96.71}(\phantom{0}0.30) & {\bf 96.91}(\phantom{0}0.28) & {\bf 97.02}(\phantom{0}0.27) & {\bf 94.23}(\phantom{0}0.59) & {\bf 97.03}(\phantom{0}0.31)  \\
\cmidrule{3-8}
&&  {\bf Oracle} & 97.01(\phantom{0}0.31) & 97.01(\phantom{0}0.31) & 97.01(\phantom{0}0.31) & 93.87(\phantom{0}0.39) & 96.84(\phantom{0}0.20) \\
\cmidrule{1-8}
\multirow{9}{*}{\rotatebox[origin=c]{90}{{\bf OOD link prediction}}} &
\multirow{9}{*}{\rotatebox[origin=c]{90}{\makecell{\bf Inductive  $N^\te=10^4$}}}
  & GraphSAGE* & \phantom{0}9.97(19.47) & 11.73(21.80) & 12.98(23.70)& {\color{red}-6.56}(\phantom{0}5.12) & {\color{red}49.32}(\phantom{0}0.60)\\
     && GraphSAGE(max)* & \phantom{0}1.44(\phantom{0}2.35) &\phantom{0}2.60(\phantom{0}4.76) & \phantom{0}3.58(\phantom{0}6.53) &{\color{red}-2.52}(\phantom{0}4.44)& {\color{red}49.83}(\phantom{0}0.57) \\
&& GCN*  & 39.29(31.33) & 42.15(30.81) & 44.19(30.97) & {\color{red}-4.88}(14.84) & {\color{red}50.33}(\phantom{0}6.72) \\
&& GAT*  & 27.31(26.93) & 28.13(26.78) & 28.72(26.93) & {\color{red}-2.00}(\phantom{0}8.96) & {\color{red}50.20}(\phantom{0}3.37)\\
&& GAT(max)*  & 32.56(26.94) & 33.01(27.16) & 33.24(27.27) & {\color{red}-2.85}(\phantom{0}9.76) & {\color{red}49.82}(\phantom{0}3.43)\\
&& GIN* & \phantom{0}0.00(\phantom{0}0.00) & \phantom{0}0.00(\phantom{0}0.00) & \phantom{0}0.00(\phantom{0}0.00)& {\color{red}-3.93}(\phantom{0}5.12) & {\color{red}49.59}(\phantom{0}0.57) \\
\cmidrule{3-8}
&& {\bf \twogMPNN(fixed $\Psi$)} &  {\bf 96.74}(\phantom{0}0.07) & {\bf 96.93}(\phantom{0}0.04) & {\bf 97.01}(\phantom{0}0.04) & {\bf 93.76}(\phantom{0}0.05) & {\bf 96.78}(\phantom{0}0.03)\\
&& {\bf \twogMPNN(learn $\Psi$)} & {\bf 96.97}(\phantom{0}0.04) & {\bf 97.02}(\phantom{0}0.04) & {\bf 97.08}(\phantom{0}0.04) & {\bf 93.94}(\phantom{0}0.67) & {\bf 96.88}(\phantom{0}0.35)   \\
\cmidrule{3-8}
&&  {\bf Oracle} & 96.96(\phantom{0}0.03) & 96.96(\phantom{0}0.03) & 96.96(\phantom{0}0.03) & 93.77(\phantom{0}0.04) & 96.79(\phantom{0}0.02) 
\vspace{1pt}
\\
\bottomrule
\end{tabular}
}
\end{table*}

\begin{table*}[t!!!]
\centering
\caption{\update{Test performance over 50 runs of node and pairwise \gMPNNs for in-distribution and OOD link prediction over the ogbl-ddi graph. %
Methods marked with $*$ indicate best result out of distinct configurations detailed in the Appendix.}
}
\vspace{-5pt}
\label{table:results-ddi}
\resizebox{0.9\textwidth}{!}{
\begin{tabular}{lllrrrrrrr}
& & &
\multicolumn{5}{c}{Training graph size $\bm{N^\text{tr}=427}$}\\
\cmidrule(lr){4-8}
\multicolumn{2}{c}{Tasks} & 
\multicolumn{1}{c}{Model} & 
\multicolumn{1}{c}{Hit@10($\%$)} &
\multicolumn{1}{c}{Hit@50($\%$)} &
\multicolumn{1}{c}{Hit@100($\%$)} &
\multicolumn{1}{c}{mcc.($\%$)} &
\multicolumn{1}{c}{balanced acc.($\%$)} \\
\toprule
\multirow{14}{*}{\rotatebox[origin=c]{90}{In-distribution link prediction}} &
\multirow{7}{*}{\rotatebox[origin=c]{90}{Transductive}} 
   & GraphSAGE* &  30.23(2.03) & 47.70(1.75) & 60.36(1.79) &  71.47(0.70) & {\bf 85.72(0.36)} \\
&& GCN* & 17.91(0.52) & 33.69(0.60) & 44.34(0.85) & 59.45(0.50) & 78.85(0.36)  \\
&& GAT* & 1.46(0.52) & 8.20(1.34) & 16.37(1.95) & 52.64(1.62) & 74.75(0.61)  \\
&& GIN* &   17.21(4.74) & 28.76(5.79) & 37.46(6.60) & 54.27(1.59) & 76.84(1.19)  \\
\cmidrule{3-8}
&& {\bf \twogMPNN(fixed $\Psi$)} & 14.09(0.06) & 50.32(0.01) & 65.41(0.01) & \bf{73.23(0.10)} & {\bf 86.60(0.04)} \\
&& {\bf \twogMPNN(learn $\Psi$)}  & {\bf 38.60(1.68)} & {\bf 59.04(0.22)} & {\bf 68.63(0.06)} & 71.96(0.06) & {\bf 85.74(0.03)}  \\
\cmidrule{3-8}
&& {\bf Random} & 0.48(2.58) & 1.16(4.58) & 2.01(6.54) & 0.05(0.39) & 50.00(0.01) \\ 
\cmidrule{2-8}
&\multirow{7}{*}{\rotatebox[origin=c]{90}{Inductive $N^\te=N^\tr$}} 
   & GraphSAGE* & 10.52(1.33) & 23.85(1.29) & 36.60(1.37) & 47.58(2.98) & 71.59(2.46) \\
&& GCN*  & 10.76(0.90) & 24.79(0.73) & 34.99(0.70) & 50.82(0.19) & 74.73(0.21)  \\
&& GAT* & 0.07(0.02) & 0.22(0.10) & 0.51(0.07) & -0.93(0.77) & 50.00(0.01)\\
&& GIN* &  10.95(4.19) & 24.42(5.75) & 33.71(6.70) & 40.67(2.36) & 66.24(1.75) \\
\cmidrule{3-8}
&& {\bf \twogMPNN(fixed $\Psi$)} & 34.24(0.07) & 66.87(0.03) & 73.91(0.02) & {\bf 67.89(0.34)} & {\bf 83.76(0.20)} \\
&& {\bf \twogMPNN(learn $\Psi$)} & {\bf 56.45(0.08)} & {\bf 68.42(0.03)} & {\bf 74.93(0.02)} & 65.55(0.15) &  82.62(0.09)  \\
\cmidrule{3-8}
&& {\bf Random} & 0.41(1.64) & 2.20(4.88) & 4.97(8.77) & -0.03(0.22) & 50.00(0.00)  \\
\cmidrule{1-8}
\multirow{7}{*}{\rotatebox[origin=c]{90}{{\bf OOD link prediction}}} &
\multirow{7}{*}{\rotatebox[origin=c]{90}{\makecell{\bf Inductive  $N^\te=3840$}}}
  & GraphSAGE* & 1.79(1.21) & 13.70(6.71) & 25.31(8.77) & 16.65(3.31) & 52.79(1.01)\\
&& GCN*  & 12.38(1.23) & 27.28(1.27) & 37.45(1.43) & 55.03(0.76) & 77.38(0.36)\\
&& GAT*  & 2.76(1.27) & 7.55(3.28) & 12.78(4.50) & 23.83(16.31) & 59.54(6.96)\\
&& GIN* & 0.00(0.00) & 0.00(0.00) & 0.00(0.00) & 45.87(3.55) & 68.92(2.78) \\
\cmidrule{3-8}
&& {\bf \twogMPNN(fixed $\Psi$)} &  9.31(5.23) & 67.42(0.02) & 78.44(0.01) & {\bf 75.42(0.17)} & {\bf  87.37(0.11)}\\
&& {\bf \twogMPNN(learn $\Psi$)} & {\bf 57.97(0.02)} & {\bf 74.75(0.07)} & {\bf 80.00(0.11)} &  72.04(0.20) &  84.57(0.14)   \\
\cmidrule{3-8}
&& {\bf Random} & 1.21(3.50) & 3.39(7.72) & 5.71(11.13) & 0.00(0.00) & 50.00(0.00)
\\
\bottomrule
\end{tabular}
}
\vspace{-5pt}
\end{table*}

\begin{table*}[t!!!]
\centering
\caption{\update{Test performance over 50 runs of node and pairwise \gMPNNs for in-distribution (large) and OOD (small) link prediction over \SBM graphs. %
Methods marked with $*$ indicate best result out of distinct configurations detailed in the Appendix.}
}
\vspace{-5pt}
\label{table:results-small}
\resizebox{0.9\textwidth}{!}{
\begin{tabular}{lllrrrrrrr}
& & &
\multicolumn{5}{c}{Training graph size $\bm{N^\text{tr}=10^4}$}\\
\cmidrule(lr){4-8}
\multicolumn{2}{c}{Tasks} & 
\multicolumn{1}{c}{Model} & 
\multicolumn{1}{c}{Hit@10($\%$)} &
\multicolumn{1}{c}{Hit@50($\%$)} &
\multicolumn{1}{c}{Hit@100($\%$)} &
\multicolumn{1}{c}{mcc.($\%$)} &
\multicolumn{1}{c}{balanced acc.($\%$)} \\
\toprule
\multirow{14}{*}{\rotatebox[origin=c]{90}{In-distribution link prediction}} &
\multirow{7}{*}{\rotatebox[origin=c]{90}{Transductive}} 
   & GraphSAGE* &  75.35(38.50) & 75.41(38.53) & 75.46(38.55) & 70.81(43.47) & 85.93(20.62)\\
&& GCN* & 86.23(27.88) & 86.48(27.85) & 86.56(27.85) & 82.73(32.32) & 91.36(15.84)  \\
&& GAT* &  59.21(43.07) & 59.62(43.09) & 59.79(43.12) & 50.19(42.84) & 75.51(21.35) \\
&& GIN* & 80.89(33.65) & 81.12(33.71) & 81.20(33.72) & 82.46(30.01) & 90.49(16.59)   \\
\cmidrule{3-8}
&& {\bf \twogMPNN(fixed $\Psi$)} & 95.74(\phantom{0}0.12) & 96.15(\phantom{0}0.06) & 96.33(\phantom{0}0.04) & 93.77(\phantom{0}0.04) & 96.79(\phantom{0}0.02) \\
&& {\bf \twogMPNN(learn $\Psi$)}  & {\bf 96.95(\phantom{0}0.03)} & {\bf 96.95(\phantom{0}0.03)} & {\bf 96.95(\phantom{0}0.03)} & {\bf 93.76(\phantom{0}0.06)} & {\bf 96.79(\phantom{0}0.03)}   \\
\cmidrule{3-8}
&& {\bf Oracle} & 96.96(\phantom{0}0.03) & 96.96(\phantom{0}0.03) & 96.96(\phantom{0}0.03) & 93.77(\phantom{0}0.04) & 96.79(\phantom{0}0.02) \\ 
\cmidrule{2-8}
&\multirow{7}{*}{\rotatebox[origin=c]{90}{Inductive $N^\te=N^\tr$}} 
   & GraphSAGE* & 64.77(40.22) & 65.88(39.91) & 66.60(39.87)  & 33.19(50.16) & 68.30(23.45)\\
&& GCN* & 79.67(34.82) & 79.90(34.55) & 80.07(34.31) & 51.16(49.53) & 76.23(23.72)  \\
&& GAT* &   46.73(37.62) & 47.12(37.64) & 47.31(37.65) & 19.14(39.03) & 60.02(18.77)  \\
&& GIN* &  59.68(41.62) & 61.15(41.47) & 61.69(41.37) & 44.80(46.15) & 71.90(22.57) \\
\cmidrule{3-8}
&& {\bf \twogMPNN(fixed $\Psi$)} & 95.67(\phantom{0}0.11) & 96.15(\phantom{0}0.06) & 96.33(\phantom{0}0.04) & 93.77(\phantom{0}0.05) & 96.79(\phantom{0}0.03) \\
&& {\bf \twogMPNN(learn $\Psi$)} & {\bf 96.94(\phantom{0}0.04)} & {\bf 96.94(\phantom{0}0.04)} & {\bf 96.94(\phantom{0}0.04)} & {\bf 93.76(\phantom{0}0.06)} & {\bf 96.78(\phantom{0}0.03)}  \\
\cmidrule{3-8}
&&  {\bf Oracle} & 96.95(\phantom{0}0.04) & 96.95(\phantom{0}0.04) & 96.95(\phantom{0}0.04) & 93.77(\phantom{0}0.05) & 96.79(\phantom{0}0.03) \\
\cmidrule{1-8}
\multirow{7}{*}{\rotatebox[origin=c]{90}{{\bf OOD link prediction}}} &
\multirow{7}{*}{\rotatebox[origin=c]{90}{\makecell{\bf Inductive  $N^\te=10^3$}}}
  & GraphSAGE* & 33.52(44.93) & 33.70(44.87) & 33.97(44.77) & 32.72(47.00) & 66.97(22.73)\\
&& GCN*  & 72.28(40.06) & 73.95(38.58) & 74.17(38.56) & 68.93(40.98) & 84.54(19.69) \\
&& GAT*  & 23.31(39.07) & 23.32(39.07) & 23.34(39.07) & 24.07(39.18) & 61.74(19.39)\\
&& GIN* & \phantom{0}1.31(\phantom{0}1.62) & \phantom{0}1.39(\phantom{0}1.62) & \phantom{0}1.42(\phantom{0}1.62)& {\color{red}-0.64}(\phantom{0}5.63) & {\color{red}49.93}(\phantom{0}0.63) \\
\cmidrule{3-8}
&& {\bf \twogMPNN(fixed $\Psi$)} &  93.68(\phantom{0}0.40) & 93.72(\phantom{0}0.41) & 93.74(\phantom{0}0.41) & 93.40(\phantom{0}0.42) & 96.59(\phantom{0}0.22)\\
&& {\bf \twogMPNN(learn $\Psi$)}  & {\bf 96.12(\phantom{0}0.28)} &  {\bf 96.44(\phantom{0}0.34)} & {\bf 96.57(\phantom{0}0.36)} & {\bf 94.43(\phantom{0}0.31)} & {\bf 97.14(\phantom{0}0.16)}  \\
\cmidrule{3-8}
&&  {\bf Oracle} & 96.94(\phantom{0}0.30) & 96.94(\phantom{0}0.30) & 96.94(\phantom{0}0.30) & 93.82(\phantom{0}0.40) & 96.81(\phantom{0}0.21) 
\vspace{3pt}
\\
\bottomrule
\end{tabular}
}
\vspace{-10pt}
\end{table*}

We start by sampling the training graph $(G^\tr,\mF^\tr)$ with $N^\tr=10^3$ nodes. We randomly hide $10\%$ of $E^\tr$ from the original graph $G^\tr$ for link prediction purpose since the goal of link prediction is to predict possible missing links that is not observed in the original graph. We call these edges $E^\text{hid-tr}$.

Then we split $E^\text{hid-tr}$ into positive train (80\%) and validation (10\%) edges (we reserve 10\% of $E^\text{hid-tr}$ for the transductive test scenario), and uniformly sample the same number of across-block non-edges as negative train and validation edges. 
The embedding method \onegMPNN (resp.\ \twogMPNN) along with link predictor $\etaone$ (resp.\ $\etatwo$) are trained in an end-to-end manner for predicting positive and negative edges in training using cross-entropy loss.
Our experiments consider three scenarios (in all scenarios we use the same number of negative test edges as positive test edges, sampled from non-edges in $G^\te$ with endpoints in different isomorphic blocks):
(i) (In-distribution) transductive scenario where $G^\te = G^\tr$, where positive test edges are the 10\% reserved in $E^\text{hid-tr}$ not used in training or validation;
(ii) In-distribution inductive scenario where $G^\te$ is sampled from the same \SBM with $N^\te=N^\tr$, where we also hide $10\%$ of the edges and sample $0.1|E^\text{hid-tr}|$ positive test edges from $E^\text{hid-te}$ (for fair comparison across all scenarios); (c) OOD inductive scenario where $G^\te$ is sampled from the same \SBM with $N^\te=10\times N^\tr$, where we also hide $10\%$ of the edges and sample $0.1|E^\text{hid-tr}|$ positive test edges from $E^\text{hid-te}$(for fair comparison across all scenarios).

For {\em structural node embeddings} we consider GraphSAGE~\citep{hamilton2017inductive}, GCN~\citep{Kipf2016} (without positional features), GAT~\citep{velickovic2018graph} and GIN~\citep{xu2018powerful} as the representatives of \onegMPNN models. Here we also add $\max$ aggregation for GAT and GraphSAGE model as proposed by \citet{xu2021how} for extrapolation.
The link predictor $\etaone$ is as feedfoward network that receives the two node embeddings as input, and has link prediction threshold $\tau=0.5$ (see \Cref{def:linkpred} for details). We initialize the node features as the size-normalized degrees.
 
For {\em structural pairwise embeddings} we choose our proposed \twogMPNN method of \Cref{def:gmpnn-joint}, since we can prove that our approach is theoretically sound in \Cref{lem:stationary}. We test \twogMPNN in two versions: The $\Phi$ and $\Psi$ functions in \Cref{lem:stationary} (denoted {\em fixed $\Psi$}) and a feedforward neural network for $\Psi$ (denoted {\em learn $\Psi$}). 
The link predictor $\etatwo$ is the same as $\etaone$ except it just takes one pairwise embedding as input, rather than two node embeddings. We initialize the pairwise features as all $1$'s to contain no additional information about connectivity between the pair of the nodes.

\update{Many existing link prediction methods rely on positional node embeddings and our work focuses on permutation-equivariant MPNN GNNs. These positional node embedding link prediction methods are not equivariant (they are positional node representations) based on matrix and tensor factorization methods. Developing a theory for the effect of positional node representations in OOD link prediction is far from trivial and an entirely new paper that requires a new theory. At this point we do not even know how positional representations could be approximately counterfactually-invariant.}

For all models including \onegMPNN, \twogMPNN, $\etaone$ and $\etatwo$. The number of hidden layers was chosen between \{$2,3$\}, and the number of hidden neurons was chosen between $\{5,10\}$ due to the simple experimental set up. For GAT, we have $2$ attention heads.   
Specifically. We optimized all models using Adam with learning rate chosen from \{$1\!\times\! 10^{-3},5\!\times\! 10^{-4},1\!\times\! 10^{-4}$\}. We also choose $\etaone$ as taking the inner product between pair of nodes as input (as \citet{hu2020open}) and the concatenated node embeddings as input. The hyperparameter search is performed by training all models with $10$ different initialization seeds and selecting the configuration that achieved the highest mean accuracy on the validation data, and we mark the methods with $*$ in \Cref{table:results,table:results-new} indicating the optimal configuration is being used. The training time is around $10$ minutes for $1,000$ epochs.

\Cref{table:results-new} presents our empirical results in the new setting over $50$ independent runs. The oracle predictor knows the graphon values $W(X_i^\te, X^\te_j)$. The reason why it can not achieve $100\%$ accuracy is because there exists rarely sampled positive edges between blocks. Our evaluation metrics include the Matthews correlation coefficient (mcc)~\cite{matthews1975comparison}, balanced accuracy, and Hits@$K$ for $K=10,50,100$ that counts the ratio of positive edges ranked at the $k$-th place or above against all negative edges. The results from the new table conveys the same message as \Cref{table:results} and has been discussed in \Cref{sec:expe}.

\update{We also include a new setting for training on larger graphs ($10^4$ nodes) and extrapolating to smaller graphs ($10^3$ nodes) in \Cref{table:results-small}. We are able to see the structure node representations \onegMPNN are still able to perform relatively well on in-distribution inductive tasks, although the graphs are large, while still suffer from OOD performance to smaller graphs except GCN, although it is not related to the theoretical discussions of this paper. In contrast, the \twogMPNN is able to consistently offer good performance on both in-distribution and OOD tasks.}

As discussed in \Cref{appx:related}, \citet{xu2020neural} shows that GNNs can extrapolate in algorithmic-related tasks as the graph size grows, if the GNN uses max as an aggregator (rather than mean and sum we considered in this paper).
Unfortunately, our  \Cref{def:gmpnn} of \onegMPNN does not allow max aggregators, in part because it is unclear how one could reach stability using the max aggregator. 
Fortunately, while we could not obtain theoretical results using the max aggregator, we can test it empirically. \Cref{table:results-new} reproduces all our empirical results using the max aggregator 
(on GraphSAGE and GAT, since these are the only GNNs designed for the max aggregator).
Our experiments show that the max aggregator, just like the mean and sum aggregators, shows poor OOD performance as test graph sizes increase.
Whether there is theoretical proof that the max aggregator is not able to perform this OOD task is left as future work.

\update{\subsection{Link prediction performance evaluation with ogbl-ddi}\label{appx:ogbl-ddi}

In what follows we introduce empirical results using the ogbl-ddi dataset, which represents a drug-drug interaction network. For the purpose of performing OOD tasks, we start by sampling $10\%$ of the nodes ($427$ nodes) and its induced subgraph to be the training graph, where node features are constructed as size-normalized degrees in the training graph. Validation positive and negative edges are obtained by applying the original edge split on the induced training subgraph.
Our experiments consider three scenarios:
(i) (In-distribution) transductive scenario where $G^\te = G^\tr$, where test positive and negative edges are obtained by applying the original edge split on the induced training subgraph;
(ii) In-distribution inductive scenario where $G^\te$ is constructed as sampling $N^\te=N^\tr$ nodes from the remaining ogbl-ddi graph and its induced subgraph, where the test edges are obtained by applying the original edge split on the newly induced test subgraph; (iii) OOD inductive scenario where $G^\te$ is the induced subgraph without the training nodes with $N^\te=3840$, the test edges are obtained by applying the original edge split on the newly induced test subgraph, where we further down-sample to the same amount of test edges as in (ii) for fair comparison across all scenarios.

We used the same benchmarking methods as in the SBM experiments, and add a random guesser where it is constructed as randomly-initialized GraphSAGE model with a randomly-initialized link predictor. We initialize the pairwise features as all $1$'s to contain no additional information about connectivity between the pair of the nodes for \twogMPNN.

For {\em structural node embeddings} we consider GraphSAGE~\citep{hamilton2017inductive}, GCN~\citep{Kipf2016} (without positional features), GAT~\citep{velickovic2018graph} and GIN~\citep{xu2018powerful} as the representatives of \onegMPNN models.
The link predictor $\etaone$ is as feedfoward network that receives the two node embeddings as input, and has link prediction threshold $\tau=0.5$ (see \Cref{def:linkpred} for details). We initialize the node features as the size-normalized degrees.
 
For {\em structural pairwise embeddings} we choose our proposed \twogMPNN method of \Cref{def:gmpnn-joint}, since we can prove that our approach is theoretically sound in \Cref{lem:stationary}. We test \twogMPNN in two versions: The $\Phi$ and $\Psi$ functions in \Cref{lem:stationary} (denoted {\em fixed $\Psi$}) and a feedforward neural network for $\Psi$ (denoted {\em learn $\Psi$}). 
The link predictor $\etatwo$ is the same as $\etaone$ except it just takes one pairwise embedding as input, rather than two node embeddings. We initialize the pairwise features as all $1$'s to contain no additional information about connectivity between the pair of the nodes.

For all models including \onegMPNN, \twogMPNN, $\etaone$ and $\etatwo$. The number of hidden layers was chosen between \{$2,3$\}, and the number of hidden neurons was chosen between $\{16,32\}$ due to the simple experimental set up. For GAT, we have $2$ attention heads.   
Specifically. We optimized all models using Adam with learning rate chosen from \{$1\!\times\! 10^{-3},5\!\times\! 10^{-4},1\!\times\! 10^{-4}$\}. We also choose $\etaone$ as taking the inner product between pair of nodes as input (as \citet{hu2020open}) and the concatenated node embeddings as input. We train all the models with 200 epochs. The hyperparameter search is performed by training all models with $10$ different initialization seeds and selecting the configuration that achieved the highest mean accuracy on the validation data, and we mark the methods with $*$ in \Cref{table:results-ddi} indicating the optimal configuration is being used.

\Cref{table:results-ddi}  presents our empirical results on the ogbl-ddi link prediction task. All \onegMPNN methods performs worse in inductive settings than transductive settings, and suffer much worse performance in OOD transductive setting except GCNs.
In contrast, the \twogMPNN is able to consistently offer good performance on both in-distribution and OOD tasks, showing that the theoretical results are not limited to SBM  models.
} %

\section{Defintion and notations}
\label{sec:def-app}
\update{In this section, we follow the definitions and notations from \citet[Appendix A]{maskey2022generalization}.}
As in \citet[Appendix A]{maskey2022generalization}, we call the metric space $(\chi,d)$, where the metric in the space $\chi$ is defined as $d:\chi\times\chi\rightarrow \left[0,\infty\right)$.
The nodes of the graph are considered as sampled point from $\chi$, the node $i$ is identified with $X_i$ for the graph $G$ with nodes $X=(X_1,\ldots,X_N)$. We also represent $\mF(X_i) := \vfnull_i$ for $i= 1,\ldots, N$.

Next, we define various notions of degree for the pairwise node embedding.

\CRupdate{
\begin{definition}
\label{def:allgraphon} Let $W$ defined in \Cref{def:rgm}, and $G$ as the sampled graph with nodes $X=(X_1,...,X_N)$.
\begin{itemize}
    \item We define the graphon fraction of common neighbors at $x,y\in \mathcal{X}$ by
    \begin{equation}
        c_W(x,y) = \int_\mathcal{X} W(x,z)W(y,z)d\mu(z),
    \end{equation}
    \item Given two points $x,y$ that need not be in $X$, we define the graph-graphon fraction of common neighbors of $X$ at $x,y$ by
    \begin{equation}
        c_X(x,y) = \frac{1}{N}\sum_{i=1}^N W(x,X_i)W(y,X_i),
    \end{equation}
    \item Given two points $x,y$ that need not be in $X$, we define the sampled-graph fraction of common neighbors of $X$ at $x,y$ by
    \begin{equation}
        c_A(x,y) = \frac{1}{N}\sum_{i=1}^N A(x,X_i)A(y,X_i),
    \end{equation}
    where we define $A(x,X_i)\sim \textit{Ber}(W(x,X_i))$ and $A(y,X_i)\sim \textit{Ber}(W(y,X_i))$ as independent random variables.
\end{itemize}
\end{definition}
}
where $c_X(x,y)$ and $c_A(x,y)$ are interpreted as the graph fraction of common of neighbors of the node pair $(x,y)$ in the graph $(x,y,X_1,...,X_n)$.

\CRupdate{Adapting \citet[Definition A.3]{maskey2022generalization} to the continuous integral aggregation,}
\begin{definition} 
\label{def:graphon-degree-avg}
Let $W$ be defined in \Cref{def:rgm}, for a metric-space message signal $U:\mathcal{X}\times \mathcal{X}\rightarrow \mathbb{R}^F$, the continuous integral aggregation is defined by
$$\Mone_WU=\int_\mathcal{X}W(\cdot,y)U(\cdot,y)d\mu(y).$$
\end{definition}

\CRupdate{Adapting \citet[Definition A.4]{maskey2022generalization} to the N-normalized sum aggregation,}
\begin{definition}
\label{def:graphon-neighbor-avg}
Let $W$ be defined in \Cref{def:rgm}, $X=X_1,...,X_n$ sample points. For a metric-space message signal $U:\mathcal{X}\times \mathcal{X}\rightarrow \mathbb{R}^F$, we define the graph-graphon (N-normalized) sum aggregation by
$$\Mone_XU = \frac{1}{N}\sum_iW(\cdot,X_i)U(\cdot,X_i),$$
and the sampled-graph (N-normalized) sum aggregation by
$$\Mone_AU = \frac{1}{N}\sum_iA(\cdot,X_i)U(\cdot,X_i),$$
where we define $A(x,X_i)\sim \textit{Ber}(W(x,X_i))$ as a random variable.
\end{definition}

\begin{definition}
Let $W$ be defined in \Cref{def:rgm}, $X=X_1,...,X_n$ sample points. For a metric-space message signal $U:(\mathcal{X}\times \mathcal{X})\times(\mathcal{X}\times \mathcal{X}) \rightarrow \mathbb{R}^F$, we define the graphon pairwise aggregation by
$$M^\pairwisenodes_W U = \frac{1}{2}\int_{\mathcal{X}} (\frac{W(y,z)}{c_W(\cdot,\cdot)}U(\cdot, (x,z))+\frac{W(x,z)}{c_W(\cdot,\cdot)}U(\cdot, (y,z)))d\mu(z),$$
and the graph-graphon pairwise aggregation by
$$M^\pairwisenodes_X U = \frac{1}{2N}\sum_{i=1}^N (\frac{W(y,X_i)}{c_X(\cdot,\cdot)}U(\cdot, (x,X_i))+\frac{W(x,X_i)}{c_X(\cdot,\cdot)}U(\cdot, (y,X_i))),$$
and the sample-graph pairwise aggregation by
$$M^\pairwisenodes_A U = \frac{1}{2N}\sum_{i=1}^N (\frac{A(y,X_i)}{c_A(\cdot,\cdot)}U(\cdot, (x,X_i))+\frac{A(x,X_i)}{c_A(\cdot,\cdot)}U(\cdot, (y,X_i))),$$
where we define $A(x,X_i)\sim \textit{Ber}(W(x,X_i))$ as a random variable.
\end{definition}

\citet[Definition A.7]{maskey2022generalization} has defined for a vector $\mathbf{z}=(z_1,\ldots,z_F) \in \mathbb{R}^F$, we define as usual
\[
\|\mathbf{z}\|_\infty = \max_{ 1 \leq k \leq F } |z_k|.
\]
\CRupdate{
For every $x,x'\in \mathcal{X}$, we say a function $f:\chi \to \mathbb{R}^F$ is Lipschitz continuous if there exists a $L_f > 0$ such that for every  $x,x' \in \chi$, we have
\[
\|f(x) - f(x')\|_\infty \leq L_f d(x,x').
\]
Here if $\chi = \mathbb{R}^F$, $d(x,x') = \| x- x'\|_\infty.$
For our theoretical results, we make the following assumptions:}

\begin{assumption}(extension of \citet[Definition A.10]{maskey2022generalization})
\label{ass:graphon}
Let $(\chi,d)$ be a metric space and
 $W: \chi \times \chi \rightarrow [0, \infty)$. 
 Let $\Theta$ be a MPNN with message and update  functions $\Phi^{(l)}: \mathbb{R}^{2F_l} \rightarrow \mathbb{R}^{H_l}$ and  $\Psi^{(l)}: \mathbb{R}^{F_l+H_l} \rightarrow \mathbb{R}^{F_{l+1}}$, $l=1,\ldots,T-1$. 
\begin{enumerate}
    \item \label{ass:graphon1} By \Cref{def:rgm} of the graphon , the graphon satisfies $\|W\|_\infty \leq 1$. 
    \item\label{ass:graphon12} \citep[Definition A.10, item 6]{maskey2022generalization}: There exists a constant $\cmin > 0$ such that for every $x \in \chi$, we have $d_W(x) \geq \cmin$.
    \item \label{ass:graphonpair} There exists a constant $d_{\text{cmin}}$ such that for every $x,y\in \mathcal{X}$, we have $\cdW(x,y)\geq d_\text{cmin}$.
    \item \label{asmp:Mtr} $M_\tr = \max(\supp(N^\tr))$ is the largest graph in training, where $N^\tr$ is the distribution of graph sizes in the training data.
    \item\label{ass:graphon4} \citep[Similar to Definition A.10, item 7 adding dependence on $M_\tr$]{maskey2022generalization}: For every $l=1,\ldots,T$, the message function $\Phi^{(l)}$ and update function $\Psi^{(l)}$ are Lipschitz continuous with Lipschitz constants $\LipPhi^{(l)}(M_\tr)$ and $\LipPsi^{(l)}(M_\tr)$ respectively.
\end{enumerate}
\end{assumption}

\section{Large real-world and random graphs have relatively few isomorphic nodes}
\label{appd:large}

In what follows we show that isomorphic nodes are rare both in many real-world networks and SBMs. We start with real-world graphs. 
\citet{macarthur2008symmetry} has computed the fraction of {\em non-isomorphic} nodes (denoted as the {\em network redundancy} $r_\cG$ by \cite{macarthur2008symmetry}) of different types of small (< 23,000 nodes) real-world graphs.
\citet{macarthur2008symmetry} shows that the majority of biological graphs are composed of mostly non-isomorphic nodes. Small technological networks (e.g., road network) tend to have significantly more isomorphic nodes.

In order to see whether these results also hold for larger graphs, we performed a similar experiment on the following datasets.
\begin{itemize}[leftmargin=*]
    \item The ogbl-ppa dataset is an undirected, unweighted graph. Nodes represent proteins from 58 different species, and edges indicate biologically meaningful associations between proteins~\citep{wishart2018drugbank}, e.g., physical interactions, co-expression, homology or genomic neighborhood.
    \item The ogbl-ddi dataset is a homogeneous, unweighted, undirected graph, representing the drug-drug interaction network. Each node represents an FDA-approved or experimental drug~\citep{wishart2018drugbank}. Edges represent interactions between drugs and can be interpreted as a phenomenon where the joint effect of taking the two drugs together is considerably different from the expected effect in which drugs act independently of each other. 
    \item The Slashdot graph contains friend/foe links between the users of Slashdot (where we ignore edge types).
    \item HepPh is a co-authorship network where if an author $i$ co-authored a paper with author $j$, the graph contains a undirected edge from $i$ to $j$. If the paper is co-authored by $k$ authors this generates a completely connected (sub)graph on $k$ nodes.
    \item The Github graph shows GitHub developers (nodes) who have starred at least 10 repositories and edges are mutual follower relationships between them (we make the graph undirected for our analysis).
    \item The Twitch/En graph shows Twitch users (who stream in English) as nodes and links are mutual friendships between them.  
    \item The Epinions graph shows users of the consumer review site Epinions.com as nodes and edges as trust relationships between users.
\end{itemize}   

 \Cref{fig:isonodes} shows the fraction of {\em non-isomorphic} node shown against the size of the graph. This analysis considers a few datasets widely used in the neural network literature to benchmark link prediction methods such as OGB\footnote{\url{https://ogb.stanford.edu/docs/graphprop/}} ppa and ddi, where ppa is the largest dataset we were able to run the {\em nauty}\footnote{https://pallini.di.uniroma1.it/} isomorphism checking algorithm without crashing.
Nauty~\citep{mckay20163} is one of the most efficient graph isomorphism algorithms available, which we use to calculate a lower bound on the size of the automorphism group of our graphs.
Using social networks in the SNAP\footnote{https://snap.stanford.edu/data/index.html} repository we again observe a large fraction of the nodes are non-isomorphic. Visual inspection shows that most isomorphic nodes are low-degree siblings (e.g., the most common are nodes with degree one that have the same parent). 
Note that our results do not contradict~\citet{ball2018symmetric}, which shows that many real-world graphs have isomorphic nodes. Having isomorphic nodes is different than containing a large fraction of isomorphic nodes.

The results in \Cref{fig:isonodes} show that most nodes in real-world graphs tend to be non-isomorphic for reasonably large graphs (in particular ppa). In what follows we show that random graph models generally {\em do not} contain isomorphic nodes with high probability.

\begin{figure}[t!!]
    \centering
    \includegraphics[height=2in,width=3in]{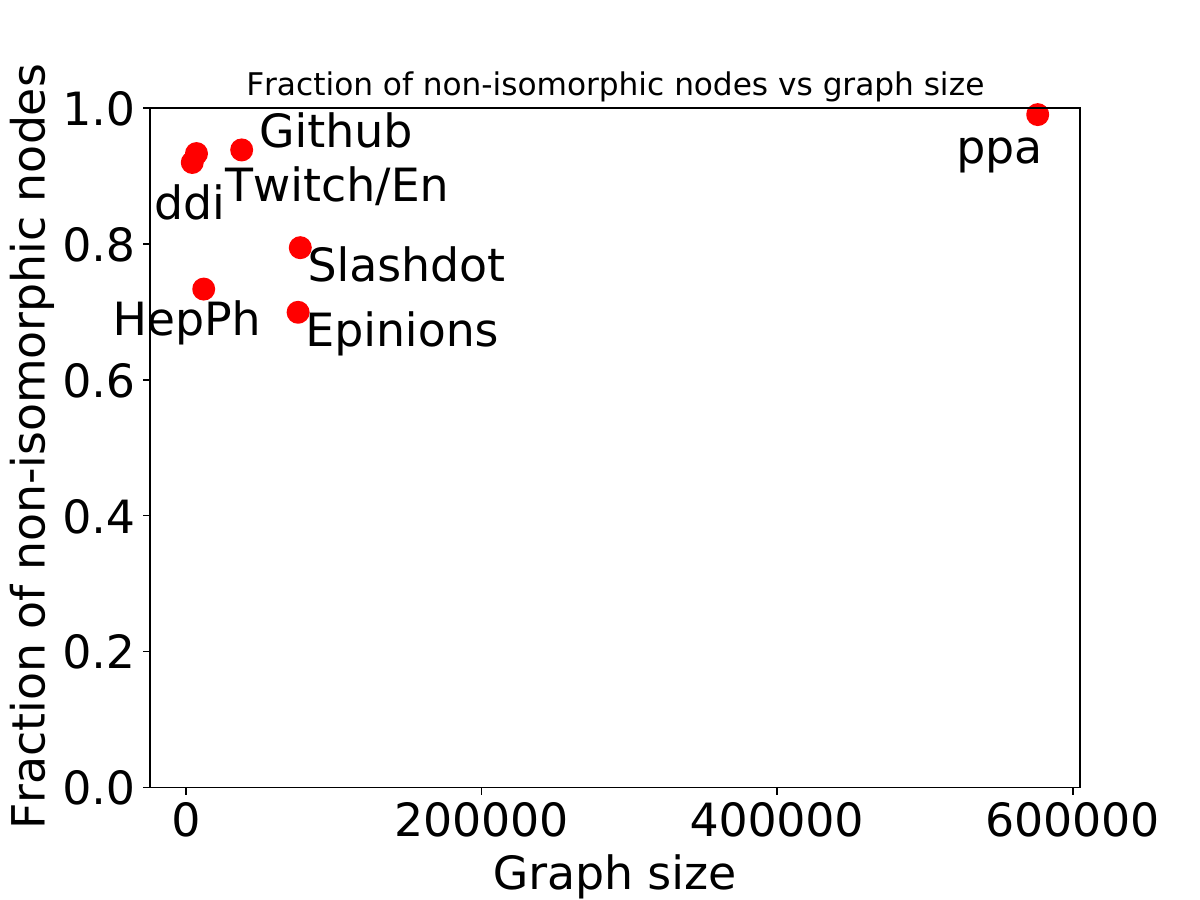}
    \caption{{\bf Fraction of isomorphic nodes in real-world graphs:} The fraction of non-isomorphic nodes (also denoted as the {\em network redundancy} $r_\cG$ by \cite{macarthur2008symmetry}) in real-world graphs tends to be close to 90\%, except in the HepPh collaboration network and Slashdot, which contain many small disconnected components. We assume all graphs are undirected and unattributed for this analysis.}
\vspace{-15pt}
    \label{fig:isonodes}
\end{figure}

{\em Theoretical results on random graphs.} Regarding isomorphic nodes on random graphs, we can prove the following result:
\begin{corollary}\label{cor:ER}
Consider a random graph $G=(V,E)$ with $N$ given nodes so that all possible $2^{N \choose 2}$ graphs should have the same probability to be chosen. 
Then, as $N\to \infty$ all nodes in $G$ are non-isomorphic, regardless whether we take the nodes in $G$ to be attributed or unattributed.
\end{corollary}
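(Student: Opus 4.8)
The plan is to recognise Corollary~\ref{cor:ER} as the classical \erdosrenyi fact that almost every labeled graph is \emph{asymmetric}, and to establish it by a first-moment (union-bound) argument over the symmetric group. The first step is a reduction: the statement ``all nodes of $G$ are non-isomorphic'' is equivalent to ``$G$ has no non-trivial automorphism''. Indeed, if $\pi\neq\mathrm{id}$ satisfies $\mA=\mP_\pi\mA\mP_\pi^{\top}$, pick $i$ with $\pi_i=j\neq i$; then $i,j$ are isomorphic nodes, and conversely any $\pi$ witnessing a pair of isomorphic nodes is a non-trivial automorphism. Moreover, once the unattributed case is proved, the attributed case is immediate: adding the requirement $\mF=\mP_\pi\mF$ can only shrink the automorphism group, so it suffices to treat the unattributed uniform random graph, for which the edge indicators $\{\mA_{ij}\}_{i<j}$ are i.i.d.\ $\mathrm{Bernoulli}(1/2)$.

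Next I would, for a fixed $\pi\neq\mathrm{id}$, compute $\Pr[\pi\text{ is an automorphism}]$. The defining relations $\mA_{ij}=\mA_{\pi_i\pi_j}$ force the indicator $\mA_{ij}$ to be constant along each orbit of $\langle\pi\rangle$ acting on $\binom{V}{2}$; since the indicators are independent across distinct pairs, an orbit $O$ is ``constant'' with probability $2^{1-|O|}$, and the orbits are disjoint, so $\Pr[\pi\text{ is an automorphism}]=\prod_O 2^{1-|O|}=2^{-\sum_O(|O|-1)}$. The crux is then a lower bound $\sum_O(|O|-1)\ge c\,m\,n$ for an absolute constant $c>0$ (one may take $c=\tfrac18$ for $n$ large), where $m=|\{v:\pi_v\neq v\}|\ge 2$ is the number of moved vertices. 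To see this, note a $2$-subset $\{i,j\}$ is fixed by $\pi$ only if both endpoints are fixed points of $\pi$ or $\{i,j\}$ is a transposition of $\pi$; hence at most $\binom{n-m}{2}+\lfloor m/2\rfloor$ pairs lie in singleton orbits, so at least $\binom n2-\binom{n-m}{2}-m/2=\tfrac{m(2n-m-2)}{2}\ge \tfrac{mn}{4}$ pairs (for $n\ge 4$) lie in orbits of size $\ge 2$; writing $P$ for that many pairs spread over $r\le P/2$ such orbits gives $\sum_O(|O|-1)\ge P-r\ge P/2\ge mn/8$. I expect this cycle-structure bookkeeping to be the main obstacle, precisely because it must be uniform over all cycle types of $\pi$ --- in particular it must cover fixed-point-free permutations, for which the naive ``find a vertex that distinguishes $i$ from $j$'' heuristic breaks down.

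Finally I would assemble the union bound. The number of permutations moving exactly $m$ vertices is $\binom nm m!\le n^m$, so
\[
\Pr[G\text{ has a non-trivial automorphism}]\;\le\;\sum_{m=2}^{n} n^m\,2^{-mn/8}\;=\;\sum_{m=2}^{n}\bigl(n\,2^{-n/8}\bigr)^m\;\le\;\frac{\bigl(n\,2^{-n/8}\bigr)^2}{1-n\,2^{-n/8}}\;\xrightarrow[n\to\infty]{}\;0,
\]
since $n\,2^{-n/8}\to 0$. Hence with probability tending to $1$ the graph $G$ is asymmetric, i.e.\ all of its nodes are pairwise non-isomorphic, and by the reduction above the same conclusion holds whether or not the nodes carry attributes. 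I would close with a one-line remark that this is exactly the phenomenon underlying Figure~\ref{fig:isonodes}: deviations from asymmetry in real graphs stem from local structural repetition (e.g.\ degree-one siblings), which generic dense random graph models do not produce.
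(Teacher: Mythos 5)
Your proof is correct, but it takes a genuinely different (and far more self-contained) route than the paper. The paper's own proof is two sentences: it invokes \citet[Theorem~2]{erdos1963asymmetric} for the unattributed case and then observes, exactly as you do, that imposing the additional constraint $\mF=\mP_\pi\mF$ can only shrink the automorphism group, so attributes cannot create new isomorphic pairs. You instead reprove the Erd\H{o}s--R\'enyi asymmetry theorem from scratch: the reduction from ``no isomorphic node pair'' to ``trivial automorphism group'' is exactly right under the paper's Definition of isomorphic nodes; the orbit computation $\Pr[\pi\in\mathrm{Aut}(G)]=2^{-\sum_O(|O|-1)}$ is correct (including the care taken with transposed pairs, which form singleton orbits on $\binom{V}{2}$ because the graph is undirected); the bound $\sum_O(|O|-1)\ge mn/8$ is verified by your bookkeeping (the identity $\binom n2-\binom{n-m}2-m/2=m(2n-m-2)/2$ and the step $P-r\ge P/2$ both check out); and the union bound over at most $n^m$ permutations moving $m$ vertices closes the argument. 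What the paper's citation buys is brevity; what your argument buys is a quantitative, self-contained bound of order $n^2 2^{-cn}$ on the probability of any symmetry, which is essentially the content of the cited theorem and would also transparently feed into the paper's subsequent Proposition on SBMs (whose proof does rely on the cited result in a more structural way). No gaps.
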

\begin{proof}
The proof for unattributed $G$ is a direct consequence of \citet[Theorem~2]{erdos1963asymmetric}.
Adding node attributes cannot make two non-isomorphic nodes be isomorphic, which concludes our proof.
\end{proof}

\citet[Theorem 3]{erdos1963asymmetric} (see also \citet[Theorem 3.1]{kim2002asymmetry}) shows that the statement in \Cref{cor:ER} is also true for $G(N,p)$ graphs with p satisfying $(\ln N)/N \leq p \leq 1-(\ln N)/N$. 
\citet[Theorem 3.1]{kim2002asymmetry} shows a similar result for random $d$-regular graph on $N$ vertices with $3 \leq d \leq n - 4$.
\citet{luczak2019asymmetry} has shown similar results for preferential-attachment graphs, where in each step a new
node with $m \geq 3$ edges is added.
In what follows we show a similar result for SBMs.

\begin{proposition}
\label{prop:noniso}
Consider a random graph $G=(V,E)$ with $N$ given nodes, generated by the SBM in \Cref{def:SBM}, where within-block and inter-block probabilities in $\mS$ lie in the interval $(p,1-p)$, with $(\ln N)/N \leq p \leq 1-(\ln N)/N$. 
Then, as $N\to \infty$ all nodes in $G$ are non-isomorphic, regardless whether we take the nodes in $G$ to be attributed or unattributed.
\end{proposition}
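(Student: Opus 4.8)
The goal is to show that $G$ is asymmetric with probability $1-o(1)$, since ``all nodes non-isomorphic'' is exactly the statement that $\mathrm{Aut}(G)=\{\mathrm{id}\}$ (a node-to-node isomorphism is a nontrivial automorphism). As in the proof of \Cref{cor:ER}, node attributes only add constraints on $\mathrm{Aut}(G)$, so it suffices to treat $G$ as unattributed. I would condition on the block-membership map $\phi\colon V\to\{1,\dots,r\}$ induced by $X_1,\dots,X_N$; conditionally on $\phi$, the edges of $G$ are independent with $\Pr[\{i,j\}\in E\mid\phi]=q_{ij}:=\mS_{\phi(i)\phi(j)}$, and by hypothesis every $q_{ij}$ lies in $[(\ln N)/N,\,1-(\ln N)/N]$. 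Because the bound below will be uniform over $\phi$, it transfers to the unconditional probability, and the particular block sizes never enter.

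In the setting actually used in \Cref{def:SBM}, where $\mS$ is a fixed matrix not depending on $N$, the hypothesis forces every entry of $\mS$ into $(0,1)$ for all large $N$, so $\epsilon:=\min_{a,b}\min(\mS_{ab},1-\mS_{ab})$ is a positive constant and $q_{ij}\in[\epsilon,1-\epsilon]$. Here a direct first--moment bound over permutations finishes the proof. Fix $\pi\in\sS_N\setminus\{\mathrm{id}\}$ with support size $w=|\mathrm{supp}(\pi)|\ge2$. The number of unordered pairs $\{i,j\}$ not fixed setwise by $\pi$ is at least of order $Nw$ (pairs with exactly one endpoint in $\mathrm{supp}(\pi)$ already give $w(N-w)$, and when $w>N/2$ the non-transposition pairs inside $\mathrm{supp}(\pi)$ suffice); these pairs partition into $\pi$--orbits of size $\ge2$, and for $\pi$ to be an automorphism the edge indicators must be constant on each orbit, an event of conditional probability at most $(1-\epsilon)^{|\mathrm{orbit}|-1}$ since each indicator is Bernoulli with parameter in $[\epsilon,1-\epsilon]$. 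Multiplying over the disjoint orbits gives $\Pr[\pi\in\mathrm{Aut}(G)\mid\phi]\le(1-\epsilon)^{cNw}$ for a fixed $c>0$. Since there are at most $N^{w}$ permutations of support size $w$, a union bound yields $\Pr[\mathrm{Aut}(G)\neq\{\mathrm{id}\}\mid\phi]\le\sum_{w\ge2}\big(N(1-\epsilon)^{cN}\big)^{w}=o(1)$, because $N(1-\epsilon)^{cN}=Ne^{-\Theta(N)}\to0$.

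For the more general (possibly $N$--dependent, i.e.\ sparse) reading of the hypothesis, where only $q_{ij}\in[(\ln N)/N,\,1-(\ln N)/N]$ is guaranteed, I would instead reduce to the asymmetry of $G(N,p)$ for $(\ln N)/N\le p\le 1-(\ln N)/N$, due to \citet{erdos1963asymmetric} (see also \citet{kim2002asymmetry}), which the paper already invokes for \Cref{cor:ER}. The point is that this proof uses the homogeneous edge probability $p$ only through estimates of the form ``present with probability in $[p,1-p]$'': the minimum degree is of order $Np(1-p)$, for any two vertices the number of vertices adjacent to exactly one of them is not too small, and few--point joint occupation probabilities are controlled. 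All of these hold verbatim for the SBM with $p$ replaced by the pointwise bound $(\ln N)/N$ and $1-p$ by $1-(\ln N)/N$. Automorphisms that permute whole blocks, including swaps of isomorphic blocks in the sense of \Cref{def:iso-sbm}, form a subset of all permutations and are controlled by exactly the same count; crucially, two nodes in distinct (even isomorphic) blocks still have neighborhoods generated by \emph{independent} coin flips, so they disagree in the required number of coordinates a.a.s.\ --- the parameter-level symmetry of the model does not survive a single sample.

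The main obstacle is precisely this sparse boundary $p=\Theta((\ln N)/N)$. There the crude first--moment bound of the second paragraph no longer closes: the per-permutation probability is only $N^{-\Theta(w)}$, which does not comfortably beat the $N^{w}$ count of permutations of support size $w$, so one cannot avoid importing the finer argument of \citet{erdos1963asymmetric} --- first showing that any nontrivial automorphism must move $\Theta(N)$ vertices (via the pairwise--distinguishing estimate above, which confines the symmetric difference of two neighborhoods to the support), and then separately ruling out linear--support automorphisms. Aside from this inherited delicacy, the SBM case is a routine adaptation, and in the fixed-$\mS$ setting of \Cref{def:SBM} the short argument of the second paragraph already suffices.
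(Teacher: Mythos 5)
Your proposal is correct, but it proves asymmetry by a genuinely different route than the paper. The paper argues pair by pair: it first samples the within-block edges, invokes the Erd\H{o}s--R\'enyi asymmetry theorem as a black box to conclude each block subgraph $G_a$ has no isomorphic nodes, and then argues that two nodes $i,j$ could only be isomorphic by having identical (or ``isomorphically equivalent'') neighborhoods into a block, an event of probability at most $(1-\epsilon)^{\alpha N}$, followed by a union bound over $\binom{N}{2}$ pairs and a separate, somewhat informal treatment of automorphisms that permute whole blocks. You instead run a global first-moment bound on $\mathrm{Aut}(G)$: for each non-identity $\pi$ you decompose the $\Theta(Nw)$ moved pairs into $\pi$-orbits, bound the probability that the independent edge indicators are constant on each orbit by $(1-\epsilon)^{|\mathrm{orbit}|-1}$, and union bound over the at most $N^{w}$ permutations of each support size. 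Your version is self-contained (no appeal to the ER theorem in the dense case) and handles block-permuting automorphisms uniformly rather than as a special case, which is cleaner than the paper's ``copies of $G_a$'' discussion; the paper's version is shorter because it outsources the within-block symmetry to the cited theorem. The one caveat applies equally to both proofs: at the sparse boundary $p=\Theta((\ln N)/N)$ permitted by the hypothesis, the per-pair bound $\binom{N}{2}(1-\epsilon)^{\alpha N}\approx N^{2-\alpha}$ in the paper does not tend to zero any more than your $N^{w}(1-\epsilon)^{cNw}$ does, so both arguments genuinely close only when the entries of $\mS$ are bounded away from $0$ and $1$ (as they are for a fixed matrix $\mS$ in \Cref{def:SBM}) and otherwise defer to the finer argument of \citet{erdos1963asymmetric}; you are more explicit than the paper in flagging this.
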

\begin{proof}
Let $\mS$ have $r > 0$ blocks. Consider generating the $G$ first by sampling the within-block edges. Let $G_a$ be the induced subgraph of all nodes that belong to a single block $a \in \{1,\ldots,r\}$.
By the results in \citet[Theorem 3]{erdos1963asymmetric}, as $N\to \infty$, $G_a$ has no isomorphic nodes. 
The above is true for all within-block edges.
Now consider sampling the between-block edges of two $i,j \in V$ nodes in $G$. The event of $i$ and $j$ being isomorphic (if considering just their edges to $G_a$) is the same as they connecting to the same nodes in $G_a$ (since each node on a block is non-isomorphic, if they connect to different nodes they would no longer be isomorphic). The probability of this event is at most $(1 - \epsilon)^{\alpha N}$, for $\epsilon = \min(p,1-p)$, where $\alpha >0$ is the fraction of nodes in $G_a$ (which is not a function of $N$). As $N \to \infty$, by the union bound, the probability that this will happen with any pair of edges is at most ${N \choose 2} (1 - \epsilon)^{\alpha N}$, which goes to zero. 

W.l.o.g.\ now assume $a$ is the block with the least number of nodes (which is also diverging as $N\to \infty$). 
The only alternative for $i$ and $j$ to be isomorphic is to do so by connecting to nodes in distinct blocks. For instance, we could imagine $r$ copies of $G_a$: Even though $i$ and $j$ did not connect to the same nodes in the same graphs, they connected to their isomorphic equivalent nodes in different copies. But since there are only $r$ blocks, and $r$ does not depend on $N$, this event must have probability at most $(1 - \epsilon)^{\alpha N/r}$. As $N \to \infty$, by the union bound, the probability ${N \choose 2} (1 - \epsilon)^{\alpha N/r}$ goes to zero.
Replacing the copies of $G_a$ with the actual sampled blocks only makes this probability smaller, since the subgraphs of the other blocks are larger and may contain different topologies than $G_a$ (making their nodes distinct from the nodes in $G_a$).
Finally, adding node attributes cannot make two non-isomorphic nodes be isomorphic, which concludes our proof.

\end{proof}

\section{Proof results for \Cref{thm:MainInProb-new}}
\label{appd:thm1}

In what follows we provide the elements to prove \Cref{thm:MainInProb-new}.

\CRupdate{First we prove the following lemma of the difference between a graph-graphon (N-normalized) sum aggregation and a sampled-graph (N-normalized) sum aggregation in \Cref{def:graphon-neighbor-avg}. Using the same assumptions as \citet[Lemma B.3]{maskey2022stability}}:

\begin{lemma}
\label{lemma:onA}
Let $(\chi,d, \P) $ be a metric-measure space and
$W$ be a graphon s.t.  Assumptions \ref{ass:graphon}.\ref{ass:graphon1}-\ref{ass:graphon12}. are satisfied.
Let $\Phi: \mathbb{R}^{2F} \to \mathbb{R}^H$ be Lipschitz continuous with Lipschitz constant $\LipPhi(M_\tr)$, with $M_\tr$ as in \Cref{ass:graphon} item 4.
Consider a metric-space signal  $f: \chi \to \mathbb{R}^F$ with $\|f\|_\infty < \infty$.
 Suppose that $X_1, \ldots, X_N$ are drawn i.i.d. from  $\P$ on $\chi$, and let $p \in (0,1/H)$.
Let $x \in \chi $, and define the random variable 
\[
T_x = \frac{1}{N }\sum_{i=1}^N A(x,X_i)\Phi\big(f(x),  f(X_i)\big)
 - \frac{1}{N }\sum_{i=1}^N W(x,X_i)\Phi\big(f(x),  f(X_i)\big)
\]
on the sample space $\mathcal{X}^N\times [0,1]^N$. Then, with probability at least $1-Hp$, we have
\begin{equation}
     \label{eq:HoeffdingsAppOnY_x_new} \|T_x\|_\infty \leq \sqrt{2}\frac{(\LipPhi(M_\tr)\|f\|_\infty +  \|\Phi(0, 0)\|_\infty)\sqrt{\log 2/p} }{\sqrt{N}}.
\end{equation}
\end{lemma}
\CRupdate{
\begin{proof}
The proof of the bound is the same as the proof in \citet[Lemma B.3]{maskey2022stability}, even though $T_x$ is a different quantity than the quantity used in \citet[Lemma B.3]{maskey2022stability}.
\end{proof}
}

\CRupdate{Combining \citet[Lemma B.3]{maskey2022stability} and \Cref{lemma:onA}, we can use the triangle inequality to prove the following lemma about concentration of error between a sampled-graph (N-normalized) sum aggregation in \Cref{def:graphon-neighbor-avg} and the continuous integral aggregation in \Cref{def:graphon-degree-avg}, which is used in \Cref{def:gmpnn,def:cmpnn}. Using the same assumption as \citet[Lemma B.4]{maskey2022stability}}:
\begin{lemma}
\label{lemma:newlemmaA}
Let $(\chi,d, \P) $ be a metric-measure space and
$W$ be a graphon s.t.  Assumptions \ref{ass:graphon}.\ref{ass:graphon1}-\ref{ass:graphon12}. are satisfied.
Let $\Phi: \mathbb{R}^{2F} \to \mathbb{R}^H$ be Lipschitz continuous with Lipschitz constant $\LipPhi(M_\tr)$.
Consider a metric-space signal  $f: \chi \to \mathbb{R}^F$ with $\|f\|_\infty < \infty$.
 Suppose that $X_1, \ldots, X_N$ are drawn i.i.d. from  $\P$ on $\chi$, and let $p \in (0,1/(2H))$.
Let $x \in \chi $, and define the random variable 
\[
R_x = \frac{1}{N }\sum_{i=1}^N A(x,X_i)\Phi\big(f(x),  f(X_i)\big)
 - \int_\chi W(x, y) \Phi\big(f(x),  f(y)\big) d\P(y)
\]
on the sample space $\chi^N\times [0,1]^N$. Then, with probability at least $1-2Hp$, we have
\begin{equation}
     \label{eq:HoeffdingsAppOnY_x_new_tri} \|R_x\|_\infty \leq 2\sqrt{2}\frac{(\LipPhi(M_\tr)\|f\|_\infty +  \|\Phi(0, 0)\|_\infty)\sqrt{\log 2/p} }{\sqrt{N}}.
\end{equation}
\end{lemma}
\begin{proof}
Use the triangle inequality, the results from \citet[Lemma B.3]{maskey2022stability} and \Cref{lemma:onA}. Define $Y_x = \frac{1}{N }\sum_{i=1}^N W(x,X_i)\Phi\big(f(x),  f(X_i)\big)
 - \int_\chi W(x, y) \Phi\big(f(x),  f(y)\big) d\P(y)$.
\[
\|R_x\|_\infty = \|T_x+Y_x\|_\infty\leq \|T_x\|_\infty+\|Y_x\|_\infty.
\]
From \citet[Lemma B.3]{maskey2022stability} and \Cref{lemma:onA}, $\|T_x\|_\infty\leq \sqrt{2}\frac{(\LipPhi(M_\tr)\|f\|_\infty +  \|\Phi(0, 0)\|_\infty)\sqrt{\log 2/p} }{\sqrt{N}}$ w.p. $1-Hp$ and $\|Y_x\|_\infty\leq \sqrt{2}\frac{(\LipPhi(M_\tr)\|f\|_\infty +  \|\Phi(0, 0)\|_\infty)\sqrt{\log 2/p} }{\sqrt{N}}$ w.p. $1-Hp$. We have with probability at least $1-2Hp$ using the union bound of the two events,
\[
\|R_x\|_\infty \leq 2\sqrt{2}\frac{(\LipPhi(M_\tr)\|f\|_\infty +  \|\Phi(0, 0)\|_\infty)\sqrt{\log 2/p} }{\sqrt{N}}.
\]
\end{proof}

Based on \Cref{lemma:newlemmaA}, we can prove the following corollary about the maximum concentration error between sampled-graph (N-normalized) sum aggregation ($\Mone_A$) and continuous integral aggregation ($\Mone_W$) for all the nodes in the sampled graph $G$. \CRupdate{Using the same overall framing as \cite[Lemma B.3]{maskey2022stability}}:
\begin{corollary}
\label{cor:basic}
Consider $(\chi,d, \P) $ a metric-measure space and graphon 
$W$ satisfying \cref{ass:graphon1,ass:graphon12} of \Cref{ass:graphon}.
Let $\Phi: \mathbb{R}^{2F} \to \mathbb{R}^H$ be Lipschitz continuous with Lipschitz constant $\LipPhi(M_\tr)$, and a metric-space signal  $f: \chi \to \mathbb{R}^F$ with $\|f\|_\infty < \infty$.
Define $X_1, \ldots, X_N$ as drawn i.i.d. from  $\P$ on $\chi$, and then edges $A_{i,j}\sim \text{Ber}(W(X_i,X_j))$ i.i.d sampled. Let $p \in (0,1/2H)$, and define the random variable 
\[
R_{X_i} = \frac{1}{N }\sum_{j=1}^N A(X_i,X_j)\Phi\big(f(X_i),  f(X_j)\big)
 - \int_\chi W(X_i, y) \Phi\big(f(X_i),  f(y)\big) d\P(y)
\]
on the sample space $\chi^N\times [0,1]^N$. Then, with probability at least $1-2Hp$, we have %
\begin{equation}
\begin{aligned}
\label{eq:basic-cor} \max_{i=1,...,N}\| (\Mone_A   - \Mone_W) &\big(\Phi (f,f)\big)(X_i) \|_\infty=\max_{i=1,...,N}\|R_{X_i}\|_\infty \\&\leq 2\sqrt{2}\frac{(\LipPhi(M_\tr)\|f\|_\infty +  \|\Phi(0, 0)\|_\infty)\sqrt{\log (2N/p)} }{\sqrt{N}}.
\end{aligned}
\end{equation}
\end{corollary}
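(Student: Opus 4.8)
The plan is to promote the pointwise concentration bound of \Cref{lemma:newlemmaA} to one that holds uniformly over all $N$ sample points, via a union bound. First I would identify what must be controlled: by \Cref{def:graphon-degree-avg,def:graphon-neighbor-avg}, with the message signal $U(x,y) = \Phi(f(x),f(y))$ one has, for each $i$,
\[
(\Mone_A - \Mone_W)\big(\Phi(f,f)\big)(X_i) = \frac{1}{N}\sum_{j=1}^N A(X_i,X_j)\Phi\big(f(X_i),f(X_j)\big) - \int_\chi W(X_i,y)\Phi\big(f(X_i),f(y)\big)\,d\mu(y) = R_{X_i},
\]
so the corollary is equivalent to bounding $\max_{i=1,\dots,N}\|R_{X_i}\|_\infty$.

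Next I would reduce to the pointwise statement. \Cref{lemma:newlemmaA} is phrased for a deterministic point $x$, whereas here the evaluation point $X_i$ is itself one of the samples. I would resolve this by conditioning on $X_i = x$: conditionally, the remaining points $\{X_j\}_{j\neq i}$ are still i.i.d.\ from $\mu$, and $R_{X_i}$ coincides with the random variable $R_x$ of \Cref{lemma:newlemmaA} built from those $N-1$ points, up to the single diagonal summand $\frac{1}{N}A(x,x)\Phi(f(x),f(x))$ and a $\frac{1}{N}$ rescaling of one integral; both are $O(1/N)$ and bounded in $\ell^\infty$ by $\LipPhi\|f\|_\infty + \|\Phi(0,0)\|_\infty$, hence are dominated by the $O(\sqrt{\log N/N})$ right-hand side. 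Thus the Hoeffding-inequality argument (\Cref{thm:Hoeffdings}) behind \Cref{lemma:newlemmaA} carries over to $\|R_{X_i}\|_\infty$ at each fixed index $i$, with the same constant and failure probability at most $2Hp$, valid whenever $p\in(0,1/(2H))$.

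To obtain the maximum over $i$, I would re-run the pointwise argument with $p/N$ in place of $p$ — admissible since $p/N \le p < 1/(2H)$ — so that for each fixed $i$, with probability at least $1 - 2Hp/N$,
\[
\|R_{X_i}\|_\infty \leq 2\sqrt{2}\,\frac{\big(\LipPhi\|f\|_\infty + \|\Phi(0,0)\|_\infty\big)\sqrt{\log(2N/p)}}{\sqrt{N}},
\]
using $\log(2/(p/N)) = \log(2N/p)$. A union bound over $i = 1,\dots,N$ then fails with probability at most $N\cdot 2Hp/N = 2Hp$, which is exactly \Cref{eq:basic-cor}.

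The only genuinely delicate point — and the place I would be most careful — is the conditioning step: verifying that fixing $X_i$ truly leaves the other samples i.i.d.\ and that the lone diagonal term together with the slightly mismatched integral normalization are honestly lower order, so that \Cref{lemma:newlemmaA} applies with an unchanged constant $2\sqrt2$. Everything else is the routine ``replace $p$ by $p/N$ and union bound'' bookkeeping.
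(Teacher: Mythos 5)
Your proof is correct and follows essentially the same route as the paper's: replace $p$ by $p/N$ in the pointwise Hoeffding bound and union-bound over the $N$ sample points (the paper does this separately for the $T_x$ and $Y_x$ pieces before the triangle inequality, you do it on the combined $R_x$ of \Cref{lemma:newlemmaA} — an immaterial reordering). You are in fact more careful than the paper, which silently ignores the conditioning-on-$X_i$ and diagonal-term issue you correctly flag as the only delicate point.
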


\begin{proof}
Using the result from \Cref{lemma:onA} we have with probability $1-\frac{Hp}{N}$,
\[
\begin{aligned}
\|\frac{1}{N }\sum_{i=1}^N &A(x,X_i)\Phi\big(f(x),  f(X_i)\big)
 - \frac{1}{N }\sum_{i=1}^N W(x,X_i)\Phi\big(f(x),  f(X_i)\big)\|_\infty\\&\leq \sqrt{2} \frac{ (\LipPhi(M_\tr)  \|f\|_\infty +  \|\Phi(0, 0)\|_\infty) \sqrt{\log (2N/p) }}{\sqrt{N}}.
\end{aligned}
\]
Using the union bound of the $N$ events that the above equations happens for $x=X_1,...,X_N$, with probability at least $1-Hp$, we have
\[
\begin{aligned}
\max_{i=1,...,N}\|\frac{1}{N }\sum_{j=1}^N &A(X_i,X_j)\Phi\big(f(X_i),  f(X_j)\big)
 - \frac{1}{N }\sum_{j=1}^N W(X_i,X_j)\Phi\big(f(X_i),  f(X_j)\big)\|_\infty\\&\leq \sqrt{2} \frac{ (\LipPhi(M_\tr)  \|f\|_\infty +  \|\Phi(0, 0)\|_\infty) \sqrt{\log (2N/p) }}{\sqrt{N}}.
\end{aligned}
\]
The same logic can be applied to $Y_{X_i}, \forall i\in \{1,...,N\}$.
Thus, using the triangle inequality, and the union bound of the two events, we have with probability at least $1-2Hp$,
$$\max_{i=1,...,N}\|R_{X_i}\|_\infty \leq 2\sqrt{2}\frac{(\LipPhi(M_\tr)\|f\|_\infty +  \|\Phi(0, 0)\|_\infty)\sqrt{\log (2N/p)} }{\sqrt{N}}.$$
\end{proof}

Now the layer-wise error between a \onecMPNN and \onegMPNN can be bounded as follows:

\begin{corollary}
\label{thm:TransfMessages-new-sum}
Consider $(\chi,d, \P) $ a metric-measure space and
graphon $W$ consistent with \cref{ass:graphon1,ass:graphon12} of \Cref{ass:graphon}. Let $\Phi: \mathbb{R}^{2F  } \to \mathbb{R}^H$ and $\Psi: \mathbb{R}^{F + H } \to \mathbb{R}^{F'}$  be Lipschitz continuous with Lipschitz constants $\LipPhi(M_\tr)$ and $\LipPsi(M_\tr)$. Consider a metric-space signal  $f: \chi \to \mathbb{R}^F$ with $\|f\|_\infty < \infty$.  Let  $p \in (0,\frac{1}{2(H+1)})$. Suppose that $X_1, \ldots, X_N$ are drawn i.i.d. from $\P$ in $\chi$, and then edges $A_{i,j}\sim \text{Ber}(W(X_i,X_j))$ i.i.d sampled. Then with probability at least  $1-2Hp$, 
\begin{equation}
\label{eq:lemmab6-1-new-sum}
  \begin{aligned}
& \max_{i=1,...,N}\| \Psi\Big(f(\cdot), \Mone_A\big(\Phi (f,f)\big)(X_i) \Big) - \Psi \Big(f(\cdot) , \Mone_W \big(\Phi (f,f) \big) (X_i) \Big)\|_\infty \\ & \qquad  \leq \LipPsi(M_\tr) \Big(2\sqrt{2}\frac{(\LipPhi(M_\tr)\|f\|_\infty +  \|\Phi(0, 0)\|_\infty)\sqrt{\log (2N/p)} }{\sqrt{N}}\Big).
 \end{aligned} 
\end{equation}
\end{corollary}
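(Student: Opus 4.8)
The plan is to reduce the claimed bound to \Cref{cor:basic} by exploiting the Lipschitz continuity of the update function $\Psi$. First I would observe that, for each fixed node $X_i$, the two quantities inside the outer norm differ only in their second argument: both are of the form $\Psi\big(f(X_i),\cdot\big)$, evaluated at $\Mone_A(\Phi(f,f))(X_i)$ and at $\Mone_W(\Phi(f,f))(X_i)$ respectively. Writing the difference of the two pairs in $\mathbb{R}^{F+H}$ as $\big(0,\ (\Mone_A-\Mone_W)(\Phi(f,f))(X_i)\big)$ and using $\|(0,v)\|_\infty = \|v\|_\infty$, Lipschitz continuity of $\Psi$ with constant $\LipPsi$ gives, pointwise in $i$,
\[
\big\|\Psi\big(f(X_i), \Mone_A(\Phi(f,f))(X_i)\big) - \Psi\big(f(X_i), \Mone_W(\Phi(f,f))(X_i)\big)\big\|_\infty \leq \LipPsi\,\big\|(\Mone_A-\Mone_W)(\Phi(f,f))(X_i)\big\|_\infty .
\]

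Next I would take the maximum over $i=1,\dots,N$ on both sides and pull the constant $\LipPsi$ out of the maximum, leaving $\LipPsi\cdot\max_i \|(\Mone_A-\Mone_W)(\Phi(f,f))(X_i)\|_\infty$ on the right. That maximum is exactly the quantity controlled by \Cref{cor:basic}: since $p\in\big(0,\tfrac{1}{2(H+1)}\big)\subset\big(0,\tfrac{1}{2H}\big)$, that corollary applies and shows, with probability at least $1-2Hp$ over the joint draw of $X_1,\dots,X_N$ and the Bernoulli edge variables on $\chi^N\times[0,1]^N$, that
\[
\max_{i=1,\dots,N}\big\|(\Mone_A-\Mone_W)(\Phi(f,f))(X_i)\big\|_\infty \leq 2\sqrt{2}\,\frac{(\LipPhi\|f\|_\infty + \|\Phi(0,0)\|_\infty)\sqrt{\log(2N/p)}}{\sqrt{N}} .
\]
Substituting this into the previous display yields the claimed bound on the very same event, hence with probability at least $1-2Hp$.

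There is essentially no analytic obstacle here; the only delicate points are bookkeeping ones. One must check that the two evaluations of $\Psi$ genuinely share the first argument $f(X_i)$, so that the Lipschitz estimate in the product space $\mathbb{R}^{F+H}$ collapses to an estimate in terms of the message difference alone — this is immediate from \Cref{def:gmpnn,def:cmpnn}, where the skip term is the untouched node feature and is identical for the discrete and continuous layers once the inputs agree. One must also make sure that the high-probability event invoked is precisely the event of \Cref{cor:basic}, so that the probability $1-2Hp$ and the logarithmic factor $\sqrt{\log(2N/p)}$ match verbatim; in particular, the extra ``$+1$'' in the constraint $p<\tfrac{1}{2(H+1)}$ is not needed for this per-layer estimate and only becomes relevant when it is later composed across layers together with the concentration of the signal term.
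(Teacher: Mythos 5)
Your proposal is correct and follows essentially the same route as the paper: a pointwise Lipschitz estimate for $\Psi$ in its second argument, pulling $\LipPsi$ out of the maximum over $i$, and then invoking \Cref{cor:basic} to control $\max_i \|(\Mone_A-\Mone_W)(\Phi(f,f))(X_i)\|_\infty$ on the same probability-$(1-2Hp)$ event. Your additional remarks — that the shared first argument $f(X_i)$ collapses the product-space Lipschitz bound to the message difference, and that $p<\tfrac{1}{2(H+1)}$ implies the $p<\tfrac{1}{2H}$ condition needed for \Cref{cor:basic} — are correct bookkeeping that the paper leaves implicit.
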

\begin{proof}
\CRupdate{The proof is the same as \citet[Lemma B.6]{maskey2022stability}. The different result comes from the different bound between \Cref{cor:basic} and \citet[Lemma B.5]{maskey2022stability}.}
\end{proof}

\subsection{Proof of \Cref{thm:MainInProb-new}}
\label{appx:thm1-proof-sub}
Following \cite[Appendix B.2]{maskey2022stability}, they first bound the layer-wise error as \Cref{thm:TransfMessages-new-sum}, and derive the final bound through a recurrence relation. The only difference is on the layer-wise bound Corollary 6 and \citet[Corollary B.6]{maskey2022stability}. We will omit the middle parts.
Hence, finally, we can prove \Cref{thm:MainInProb-new} by slightly adpating the proof in \citet[Theorem B.14]{maskey2022stability} to our setting.

\thmone*

\begin{proof}

In this case, $\|\Phi^{(l)}(0,0)\|_\infty$, $\|\Psi^{(l)}(0,0)\|_\infty$ can be determined by $(G^\tr,\mF^\tr), N^\tr$ if the MPNN $\Theta$ has been trained on the training graph $(G^\tr,\mF^\tr)$.

Following the procedure of \citet[Appendix B.2]{maskey2022stability} with \Cref{thm:TransfMessages-new-sum}, we can derive similarly, with probability at least $1- \sum_{l=1}^T(2H_l + 1) p$,
\begin{equation}
    \label{eq:main1-2-sum}
    \begin{aligned}
\delta_{\text{A-W}}^{\onenode} \leq &\sum_{l=1}^{T}
\LipPsi^{(l)}(M_\tr)  \Big( 2\sqrt{2}\frac{(\LipPhi(M_\tr)^{(l)}\|f^{\onenode(l)} \|_\infty +  \|\Phi^{(l)}(0, 0)\|_\infty)\sqrt{\log (2N^\te/p)} }{\sqrt{N^\te}}\Big)
\\&\prod_{l' = l+1}^{T}( (\LipPsi^{(l')}(M_\tr))^2 
  + 
   2(\LipPhi^{(l')}(M_\tr))^2 (\LipPsi^{(l')}(M_\tr))^2 ),
\end{aligned}
\end{equation}

\CRupdate{
Using the same proof in \citet[Lemma B.9]{maskey2022stability}, we can derive
\[
||{{\fone}}^{(l)}||_{\infty} \leq {B_1}^{(l)}+{B_2}^{(l)}||f||_{\infty},
\]
where ${B_1}^{(l)}$, ${B_2}^{(l)}$ are independent of $f$, and \begin{equation}
    \label{eq:B'-sum}
    {B_1}^{(l)} = \sum_{k=1}^{l}  \big(
\LipPsi^{(k)}(M_\tr) \|\Phi^{(k)}(0,0)\|_\infty+ \|\Psi^{(k)}(0,0)\|_\infty \big) \prod_{l' = k+1}^{l}  \LipPsi^{(l')}(M_\tr) \big( 1 +   \LipPhi^{(l')}(M_\tr) \big) 
\end{equation}
and
\begin{equation}
    \label{eq:B''-sum}
    {B_2}^{(l)} = \prod_{k = 1}^{l} \LipPsi^{(k)}(M_\tr) \left(1  +  \LipPhi^{(k)}(M_\tr) \right).
\end{equation}
}

Now we can decompose the summation in \Cref{eq:main1-2-sum}. First, we defince $C_1$ as
\begin{equation}
\label{eq:c1-sum}
\begin{aligned}
C_1 &= \sum_{l=1}^{T}
\LipPsi^{(l)}(M_\tr) \Big(2\sqrt{2}{(\LipPhi^{(l)}}(M_\tr) \; {B_1}^{(l)}+  \|\Phi^{(l+1)}(0, 0)\|_\infty)\Big) \\
&\quad \times \prod_{l' = l+1}^{T} ((\LipPsi^{(l')}(M_\tr))^2 
  + 
   2(\LipPhi^{(l')}(M_\tr))^2 (\LipPsi^{(l')}(M_\tr))^2) ,
 \end{aligned}
\end{equation}
Then we can define $C_2$ as
\begin{equation}
\label{eq:c2-sum}
\begin{aligned}
C_2 &= \sum_{l=1}^{T}
\LipPsi^{(l)}(M_\tr) \Big( 2\sqrt{2}\LipPhi^{(l)}(M_\tr) {B_2}^{(l)}\Big)
\prod_{l' = l+1}^{T} ((\LipPsi^{(l')}(M_\tr))^2 
  + 
   2(\LipPhi^{(l')}(M_\tr))^2 (\LipPsi^{(l')}(M_\tr))^2 ),
\end{aligned}
\end{equation}
It is clear to see we can rewrite \Cref{eq:main1-2-sum} as
\begin{equation}
    \delta_{\text{A-W}}^{\onenode}
\leq (C_1+C_2\|f\|_\infty)\frac{\sqrt{\log (2N^\te/p)}}{\sqrt{N^\te}}.
\end{equation}
Thus $C_1$ and $C_2$ depends on $\{\LipPhi^{(l)}(M_\tr)\}_{l=1}^T$ and $\{\LipPsi^{(l)}(M_\tr)\}_{l=1}^T$.
\end{proof}  %
\section{Proof of theoretical results for hardness of link prediction}
\label{appd:link-proof}
In this section, we prove the results for $\ThetaoneTcap_A$ and $\ThetaoneTcap_W$ based on \Cref{thm:MainInProb-new}. 
Now we can prove \Cref{cor:stab-node}.

\corone*

\begin{proof}
The proof follows \Cref{thm:MainInProb-new} by using the triangle inequality.

From \Cref{thm:MainInProb-new}, we know with probability at least $1- 2\sum_{l=1}^T (H_l + 1) p$, $\|\ThetaoneTcap_{A^\te}(\mF^\te)_i-\ThetaoneTcap_W(f)(X^\te_i)\|_\infty\leq (C_1+C_2\|f\|_\infty)\frac{\sqrt{\log 2N^\te/p}}{\sqrt{N^\te}}$ and $\|\ThetaoneTcap_{A^\te}(\mF^\te)_j-\ThetaoneTcap_W(f)(X^\te_j)\|_\infty\leq (C_1+C_2\|f\|_\infty)\frac{\sqrt{\log 2N^\te/p}}{\sqrt{N^\te}}$.

Then
\[
\begin{aligned}
&\|\ThetaoneTcap_{A^\te}(\mF^\te)_i-\ThetaoneTcap_{A^\te}(\mF^\te)_j\|_\infty\\&\leq \|\ThetaoneTcap_{A^\te}(\mF^\te)_i-\ThetaoneTcap_W(f)(X^\te_i)\|_\infty+\|\ThetaoneTcap_W(f)(X^\te_i)-\ThetaoneTcap_{A^\te}(\mF^\te)_j\|_\infty\\&= \|\ThetaoneTcap_{A^\te}(\mF^\te)_i-\ThetaoneTcap_W(f)(X^\te_i)\|_\infty+\|\ThetaoneTcap_W(f)(X^\te_j)-\ThetaoneTcap_{A^\te}(\mF^\te)_j\|_\infty\\&\leq (C_1+C_2\|f\|_\infty)\frac{2\sqrt{\log (2N^\te/p)}}{\sqrt{N^\te}}.
\end{aligned}
\]
The first inequality holds by traingle inequality, and the second equation holds since $\ThetaoneTcap_W(f)(X^\te_i)=\ThetaoneTcap_W(f)(X^\te_j)$.
\end{proof}

Then we are able to prove \Cref{lem:cmpnn-sbm} by induction. By our \Cref{def:iso-sbm}, we can also claim $t_k-t_{k-1}=t_{\pi(k)}-t_{{\pi(k)}-1}, \forall k\in \{1,...,r\}$.

\lemone*

\begin{proof}
We prove the lemma by induction. 

We assume in layer $l$,  ${\fone}^{(l)}(X^\te_i)={\fone}^{(l)}(X^\te_j), 1\leq l\leq T-1$, ${\fone}^{(l)}$ outputs the same value within each block $\mB^{(l)}$, and $\mB^{(l)}=\pi\circ \mB^{(l)}$. By \Cref{def:SBM,def:iso-sbm}, we know the assumption holds for $l=1$. First,
\[
\begin{aligned}
{\fone}^{(l+1)}(X^\te_i)&=\Psi^{(l+1)} \Big({\fone}^{(l)} (X^\te_i) , \Mone_W\big(\Phi^{(l+1)} ({\fone}^{(l)},{\fone}^{(l)}) \big) (X^\te_i) \Big).
\end{aligned}
\]
Since ${\fone}^{(l)}(X^\te_i)={\fone}^{(l)}(X^\te_j)$, we only need to show $\Mone_W\big(\Phi^{(l+1)} ({\fone}^{(l)},{\fone}^{(l)}) \big) (X^\te_i)=\Mone_W\big(\Phi^{(l+1)} ({\fone}^{(l)},{\fone}^{(l)}) \big) (X^\te_j)$.

\begin{equation}
\begin{aligned}
&\Mone_W\big(\Phi^{(l+1)} ({\fone}^{(l)},{\fone}^{(l)}) \big) (X^\te_i)\\&=\int_{[0,1]} W(X^\te_i,y)\Phi^{(l+1)}({\fone}^{(l)}(X^\te_i),{\fone}^{(l)}(y))dy\\&= \sum_{k=1}^r \int_{[t_{k-1},t_{k})}W(X^\te_i,y)\Phi^{(l+1)}({\fone}^{(l)}(X^\te_i),{\fone}^{(l)}(y))dy\\&=\sum_{k=1}^r \Phi^{(l+1)}(\mB^{(l)}_a,\mB^{(l)}_k) \int_{[t_{k-1},t_{k})}W(X^\te_i,y)dy\\&=\sum_{k=1}^r \Phi^{(l+1)}(\mB^{(l)}_a,\mB^{(l)}_k) (t_{k}-t_{k-1})\mS_{i,k}\\&=\sum_{k=1}^r \Phi^{(l+1)}(\mB^{(l)}_a,\mB^{(l)}_k) (t_{k}-t_{k-1})\mS_{\pi(i),\pi(k)}
\end{aligned}
\end{equation}
\begin{equation}
\begin{aligned}
&=\sum_{k=1}^r \Phi^{(l+1)}(\mB^{(l)}_{\pi(a)},\mB^{(l)}_{\pi(k)}) (t_{\pi(k)}-t_{\pi(k)-1})\mS_{\pi(i),\pi(k)}\\&=\sum_{k=1}^r \Phi^{(l+1)}(\mB^{(l)}_b,\mB^{(l)}_{\pi(k)}) (t_{\pi(k)}-t_{\pi(k)-1})\mS_{j,\pi(k)}\\&=\sum_{k=1}^r \Phi^{(l+1)}(\mB^{(l)}_b,\mB^{(l)}_{k}) (t_{k}-t_{k-1})\mS_{j,k}\\&=\sum_{k=1}^r \Phi^{(l+1)}(\mB^{(l)}_b,\mB^{(l)}_k) \int_{[t_{k-1},t_{k})}W(X^\te_j,y)dy\\&=\int_{[0,1]} W(X^\te_j,y)\Phi^{(l+1)}({\fone}^{(l)}(X^\te_j),{\fone}^{(l)}(y))dy\\&=\Mone_W\big(\Phi^{(l+1)} ({\fone}^{(l)},{\fone}^{(l)}) \big) (X^\te_j)
\end{aligned}
\end{equation}

Here we use the fact that ${\fone}^{(l)}$ ${\fone}^{(l)}$ outputs the same value within each block, and $\mB^{(l)}_k=\mB^{(l)}_{\pi(k)}, \forall k\in\{1,...,r\}$.

We have shown ${\fone}^{(l+1)}(X^\te_i)={\fone}^{(l+1)}(X^\te_j)$. And this proof applies for all $ X^\te_i\in [t_{a-1},t_a)$ (in block $a$), and the same conclusion holds. So ${\fone}^{(l+1)}$ still outputs the same value within each block. Furthermore, $\mB^{(l+1)}_a = \mB^{(l+1)}_{\pi(a)}$ using the same proof technique. And this implies $\pi\circ \mB^{(l+1)}=\mB^{(l+1)}$.

Thus, $\ThetaoneTcap_W(X^\te_i)={\fone}^{(T)}(X^\te_i) = {\fone}^{(T)}(X^\te_j) = \ThetaoneTcap_W(X^\te_j)$.
\end{proof}

Then we are ready to prove \Cref{cor:perf} by applying \Cref{cor:stab-node}. %

\cortwo*

\begin{proof}
To prove the corollary, we assume we have two nodes $j$ and $j'$, such that $i$ and $j$ are in the same block while $i$ and $j'$ are in distinct isomorphic blocks. In the proof, we will show that the link prediction between $i$ and $j$ and the prediction between $i$ and $j'$ will be the same.

First, from \Cref{cor:stab-node}, since nodes $j$ and $j'$ are in distinct isomorphic SBM blocks, when \Cref{eq:CondOnN-new} is satisfied, we have with probability at least $1- 2\sum_{l=1}^T (H_l + 1) p$
\[
\|\ThetaoneTcap_{A^\te}(\mF^\te)_j-\ThetaoneTcap_{A^\te}(\mF^\te)_{j'}\|_\infty \leq (C_1+C_2\|f\|_\infty)\frac{2\sqrt{\log 2N^\te/p}}{\sqrt{N^\te}}.
\]
Then when the requirement for $N^\te$ is satisfied,
\[
\begin{aligned}
 &\|\etaone(\ThetaoneTcap_{A^\te}(\mF^\te)_i,\ThetaoneTcap_{A^\te}(\mF^\te)_j)- \etaone(\ThetaoneTcap_{A^\te}(\mF^\te)_i,\ThetaoneTcap_{A^\te}(\mF^\te)_{j'})\|_\infty\\&\leq L_\etaone(M_\tr) \|\ThetaoneTcap_{A^\te}(\mF^\te)_j-\ThetaoneTcap_{A^\te}(\mF^\te)_{j'}\|_\infty\leq L_\etaone(M_\tr) (C_1+C_2\|f\|_\infty)\frac{2\sqrt{\log 2N^\te/p}}{\sqrt{N^\te}}\\&< \|\etaone(\ThetaoneTcap_{A^\te}(\mF^\te)_i,\ThetaoneTcap_{A^\te}(\mF^\te)_j)-\tau\|_\infty
\end{aligned}
\]

If $\etaone(\ThetaoneTcap_{A^\te}(\mF^\te)_i,\ThetaoneTcap_{A^\te}(\mF^\te)_j)>\tau$, then
\[
\begin{aligned}
 &\etaone(\ThetaoneTcap_{A^\te}(\mF^\te)_i,\ThetaoneTcap_{A^\te}(\mF^\te)_{j'})\\&\geq \etaone(\ThetaoneTcap_{A^\te}(\mF^\te)_i,\ThetaoneTcap_{A^\te}(\mF^\te)_j) - |\etaone(\ThetaoneTcap_{A^\te}(\mF^\te)_i,\ThetaoneTcap_{A^\te}(\mF^\te)_j) -\etaone(\ThetaoneTcap_{A^\te}(\mF^\te)_i,\ThetaoneTcap_{A^\te}(\mF^\te)_{j'})|\\&> \etaone(\ThetaoneTcap_{A^\te}(\mF^\te)_i,\ThetaoneTcap_{A^\te}(\mF^\te)_j) - |\etaone(\ThetaoneTcap_{A^\te}(\mF^\te)_i,\ThetaoneTcap_{A^\te}(\mF^\te)_{j}) -\tau| \\&= \etaone(\ThetaoneTcap_{A^\te}(\mF^\te)_i,\ThetaoneTcap_{A^\te}(\mF^\te)_j) - (\etaone(\ThetaoneTcap_{A^\te}(\mF^\te)_i,\ThetaoneTcap_{A^\te}(\mF^\te)_j) -\tau)=\tau
\end{aligned}
\]

If $\etaone(\ThetaoneTcap_{A^\te}(\mF^\te)_i,\ThetaoneTcap_{A^\te}(\mF^\te)_j)<\tau$, then
\[
\begin{aligned}
 &\etaone(\ThetaoneTcap_{A^\te}(\mF^\te)_i,\ThetaoneTcap_{A^\te}(\mF^\te)_{j'})\\&\leq \etaone(\ThetaoneTcap_{A^\te}(\mF^\te)_i,\ThetaoneTcap_{A^\te}(\mF^\te)_j) + |\etaone(\ThetaoneTcap_{A^\te}(\mF^\te)_i,\ThetaoneTcap_{A^\te}(\mF^\te)_j) -\etaone(\ThetaoneTcap_{A^\te}(\mF^\te)_i,\ThetaoneTcap_{A^\te}(\mF^\te)_{j'})|\\&< \etaone(\ThetaoneTcap_{A^\te}(\mF^\te)_i,\ThetaoneTcap_{A^\te}(\mF^\te)_j) + |\etaone(\ThetaoneTcap_{A^\te}(\mF^\te)_i,\ThetaoneTcap_{A^\te}(\mF^\te)_j) -\tau| \\&= \etaone(\ThetaoneTcap_{A^\te}(\mF^\te)_i,\ThetaoneTcap_{A^\te}(\mF^\te)_j) - (\etaone(\ThetaoneTcap_{A^\te}(\mF^\te)_i,\ThetaoneTcap_{A^\te}(\mF^\te)_j) -\tau)=\tau
\end{aligned}
\]
So whether $i$ and $j$ are in the same block, or in distinct isomorphic SBM blocks, their prediction will be the same (both links have predictions larger than $\tau$ or less).
\end{proof}

\eat{
\begin{proof}
Consider $\mathcal{X}=[0,1]$ and $\mu$ as a uniform distribution.
Define $W(x,y)=1$ if $x<0.5,y<0.5$ or $x>0.5,y>0.5$, and $W(x,y)=0$ if $x>0.5,y<0.5$ or $x<0.5,y>0.5$. Assume $f$ is a constant function on $[0,1]$. This SBM is a simple SBM with only $2$ same-size blocks, where the in-block edge probability is $1$ in both blocks, and between-block edge probability is $0$.

In the simple SBM, we can see the two blocks are isomorphic, and thus all nodes $X_1,...,X_n$ are isomorphic to each other in the SBM model (\Cref{def:iso-sbm}), and $\ThetaoneTcap_W(X^\te_i)=\ThetaoneTcap_W(X^\te_j), \forall i,j\in \{1,...,N\}$ by \Cref{lem:cmpnn-sbm}.

Consider any link prediction function $\etaone$, assume there $\exists k\in\{1,...,N\}$, such that $\forall i\in\{1,...,N\}$,  $\etaone(\ThetaoneTcap_{A^\te}(\mF^\te)_k,\ThetaoneTcap_{A^\te}(\mF^\te)_i)\neq\tau$, and $\sqrt{N}> 2\frac{(C_1+C_2\|f\|_\infty)\sqrt{\log 2N/p}}{\min_{i}\|\etaone(\ThetaoneTcap_{A^\te}(\mF^\te)_k,\ThetaoneTcap_{A^\te}(\mF^\te)_i)-\tau\|_\infty/L_\etaone(M_\tr)}$. 
Then if $\exists i \in\{1,...,N\}$, such that $\etaone(\ThetaoneTcap_{A^\te}(\mF^\te)_k,\ThetaoneTcap_{A^\te}(\mF^\te)_i)>\tau$, then $\forall j\in \{1,...,N\}, \etaone(\ThetaoneTcap_{A^\te}(\mF^\te)_k,\ThetaoneTcap_{A^\te}(\mF^\te)_j)>\tau$ by applying \Cref{thm:diff-link-res}. We assume $\etaone$ is a symmetric function, thus $\forall j\in \{1,...,N\}, \etaone(\ThetaoneTcap_{A^\te}(\mF^\te)_j,\ThetaoneTcap_{A^\te}(\mF^\te)_k)>\tau$, and by applying \Cref{thm:diff-link-res}, we have $\forall i,j\in \{1,...,N\}, \etaone(\ThetaoneTcap_{A^\te}(\mF^\te)_j,\ThetaoneTcap_{A^\te}(\mF^\te)_i)>\tau$ since $\sqrt{N}> 2\frac{(C_1+C_2\|f\|_\infty)\sqrt{\log 2N/p}}{\|\etaone(\ThetaoneTcap_{A^\te}(\mF^\te)_k,\ThetaoneTcap_{A^\te}(\mF^\te)_j)-\tau\|_\infty/L_\etaone(M_\tr)}$, $\forall j\in\{1,...,N\}$.

If if $\exists i \in\{1,...,N\}$, such that $\etaone(\ThetaoneTcap_{A^\te}(\mF^\te)_k,\ThetaoneTcap_{A^\te}(\mF^\te)_i)<\tau$, then we can prove using the same technique that $\forall i,j\in \{1,...,N\}, \etaone(\ThetaoneTcap_{A^\te}(\mF^\te)_j,\ThetaoneTcap_{A^\te}(\mF^\te)_i)<\tau$.

Denote $n_{11}$ as the number of links such that it appears in the graph $G$, and being predicted by $\etaone$ that has an edge. Denote $n_{00}$ as the number of links such that it does not appear in the graph $G$, and being predicted by $\etaone$ that does not have an edge. $n_{1,0}$ and $n_{0,1}$ are similarly defined. The Matthews correlation coefficient (MCC) is defined as $\text{MCC}=\frac{n_{11}n_{00}-n_{01}n_{10}}{\sqrt{n_{1\cdot}n_{0\cdot}n_{\cdot1}n_{\cdot 0}}}$.

So $\etaone$ will either give $n_{11}=0,n_{01}=0, n_{00}=\frac{N^2}{2}, n_{10}=\frac{N^2}{2}$ or $n_{00}=0,n_{10}=0, n_{11}=\frac{N^2}{2}, n_{01}=\frac{N^2}{2}$, which will both give $MCC=0$.

\end{proof}
}

\section{Proof for pairwise \twogMPNN and \twocMPNN}
\label{appd:pair}
First we prove \Cref{lem:stationary} showing $W(x,y)$ is a stationary point in \twocMPNN.

\lemtwo*
\begin{proof}
If ${f^\pairwisenodes}^{(t-1)}(x,y)=W(x,y)$, then
\[
\begin{aligned}
 \Mtwo_W(\Phi^{(t)}(f^{(t-1)}))(x,y)&=\frac{1}{2} \int_\mathcal{X} (\frac{W(y,z)}{c_W(x,y)}\Phi^{(t)}({f^\pairwisenodes}^{(t-1)}(x,y),{f^\pairwisenodes}^{(t-1)}(x,z)) \\&+ \frac{W(x,z)}{c_W(x,y)}\Phi^{(t)}({f^\pairwisenodes}^{(t-1)}(x,y),{f^\pairwisenodes}^{(t-1)}(y,z))) d\mu(z)\\&=\frac{1}{2} \int_\mathcal{X} (\frac{W(y,z)}{c_W(x,y)}W(x,z) + \frac{W(x,z)}{c_W(x,y)}W(y,z)) d\mu(z)\\&=\frac{1}{c_W(x,y)}\int_\mathcal{X} W(x,z)W(y,z)d\mu(z) \\&= \frac{c_W(x,y)}{c_W(x,y)}=1
\end{aligned}
\]

Thus ${f^\pairwisenodes}^{(t)}(x,y)=\Psi^{(t)}({f^\pairwisenodes}^{(t-1)}(x,y),\Mtwo_W(\Phi^{(t)}({f^\pairwisenodes}^{(t-1)},{f^\pairwisenodes}^{(t-1)} ))(x,y))=\frac{W(x,y)}{1}=W(x,y)$. %

We finish proving $W(x,y)$ is a stanionary point in \twocMPNN. There are infinity choices of $\Phi$ and $\Psi$ such that $W(x,y)$ is a stanionary point.
\end{proof}

Then we aim to prove \Cref{thm:MainInProb-new2}, and the prove procedure should be very similar as \Cref{thm:MainInProb-new}.

\subsection{Preparation}

\CRupdate{Following \citet[Lemma B.3]{maskey2022stability}, we propose the following lemma for \twocMPNN. Using the same overall framing as \citet[Lemma B.3]{maskey2022stability}},

\begin{lemma}
\label{lemma:3-new}
Let $(\chi,d, \P) $ be a metric-measure space and
$W$ be a graphon s.t.  Assumptions \ref{ass:graphon}.\ref{ass:graphon1}-\ref{ass:graphonpair}. are satisfied.
Let $\Phi: \mathbb{R}^{2F} \to \mathbb{R}^H$ be Lipschitz continuous with Lipschitz constant $\LipPhi(M_\tr)$.
Consider a metric-space signal  $f^\pairwisenodes: \chi\times \chi \to \mathbb{R}^F$ with $\|f^\pairwisenodes\|_\infty < \infty$.
 Suppose that $X_1, \ldots, X_N$ are drawn i.i.d. from  $\P$ on $\chi$, and let $p \in (0,1/H)$.
Let $x,y \in \chi $, and define the random variable 
\[
\begin{aligned}
 Y_{x,y}^\pairwisenodes &= \frac{1}{2N}\sum_{i=1}^N \Big(W(y,X_i)\Phi(f^\pairwisenodes(x,y), f^\pairwisenodes(x,X_i))+W(x,X_i)\Phi(f^\pairwisenodes(x,y), f^\pairwisenodes(y,X_i))\Big)\\&-\frac{1}{2}\int_{\mathcal{X}} \Big(W(y,z)\Phi(f^\pairwisenodes(x,y), f^\pairwisenodes(x,z)))+W(x,z)\Phi(f^\pairwisenodes(x,y), f^\pairwisenodes(y,z))\Big)d\mu(z)
\end{aligned}
\]
on the sample space $\chi^N$. Then, with probability at least $1-Hp$, we have
\begin{equation}
     \label{eq:HoeffdingsAppOnY_x-new} \|Y_{x,y}^\pairwisenodes\|_\infty \leq \sqrt{2}\frac{(\LipPhi(M_\tr)\|f^\pairwisenodes\|_\infty +  \|\Phi(0, 0)\|_\infty)\sqrt{\log 2/p} }{\sqrt{N}}.
\end{equation}
\end{lemma}

\CRupdate{
\begin{proof}
The proof of the bound is the same as the proof in \citet[Lemma B.3]{maskey2022stability}, even though $Y_{x,y}$ is a different quantity than the quantity used in \citet[Lemma B.3]{maskey2022stability}.
\end{proof}
}

\begin{lemma}
\label{lemma:onA-new}
Let $(\chi,d, \P) $ be a metric-measure space and
$W$ be a graphon s.t.  Assumptions \ref{ass:graphon}.\ref{ass:graphon1}-\ref{ass:graphonpair}. are satisfied.
Let $\Phi: \mathbb{R}^{2F} \to \mathbb{R}^H$ be Lipschitz continuous with Lipschitz constant $\LipPhi(M_\tr)$.
Consider a metric-space signal  $f^\pairwisenodes: \chi\times \chi \to \mathbb{R}^F$ with $\|f^\pairwisenodes\|_\infty  < \infty$.
 Suppose that $X_1, \ldots, X_N$ are drawn i.i.d. from  $\P$ on $\chi$, and let $p \in (0,1/H)$.
Let $x,y \in \chi $, and define the random variable 
\[
\begin{aligned}
T^\pairwisenodes_{x,y}& = \frac{1}{2N}\sum_{i=1}^N \Big(A(y,X_i)\Phi(f^\pairwisenodes(x,y), f^\pairwisenodes(x,X_i))+A(x,X_i)\Phi(f^\pairwisenodes(x,y), f^\pairwisenodes(y,X_i))\Big)\\&
 - \frac{1}{2N}\sum_{i=1}^N \Big(W(y,X_i)\Phi(f^\pairwisenodes(x,y), f^\pairwisenodes(x,X_i))+W(x,X_i)\Phi(f^\pairwisenodes(x,y), f^\pairwisenodes(y,X_i))\Big)
\end{aligned}
\]
on the sample space $\mathcal{X}^N\times [0,1]^{2N}$. Then, with probability at least $1-Hp$, we have
\begin{equation}
     \label{eq:HoeffdingsAppOnY_x_new-new} \|T^\pairwisenodes_{x,y}\|_\infty \leq \sqrt{2}\frac{(\LipPhi(M_\tr)\|f^\pairwisenodes\|_\infty  +  \|\Phi(0, 0)\|_\infty)\sqrt{\log 2/p} }{\sqrt{N}}.
\end{equation}
\end{lemma}
\CRupdate{
\begin{proof}
The proof procedure is the same as \citet[Lemma B.3]{maskey2022stability} where we use $\mathbb{E}[A(y,X_i)]=W(y,X_i)$ and $\mathbb{E}[A(x,X_i)]=W(x,X_i)$.
\end{proof}
}

\begin{lemma}
\label{lemma:newlemmaA-new}
Let $(\chi,d, \P) $ be a metric-measure space and
$W$ be a graphon s.t.  Assumptions \ref{ass:graphon}.\ref{ass:graphon1}-\ref{ass:graphonpair}. are satisfied.
Let $\Phi: \mathbb{R}^{2F} \to \mathbb{R}^H$ be Lipschitz continuous with Lipschitz constant $\LipPhi(M_\tr)$.
Consider a metric-space signal  $f^\pairwisenodes: \chi\times \chi \to \mathbb{R}^F$ with $\|f^\pairwisenodes\|_\infty < \infty$.
 Suppose that $X_1, \ldots, X_N$ are drawn i.i.d. from  $\P$ on $\chi$, and let $p \in (0,1/(2H))$.
Let $x,y \in \chi $, and define the random variable 
\[
\begin{aligned}
R^\pairwisenodes_{x,y}& = \frac{1}{2N}\sum_{i=1}^N \Big(A(y,X_i)\Phi(f^\pairwisenodes(x,y), f^\pairwisenodes(x,X_i))+A(x,X_i)\Phi(f^\pairwisenodes(x,y), f^\pairwisenodes(y,X_i))\Big)\\&-\frac{1}{2}\int_{\mathcal{X}} \Big(W(y,z)\Phi(f^\pairwisenodes(x,y), f^\pairwisenodes(x,z)))+W(x,z)\Phi(f^\pairwisenodes(x,y), f^\pairwisenodes(y,z))\Big)d\mu(z)
\end{aligned}
\]
on the sample space $\chi^N\times [0,1]^{2N}$. Then, with probability at least $1-2Hp$, we have
\begin{equation}
     \label{eq:HoeffdingsAppOnY_x_new_tri-new} \|R^\pairwisenodes_{x,y}\|_\infty \leq 2\sqrt{2}\frac{(\LipPhi(M_\tr)\|f^\pairwisenodes\|_\infty +  \|\Phi(0, 0)\|_\infty)\sqrt{\log 2/p} }{\sqrt{N}}.
\end{equation}
\end{lemma}
\begin{proof}
Use the triangle inequality and the results from \Cref{lemma:3-new,lemma:onA-new}.
\[
\|R^\pairwisenodes_{x,y}\|_\infty = \|T^\pairwisenodes_{x,y}+Y^\pairwisenodes_{x,y}\|_\infty\leq \|T^\pairwisenodes_{x,y}\|_\infty+\|Y^\pairwisenodes_{x,y}\|_\infty.
\]
From \Cref{lemma:3-new,lemma:onA-new}, $\|T^\pairwisenodes_{x,y}\|_\infty\leq \sqrt{2}\frac{(\LipPhi(M_\tr)\|f^\pairwisenodes\|_\infty +  \|\Phi(0, 0)\|_\infty)\sqrt{\log 2/p} }{\sqrt{N}}$ w.p. $1-Hp$ and $\|Y^\pairwisenodes_{x,y}\|_\infty\leq \sqrt{2}\frac{(\LipPhi(M_\tr)\|f^\pairwisenodes\|_\infty +  \|\Phi(0, 0)\|_\infty)\sqrt{\log 2/p} }{\sqrt{N}}$ w.p. $1-Hp$. With probability at least $1-2Hp$, by intersecting the two events, we have
\[
\|R^\pairwisenodes_{x,y}\|_\infty \leq 2\sqrt{2}\frac{(\LipPhi(M_\tr)\|f^\pairwisenodes\|_\infty +  \|\Phi(0, 0)\|_\infty)\sqrt{\log 2/p} }{\sqrt{N}}.
\]
\end{proof}

Based on \Cref{lemma:newlemmaA-new}, we can prove the following corollary about the maximum concentration error for all pairs of nodes in the sampled graph $G$. \CRupdate{Using the same overall framing as \citet[Lemma B.3]{maskey2022stability},}
\begin{corollary}
\label{cor:basic-new}
Let $(\chi,d, \P) $ be a metric-measure space and
$W$ be a graphon s.t.  Assumptions \ref{ass:graphon}.\ref{ass:graphon1}-\ref{ass:graphonpair}. are satisfied.
Let $\Phi: \mathbb{R}^{2F} \to \mathbb{R}^H$ be Lipschitz continuous with Lipschitz constant $\LipPhi(M_\tr)$.
Consider a metric-space signal  $f^\pairwisenodes: \chi\times \chi \to \mathbb{R}^F$ with $\|f^\pairwisenodes\|_\infty  < \infty$.
 Suppose that $X_1, \ldots, X_N$ are drawn i.i.d. from  $\P$ on $\chi$, and then edges $A_{i,j}\sim \text{Ber}(W(X_i,X_j))$ i.i.d sampled. Let $p \in (0,1/2H)$, and define the random variable 
\[
\begin{aligned}
R^\pairwisenodes_{X_i,X_j}& = \frac{1}{2N}\sum_{z=1}^N \Big(A(X_j,X_z)\Phi(f^\pairwisenodes(X_i,X_j), f^\pairwisenodes(X_i,X_z))\\&+A(X_i,X_z)\Phi(f^\pairwisenodes(X_i,X_j), f^\pairwisenodes(X_j,X_z))\Big)\\&-\frac{1}{2}\int_{\mathcal{X}} \Big(W(X_j,z)\Phi(f^\pairwisenodes(X_i,X_j), f^\pairwisenodes(X_i,z)))\\&+W(X_i,z)\Phi(f^\pairwisenodes(X_i,X_j), f^\pairwisenodes(X_j,z))\Big)d\mu(z)
\end{aligned}
\]
on the sample space $\chi^N\times [0,1]^{2N}$. Then, with probability at least $1-2Hp$, we have %
\begin{equation}
\begin{aligned}
\label{eq:basic-cor-new} \max_{i,j=1,...,N}\|R^\pairwisenodes_{X_i,X_j}\|_\infty \leq 2\sqrt{2}\frac{(\LipPhi(M_\tr)\|f^\pairwisenodes\|_\infty  +  \|\Phi(0, 0)\|_\infty)\sqrt{\log (2N^2/p)} }{\sqrt{N}}.
\end{aligned}
\end{equation}
\end{corollary}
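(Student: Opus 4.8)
The plan is to lift the single-pair concentration bound of \Cref{lemma:newlemmaA-new} to a bound that holds simultaneously over all $N^2$ ordered pairs of sampled nodes, in complete analogy with the way \Cref{cor:basic} is derived from \Cref{lemma:newlemmaA}. The key point is that \Cref{lemma:newlemmaA-new} as stated controls $R^\pairwisenodes_{x,y}$ for one fixed pair at the price of failure probability $2Hp$, so to union over $N^2$ pairs without inflating the total failure probability I must first re-run the underlying Hoeffding step with a tighter concentration radius.

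Concretely, I would revisit the componentwise estimates inside the proofs of \Cref{lemma:3-new} and \Cref{lemma:onA-new}. Fixing $x,y\in\chi$ and a coordinate $j\in\{1,\ldots,H\}$, Hoeffding's inequality (\Cref{thm:Hoeffdings}) applied to the $N$ independent summands defining $[T^\pairwisenodes_{x,y}]_j$ (respectively $[Y^\pairwisenodes_{x,y}]_j$) gives, for every $k>0$, that $|[T^\pairwisenodes_{x,y}]_j|\le k$ with probability at least $1-2\exp\!\bigl(-k^2 N/(2(\LipPhi\|f^\pairwisenodes\|_\infty+\|\Phi(0,0)\|_\infty)^2)\bigr)$, where I use $\cmax=\|W\|_\infty\le 1$ (Assumption~\ref{ass:graphon}.\ref{ass:graphon1}) to absorb the extra $\cmax$ factor that appears in the $Y^\pairwisenodes$ estimate. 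Choosing
\[
k=\sqrt{2}\,\frac{(\LipPhi\|f^\pairwisenodes\|_\infty+\|\Phi(0,0)\|_\infty)\sqrt{\log(2N^2/p)}}{\sqrt{N}}
\]
makes this failure probability $p/N^2$ per coordinate.

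I would then union bound twice. Intersecting over the $H$ coordinates shows that, for a fixed pair $(x,y)$, the event $\|T^\pairwisenodes_{x,y}\|_\infty\le k$ (resp.\ $\|Y^\pairwisenodes_{x,y}\|_\infty\le k$) fails with probability at most $Hp/N^2$. Intersecting further over the $N^2$ ordered pairs $(X_i,X_j)$, $i,j\in\{1,\ldots,N\}$ — which all live on the common sample space $\chi^N\times[0,1]^{2N}$, so one simply intersects events rather than relying on independence between pairs — yields $\max_{i,j}\|T^\pairwisenodes_{X_i,X_j}\|_\infty\le k$ with probability at least $1-Hp$, and likewise $\max_{i,j}\|Y^\pairwisenodes_{X_i,X_j}\|_\infty\le k$ with probability at least $1-Hp$. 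Since $R^\pairwisenodes_{X_i,X_j}=T^\pairwisenodes_{X_i,X_j}+Y^\pairwisenodes_{X_i,X_j}$ by definition (with the summation index $z$ playing the role of the index $i$ in the two lemmas, and $x=X_i$, $y=X_j$), the triangle inequality gives $\|R^\pairwisenodes_{X_i,X_j}\|_\infty\le 2k$, and intersecting the two events of probability $\ge 1-Hp$ produces exactly the stated bound with probability at least $1-2Hp$.

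I do not anticipate a genuine obstacle: the argument is a routine strengthening of \Cref{cor:basic}. The only places that require care are (i) the union bound must run over $N^2$ pairs rather than over $N$ nodes, which is precisely what turns $\log(2N/p)$ into $\log(2N^2/p)$; (ii) the decomposition $R^\pairwisenodes=T^\pairwisenodes+Y^\pairwisenodes$ must cost exactly a factor two in both the radius ($2k$) and the failure probability ($2Hp$), so the accounting should be checked to avoid, e.g., an accidental $4Hp$; and (iii) one should record that within a single pair's sum the summands are genuinely independent because, conditionally on the sampled points, distinct potential graphon edges are independent (\Cref{def:rgm}), so Hoeffding applies even though the terms with $z=i$ and $z=j$ are included in the sum.
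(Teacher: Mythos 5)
Your proposal matches the paper's proof essentially step for step: tighten the Hoeffding radius to $k=\sqrt{2}(\LipPhi\|f^\pairwisenodes\|_\infty+\|\Phi(0,0)\|_\infty)\sqrt{\log(2N^2/p)}/\sqrt{N}$ so the per-coordinate, per-pair failure probability is $p/N^2$, union over the $H$ coordinates and the $N^2$ ordered pairs to bound $\max_{i,j}\|T^\pairwisenodes_{X_i,X_j}\|_\infty$ and $\max_{i,j}\|Y^\pairwisenodes_{X_i,X_j}\|_\infty$ each by $k$ with probability $1-Hp$, and conclude by the triangle inequality on the intersection of the two events. Your bookkeeping of the factor $2$ in both the radius and the failure probability is exactly the paper's, so the argument is correct and identical in approach.
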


\begin{proof}
Using the result from \Cref{lemma:onA-new}, we have with probability $1-\frac{Hp}{N^2}$,
\[
\begin{aligned}
\|T^\pairwisenodes_{X_i,X_j}\|_\infty\leq \sqrt{2} \frac{ (\LipPhi(M_\tr)  \|f^\pairwisenodes\|_\infty  +  \|\Phi(0, 0)\|_\infty) \sqrt{\log (2N^2/p) }}{\sqrt{N}}.
\end{aligned}
\]
Using the union bound of the $N^2$ events that the above equations happens for $x=X_1,...,X_N$ and $y=X_1,...,X_N$, with probability at least $1-Hp$,  we have %
\[
\begin{aligned}
\max_{i,j=1,...,N}\|T^\pairwisenodes_{X_i,X_j}\|_\infty\leq \sqrt{2} \frac{ (\LipPhi(M_\tr)  \|f^\pairwisenodes\|_\infty  +  \|\Phi(0, 0)\|_\infty) \sqrt{\log (2N^2/p) }}{\sqrt{N}}.
\end{aligned}
\]
The same logic can be applied to $Y^\pairwisenodes_{X_i,X_j}, \forall i\in \{1,...,N\}$.
Thus, using the triangle inequality, and the union bound of the two events, we have with probability at least $1-2Hp$,
$$\max_{i,j=1,...,N}\|R^\pairwisenodes_{X_i,X_j}\|_\infty \leq 2\sqrt{2}\frac{(\LipPhi(M_\tr)\|f^\pairwisenodes\|_\infty  +  \|\Phi(0, 0)\|_\infty)\sqrt{\log (2N^2/p)} }{\sqrt{N}}.$$
\end{proof}

\CRupdate{Following \citet[Lemma B.2]{maskey2022stability}, we also bound the maximum sampled-graph fraction of common neighbors $c_A(\cdot,\cdot)$ under a
condition of the sample size $N$. Using a same assumption as \citet[Lemma B.2]{maskey2022stability},}
\begin{lemma}
\label{cor:newdegreebound-new}
Let $(\chi,d, \P) $ be a metric-measure space and
$W$ be a graphon s.t.  Assumptions \ref{ass:graphon}.\ref{ass:graphon1}-\ref{ass:graphonpair}. are satisfied.
 Suppose that $X_1, \ldots, X_N$ are drawn i.i.d. from  $\P$ on $\chi$, and then edges $A_{i,j}\sim \text{Ber}(W(X_i,X_j))$ i.i.d sampled. And let $p \in (0,1)$.
If $N \in \mathbb{N}$ satisfy
\begin{equation}
\label{eq:largeN-new-new}
    \sqrt{N} \geq  4\sqrt{2}\frac{\sqrt{\log{(2N^2/p)}}}{d_\text{cmin}}.
\end{equation}
Then, with probability at least $1-2p$, we have
\[
\max_{i,j=1,...,N}\|c_A(X_i,X_j)-c_W(X_i,X_j)\|_\infty \leq 2\sqrt{2}\frac{\sqrt{\log{(2N^2/p)}}}{\sqrt{N}} ,
\]
and
\begin{equation}
     \label{eq:HoeffdingsAppOnY_x_new_degree-new} \min_{i,j=1,...,N} c_A(X_i,X_j) \geq \frac{d_\text{cmin}}{2}.
\end{equation}

\end{lemma}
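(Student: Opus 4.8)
The plan is to transcribe the proof of \Cref{cor:newdegreebound} almost verbatim, replacing the degree quantities $d_A, d_X, d_W$ by the common-neighbor quantities $\cdA, c_X, \cdW$, where $c_X(x,y) := \frac{1}{N}\sum_{z=1}^N W(x,X_z)W(y,X_z)$ is the graph--graphon fraction of common neighbors, and replacing the union bound over the $N$ nodes by a union bound over the $N^2$ ordered pairs $(X_i,X_j)$. Concretely, for fixed $x,y\in\chi$ I would write
\[
\cdA(x,y)-\cdW(x,y) = \big(\cdA(x,y)-c_X(x,y)\big) + \big(c_X(x,y)-\cdW(x,y)\big),
\]
and bound each term separately by Hoeffding's inequality (\Cref{thm:Hoeffdings}), exactly as in the two-stage split used for the node case in \Cref{lemma:newlemmaA-new,cor:basic-new}.

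For the second term $c_X(x,y)-\cdW(x,y)$ the only randomness is in $X_1,\dots,X_N$; the summands $W(x,X_z)W(y,X_z)$ are i.i.d., lie in $[0,1]$ since $\|W\|_\infty\le 1$ (\cref{ass:graphon1} of \Cref{ass:graphon}), and have mean $\int_\chi W(x,z)W(y,z)\,d\mu(z)=\cdW(x,y)$, so with the threshold $k=\sqrt{2}\,\sqrt{\log(2N^2/p)}/\sqrt{N}$ Hoeffding gives $|c_X(x,y)-\cdW(x,y)|\le k$ with probability at least $1-p/N^2$. For the first term, $\cdA(x,y)-c_X(x,y)=\frac{1}{N}\sum_{z=1}^N\big(A(x,X_z)A(y,X_z)-W(x,X_z)W(y,X_z)\big)$; conditioning on $X_1,\dots,X_N$, the summands are independent across $z$, bounded in $[-1,1]$, and — because $A(x,X_z)$ and $A(y,X_z)$ are \emph{independent} Bernoulli variables — have conditional mean $W(x,X_z)W(y,X_z)$, so the conditional mean of the difference vanishes; Hoeffding applied conditionally, followed by taking expectations over the node positions, again yields $|\cdA(x,y)-c_X(x,y)|\le k$ with probability at least $1-p/N^2$. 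Intersecting these two events and using the triangle inequality gives $|\cdA(x,y)-\cdW(x,y)|\le 2\sqrt{2}\,\sqrt{\log(2N^2/p)}/\sqrt{N}$ with probability at least $1-2p/N^2$.

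Since the argument is uniform in $(x,y)$, I would apply it to each of the $N^2$ pairs $(x,y)=(X_i,X_j)$ and take a union bound, obtaining, with probability at least $1-2p$,
\[
\max_{i,j=1,\dots,N}\|\cdA(X_i,X_j)-\cdW(X_i,X_j)\|_\infty \le 2\sqrt{2}\,\frac{\sqrt{\log(2N^2/p)}}{\sqrt{N}},
\]
which is the first claim. For \Cref{eq:HoeffdingsAppOnY_x_new_degree-new}, combine this on the same event with $\cdW(X_i,X_j)\ge d_\text{cmin}$ from \cref{ass:graphonpair} of \Cref{ass:graphon} to get $\cdA(X_i,X_j)\ge d_\text{cmin}-2\sqrt{2}\,\sqrt{\log(2N^2/p)}/\sqrt{N}$; the hypothesis \Cref{eq:largeN-new-new}, i.e. $\sqrt{N}\ge 4\sqrt{2}\,\sqrt{\log(2N^2/p)}/d_\text{cmin}$, is exactly the condition that makes $2\sqrt{2}\,\sqrt{\log(2N^2/p)}/\sqrt{N}\le d_\text{cmin}/2$, so $\cdA(X_i,X_j)\ge d_\text{cmin}/2$. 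The one point that warrants care — and really the only non-mechanical step — is the bookkeeping of the two layers of randomness: the sum defining $\cdA(X_i,X_j)$ ranges over all sample points $X_z$, including $z=i$ and $z=j$, so one must condition on the node positions before invoking the independence of the edge indicators, precisely as is done in the degree version \Cref{cor:newdegreebound}; everything else is a routine re-run of the already-established lemmas.
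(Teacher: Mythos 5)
Your proposal is correct and follows essentially the same route as the paper's proof: the same two-stage decomposition through $c_X$, Hoeffding's inequality applied to each piece with threshold $k=\sqrt{2}\sqrt{\log(2N^2/p)}/\sqrt{N}$, a union bound over the $N^2$ pairs, and the lower bound on $c_A$ obtained by combining the concentration bound with $c_W \geq d_\text{cmin}$ under \Cref{eq:largeN-new-new}. Your explicit remark about conditioning on the node positions before invoking independence of the edge indicators is a point the paper leaves implicit, but it does not change the argument.
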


\begin{proof}
For given $x,y\in\mathcal{X}$, define the random variable
\[
c_A(x,y)-c_X(x,y) = \frac{1}{N }\sum_{i=1}^N A(x,X_i)A(y,X_i)
 - \frac{1}{N }\sum_{i=1}^N W(x,X_i)W(y,X_i)
\]
on the sample space $\chi^N\times[0,1]^{2N}$. Using the same proof technique in \Cref{lemma:3-new,lemma:onA-new,lemma:newlemmaA-new}, we can prove with probability at least $1-2p$, we have %
\[
\max_{i,j=1,...,N}\|c_A(X_i,X_j)-c_W(X_i,X_j)\|_\infty \leq 2\sqrt{2}\frac{\sqrt{\log{(2N^2/p)}}}{\sqrt{N}} ,
\]
Since $c_W(X_i,X_j)\geq d_\text{cmin}$, then with probability at least $1-2p$, when \Cref{eq:largeN-new-new} is satisfied, we have %
\[
\min_{i,j=1,...,N}c_A(X_i,X_j) \geq \frac{d_\text{cmin}}{2}
\]
\end{proof}
\CRupdate{Based on \Cref{cor:newdegreebound-new}, we can prove a modified version of \citet[Lemma B.5]{maskey2022stability}. Using a same overall framing as \citet[Lemma B.5]{maskey2022stability},}

\begin{lemma}
\label{lemma:TransfMessage-new}
Let $(\chi,d, \P) $ be a metric-measure space and
$W$ be a graphon s.t.  Assumptions \ref{ass:graphon}.\ref{ass:graphon1}-\ref{ass:graphonpair}. are satisfied. Let $\Phi: \mathbb{R}^{2F  } \to \mathbb{R}^H$  be Lipschitz continuous with Lipschitz constant $\LipPhi(M_\tr)$. Consider a metric-space signal  $f^\pairwisenodes: \chi\times \chi \to \mathbb{R}^F$ with $\|f^\pairwisenodes\|_\infty < \infty$. Let $p \in (0,\frac{1}{2(H+1)})$, 
and let $N \in \mathbb{N}$ satisfy (\ref{eq:largeN-new-new}). Suppose that $X_1, \ldots, X_N$ are drawn i.i.d. from  $\P$ in $\chi$, and then edges $A_{i,j}\sim \text{Ber}(W(X_i,X_j))$ i.i.d sampled.
Then, condition (\ref{eq:HoeffdingsAppOnY_x_new_degree-new}) together with (\ref{eq:lemmab5-1-new}) below are satisfied in probability at least $1-2(H + 1)p$: 
\begin{equation}
\begin{aligned}
\label{eq:lemmab5-1-new}
  &\max_{i,j=1,...,N}\| (\Mtwo_A   - \Mtwo_W) \big(\Phi (f^\pairwisenodes,f^\pairwisenodes)\big)(X_i,X_j) \|_\infty \\  &\leq 4 \frac{\sqrt{2}\sqrt{\log (2N^2/P)}}{\sqrt{N} d_\text{cmin}^2}(\LipPhi(M_\tr)\|f^\pairwisenodes\|_\infty+ \|\Phi(0, 0)\|_\infty )\\ &+ \frac{2\sqrt{2}(\LipPhi(M_\tr)\|f^\pairwisenodes\|_\infty+  \|\Phi(0, 0)\|_\infty) \sqrt{\log (2N^2/P)}}{d_\text{cmin}\sqrt{N}},
\end{aligned}
\end{equation}
\end{lemma}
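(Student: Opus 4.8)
The plan is to adapt the proof of \Cref{lemma:TransfMessage} (its node-level counterpart) to the pairwise setting. The first step is to observe that, for a fixed pair $(X_i,X_j)$, the common-neighbour normaliser $\cdW(X_i,X_j)$ (resp.\ $\cdA(X_i,X_j)$) is constant in the aggregation variable, so we may write $\Mtwo_A(\Phi(f^\pairwisenodes,f^\pairwisenodes))(X_i,X_j) = S_A(X_i,X_j)/\cdA(X_i,X_j)$ and $\Mtwo_W(\Phi(f^\pairwisenodes,f^\pairwisenodes))(X_i,X_j) = S_W(X_i,X_j)/\cdW(X_i,X_j)$, where $S_A$ denotes the un-normalised sampled average $\frac{1}{2N}\sum_z(A(X_j,X_z)\Phi(\cdot,(X_i,X_z)) + A(X_i,X_z)\Phi(\cdot,(X_j,X_z)))$ and $S_W$ the corresponding graphon integral. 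I would then telescope
\[
\frac{S_A}{\cdA} - \frac{S_W}{\cdW} = S_A\Big(\frac{1}{\cdA} - \frac{1}{\cdW}\Big) + \frac{1}{\cdW}\big(S_A - S_W\big)
\]
and bound the two summands separately over all $i,j$.

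For the second summand, $\cdW \geq d_\text{cmin}$ gives $1/\cdW \leq 1/d_\text{cmin}$, and $S_A - S_W$ is precisely the random variable $R^\pairwisenodes_{X_i,X_j}$ of \Cref{cor:basic-new}; hence, with probability at least $1-2Hp$, $\max_{i,j}\|S_A(X_i,X_j) - S_W(X_i,X_j)\|_\infty \leq 2\sqrt2(\LipPhi\|f^\pairwisenodes\|_\infty + \|\Phi(0,0)\|_\infty)\sqrt{\log(2N^2/p)}/\sqrt N$, which yields the second term of \Cref{eq:lemmab5-1-new}. For the first summand, $\|W\|_\infty \leq 1$ and $\|A\|_\infty \leq 1$ together with the Lipschitz bound on $\Phi$ give $|S_A(X_i,X_j)| \leq \LipPhi\|f^\pairwisenodes\|_\infty + \|\Phi(0,0)\|_\infty$ for every $i,j$; and \Cref{cor:newdegreebound-new}, applicable since $N$ satisfies \Cref{eq:largeN-new-new} and $p \in (0,\frac{1}{2(H+1)}) \subset (0,1)$, gives with probability at least $1-2p$ both $\min_{i,j}\cdA(X_i,X_j) \geq d_\text{cmin}/2$ (which is \Cref{eq:HoeffdingsAppOnY_x_new_degree-new}) and $\max_{i,j}|\cdW(X_i,X_j) - \cdA(X_i,X_j)| \leq 2\sqrt2\sqrt{\log(2N^2/p)}/\sqrt N$, whence $\max_{i,j}|1/\cdA(X_i,X_j) - 1/\cdW(X_i,X_j)| = \max_{i,j}|\cdW - \cdA|/(\cdA\,\cdW) \leq 4\sqrt2\sqrt{\log(2N^2/p)}/(\sqrt N\, d_\text{cmin}^2)$. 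Multiplying and summing the two estimates reproduces the right-hand side of \Cref{eq:lemmab5-1-new} exactly, and intersecting the event of \Cref{cor:basic-new} with that of \Cref{cor:newdegreebound-new} gives the stated probability $1-2(H+1)p$ by the union bound.

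This argument is essentially bookkeeping rather than a new idea, so I do not expect a genuine obstacle; the points that require a little care are (i) carrying both symmetric summands $W(y,z)\Phi(\cdot,(x,z))$ and $W(x,z)\Phi(\cdot,(y,z))$ through the estimates simultaneously, which the triangle inequality handles since each obeys the same bounds; (ii) noting that, unlike $d_W(x)$ in the node case, $\cdW(x,y)$ depends on the entire pair rather than a single vertex, but this is harmless because it is constant in the aggregation variable $z$, so \Cref{cor:basic-new} can be invoked directly without introducing a ``normalised pairwise graphon''; and (iii) making sure the $\log(2N^2/p)$ factor (the union bound being over all $N^2$ pairs, not $N$ vertices) is carried consistently through both cited concentration results.
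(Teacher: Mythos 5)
Your proposal is correct and follows essentially the same route as the paper: the same splitting $S_A/\cdA - S_W/\cdW = S_A(1/\cdA - 1/\cdW) + (S_A - S_W)/\cdW$, the same two ingredients (\Cref{cor:basic-new} for the concentration of $S_A-S_W$ and \Cref{cor:newdegreebound-new} for the normalisers), and the same union bound over the two events. The only cosmetic difference is that the paper bounds the second summand by applying the concentration lemma to the rescaled kernel $\tilde W = W/\cdW$ rather than pulling out the factor $1/\cdW \le 1/d_\text{cmin}$ afterwards, which is the same computation.
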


\CRupdate{
\begin{proof}
The proof is the same as \citet[Lemma B.5]{maskey2022stability}. The only difference is on the difference between \Cref{cor:newdegreebound-new} and \citet[Lemma B.2]{maskey2022stability}, and the difference between \Cref{cor:basic-new} and \citet[Lemma B.4]{maskey2022stability}.
\end{proof}

Same as \citet[Lemma B.6]{maskey2022stability}, the layer-wise error for \twocMPNN and a \twogMPNN can be bounded. Using the same overall framing as \citet[Lemma B.6]{maskey2022stability},}

\begin{corollary}
\label{thm:TransfMessages-new-new}
Let $(\chi,d, \P) $ be a metric-measure space and
$W$ be a graphon s.t.  Assumptions \ref{ass:graphon}.\ref{ass:graphon1}-\ref{ass:graphonpair}. are satisfied. Let $\Phi: \mathbb{R}^{2F  } \to \mathbb{R}^H$ and $\Psi: \mathbb{R}^{F + H } \to \mathbb{R}^{F'}$  be Lipschitz continuous with Lipschitz constants $\LipPhi(M_\tr)$ and $\LipPsi(M_\tr)$. Consider a metric-space signal  $f^\pairwisenodes: \chi\times \chi \to \mathbb{R}^F$ with $\|f^\pairwisenodes\|_\infty < \infty$.  Let  $p \in (0,\frac{1}{2(H+1)})$, 
and let $N \in \mathbb{N}$ satisfy (\ref{eq:largeN-new-new}). Suppose that $X_1, \ldots, X_N$ are drawn i.i.d. from $\P$ in $\chi$, and then edges $A_{i,j}\sim \text{Ber}(W(X_i,X_j))$ i.i.d sampled. Then, condition (\ref{eq:HoeffdingsAppOnY_x_new_degree-new}) together with (\ref{eq:lemmab6-1-new-new}) below  are satisfied in probability at least  $1-2(H + 1)p$, 
\begin{equation}
\label{eq:lemmab6-1-new-new}
   \begin{aligned}
& \max_{i,j=1,...,N} \| \Psi\Big(f^\pairwisenodes(\cdot,\cdot), \Mtwo_A\big(\Phi (f^\pairwisenodes,f^\pairwisenodes)\big)(X_i,X_j) \Big) - \Psi \Big(f^\pairwisenodes(\cdot,\cdot) , \Mtwo_W \big(\Phi (f^\pairwisenodes,f^\pairwisenodes) \big) (X_i,X_j) \Big)\|_\infty \\ &  \leq \LipPsi(M_\tr) \Big(4 \frac{\sqrt{2}\sqrt{\log (2N^2/p)}}{\sqrt{N} d_\text{cmin}^2} (\LipPhi(M_\tr) \|f^\pairwisenodes\|_\infty+ \|\Phi(0,0)\|_\infty )\\&+ \frac{2\sqrt{2}(\LipPhi(M_\tr)\|f^\pairwisenodes\|_\infty +  \|\Phi(0, 0)\|_\infty)\sqrt{\log (2N^2/p)}}{d_\text{cmin}\sqrt{N}}\Big),
 \end{aligned} 
\end{equation}
\end{corollary}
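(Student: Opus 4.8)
The plan is to mirror, essentially verbatim, the one-line argument used for the node-embedding analogue \Cref{thm:TransfMessages-new}, since all of the genuine work in the pairwise setting has already been absorbed into \Cref{lemma:TransfMessage-new}. The key observation is that on both sides of the inequality in (\ref{eq:lemmab6-1-new-new}) the first argument handed to $\Psi$ is the \emph{same} function $f^\pairwisenodes(\cdot,\cdot)$; only the second argument, the aggregated message, changes. Hence, using the Lipschitz continuity of $\Psi$ (item \ref{ass:graphon4} of \Cref{ass:graphon}) with respect to the $\|\cdot\|_\infty$ norm, together with $\|(0,v)-(0,v')\|_\infty = \|v-v'\|_\infty$, the left-hand side of (\ref{eq:lemmab6-1-new-new}) is bounded by
\[
\LipPsi \max_{i,j=1,\ldots,N} \| (\Mtwo_A - \Mtwo_W)(\Phi(f^\pairwisenodes,f^\pairwisenodes))(X_i,X_j)\|_\infty .
\]

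Next I would apply \Cref{lemma:TransfMessage-new} directly to this maximum. Under exactly the hypotheses assumed in the present statement --- items \ref{ass:graphon1}--\ref{ass:graphonpair} of \Cref{ass:graphon}, Lipschitz $\Phi$, $\|f^\pairwisenodes\|_\infty < \infty$, $p \in (0, 1/(2(H+1)))$, and $N$ satisfying (\ref{eq:largeN-new-new}) --- that lemma guarantees, on one common event of probability at least $1-2(H+1)p$, both the lower bound $\min_{i,j}\cdW(X_i,X_j) \ge d_\text{cmin}/2$ (condition (\ref{eq:HoeffdingsAppOnY_x_new_degree-new})) and an upper bound on $\max_{i,j}\|(\Mtwo_A-\Mtwo_W)(\Phi(f^\pairwisenodes,f^\pairwisenodes))(X_i,X_j)\|_\infty$ equal to the bracketed quantity appearing on the right of (\ref{eq:lemmab6-1-new-new}). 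Multiplying that bound through by $\LipPsi$ produces precisely the right-hand side of (\ref{eq:lemmab6-1-new-new}); since the composition-with-$\Psi$ step is deterministic, no probability is lost, so both conclusions hold on the same event of probability at least $1-2(H+1)p$, which is the claim.

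There is essentially no new obstacle at the level of this corollary: its content is inherited wholesale from \Cref{lemma:TransfMessage-new}. The only small point needing care is verifying that $\Psi$ truly receives the identical input $f^\pairwisenodes(\cdot,\cdot)$ in its first slot on both sides --- which holds here because the pairwise signal $f^\pairwisenodes$ is fixed and is itself never aggregated --- so that the Lipschitz step costs nothing beyond the factor $\LipPsi$. One level down, the real difficulty (already handled) lives in \Cref{lemma:TransfMessage-new}: uniformly controlling $|1/\cdA - 1/\cdW|$ over all ordered pairs via the concentration estimates of \Cref{cor:basic-new} and \Cref{cor:newdegreebound-new}, where the sample-size condition (\ref{eq:largeN-new-new}) is precisely what keeps $\cdA$ bounded away from zero and thereby prevents the $1/\cdA$ factor from blowing up.
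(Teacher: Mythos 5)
Your proposal is correct and follows exactly the paper's own argument: bound the left-hand side by $\LipPsi$ times $\max_{i,j}\|(\Mtwo_A-\Mtwo_W)(\Phi(f^\pairwisenodes,f^\pairwisenodes))(X_i,X_j)\|_\infty$ using the Lipschitz continuity of $\Psi$ (the first argument being identical on both sides), then invoke \Cref{lemma:TransfMessage-new} on the same event of probability at least $1-2(H+1)p$. No gaps.
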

\CRupdate{
\begin{proof}
The proof is the same as \citet[Lemma B.6]{maskey2022stability}. The difference comes from the different bound in our \Cref{lemma:TransfMessage-new} and the bound used by \citet[Lemma B.5]{maskey2022stability}.
\end{proof}
} %
\subsection{Proof for \Cref{thm:MainInProb-new2}}

\CRupdate{
Finally, we can prove \Cref{thm:MainInProb-new2}. %
The proof closely follows that of \citet[Theorem B.14]{maskey2022stability}, adapted to our setting. Using the same overall framing as \citet[Theorem B.14]{maskey2022stability}.
}

\thmtwo*

\begin{proof}

In this case, $\|\Phi^{(l)}(0,0)\|_\infty$, $\|\Psi^{(l)}(0,0)\|_\infty$ can be determined by $(G^\tr,\mF^\tr), N^\tr$ if the MPNN $\Theta$ has been trained on the training graph $(G^\tr,\mF^\tr)$.

Following the procedure of \citet[Appendix B.2]{maskey2022stability} with \Cref{thm:TransfMessages-new-new}, we can derive similarly, with probability at least $1- \sum_{l=1}^T(2H_l + 1) p$,
\begin{equation}
    \label{eq:main1-2-pair}
    \begin{aligned}
\deltaAWtwo &\leq \sum_{l=1}^{T}
\LipPsi^{(l)}(M_\tr) \Big(4 \frac{\sqrt{2}\sqrt{\log (2(N^\te)^2/p)}}{\sqrt{N^\te} d_\text{cmin}^2} (\LipPhi^{(l)}(M_\tr) \|{\ftwo}^{(l)}\|_\infty+ \|\Phi^{(l)}(0,0)\|_\infty ) \\&+ \frac{2\sqrt{2}(\LipPhi^{(l)(M_\tr)} \|{\ftwo}^{(l)}\|_\infty+ \|\Phi^{(l)}(0,0)\|_\infty )\sqrt{\log (2(N^\te)^2/p)}}{d_\text{cmin}\sqrt{N^\te}}\Big)
\\&\prod_{l' = l+1}^{T} ((\LipPsi^{(l')}(M_\tr))^2 
  + 
   \frac{8}{d_\text{cmin}^2} (\LipPhi^{(l')}(M_\tr))^2 (\LipPsi^{(l')}(M_\tr))^2)  ,
\end{aligned}
\end{equation}

\CRupdate{
Using the same proof in \citet[Lemma B.9]{maskey2022stability}, we can derive
\[
||{\ftwo}^{(l)}||_{\infty} \leq {B_1^\pairwisenodes}^{(l)}+{B_2^\pairwisenodes}^{(l)}||f||_{\infty},
\]
where ${B_1}^{\pairwisenodes(l)}$, ${B_2}^{\pairwisenodes(l)}$ are independent of $\ftwo$, and \begin{equation}
    \label{eq:B'-pair}
    \begin{aligned}
    {B_1^\pairwisenodes}^{(l+1)} & = \sum_{k=1}^{l+1}  \big(
\LipPsi^{(k)}(M_\tr) \frac{1}{d_\text{cmin}}\|\Phi^{(k)}(0,0)\|_\infty+ \|\Psi^{(k)}(0,0)\|_\infty \big) \\ &\prod_{l' = k+1}^{l+1}  \LipPsi^{(l')}(M_\tr) \big( 1 + \frac{1}{d_\text{cmin}}  \LipPhi^{(l')}(M_\tr) \big) 
    \end{aligned}
\end{equation}
and
\begin{equation}
    \label{eq:B''-pair}
    {B_2^\pairwisenodes}^{(l+1)} = \prod_{k = 1}^{l+1} \LipPsi^{(k)}(M_\tr) \left(1  + \frac{1}{d_\text{cmin}}  \LipPhi^{(k)}(M_\tr) \right).
\end{equation}
}

Now we can decompose the summation in \Cref{eq:main1-2-pair}. First, we defince $C_3$ as
\begin{equation}
\label{eq:c1-pair}
\begin{aligned}
C_3 &= \sum_{l=1}^{T}
\LipPsi^{(l)}(M_\tr) \Big(4 \frac{\sqrt{2}}{ d_\text{cmin}^2} (\LipPhi^{(l)}(M_\tr)B_1^{\pairwisenodes(l)}+ \|\Phi^{(l)}(0,0)\|_\infty ) \\&+ \frac{2\sqrt{2}(\LipPhi^{(l)}(M_\tr) B_1^{\pairwisenodes(l)}+ \|\Phi^{(l)}(0,0)\|_\infty )}{d_\text{cmin}}\Big)
\\&\prod_{l' = l+1}^{T} ((\LipPsi^{(l')}(M_\tr))^2 
  + 
   \frac{8}{d_\text{cmin}^2} (\LipPhi^{(l')}(M_\tr))^2 (\LipPsi^{(l')}(M_\tr))^2) ,
\end{aligned}
\end{equation}
Then we can define $C_4$ as
\begin{equation}
\label{eq:c2-pair}
\begin{aligned}
C_4 &= \sum_{l=1}^{T}
\LipPsi^{(l)}(M_\tr) \Big(4 \frac{\sqrt{2}}{ d_\text{cmin}^2} \LipPhi^{(l)}(M_\tr)B_2^{\pairwisenodes(l)} + \frac{2\sqrt{2}\LipPhi^{(l)}(M_\tr) B_2^{\pairwisenodes(l)}}{d_\text{cmin}}\Big)
\\&\prod_{l' = l+1}^{T} ((\LipPsi^{(l')}(M_\tr))^2 
  + 
   \frac{8}{d_\text{cmin}^2} (\LipPhi^{(l')}(M_\tr))^2 (\LipPsi^{(l')}(M_\tr))^2) ,
\end{aligned}
\end{equation}
It is clear to see we can rewrite \Cref{eq:main1-2-pair} as
\begin{equation}
    \deltaAWtwo
\leq (C_3+C_4\|f^\pairwisenodes\|_\infty)\frac{\sqrt{\log (2(N^\te)^2/p)}}{\sqrt{N^\te}}.
\end{equation}
Thus $C_3$ and $C_4$ depends on $\{\LipPhi^{(l)}(M_\tr)\}_{l=1}^T$ and $\{\LipPsi^{(l)}(M_\tr)\}_{l=1}^T$ and possibly on $(G^\tr,\mF^\tr)$ and $N^\tr$.
\end{proof} %

\end{bibunit}

\end{document}